\newcommand{\arxiv}[1]{\iftoggle{icml}{}{#1}}
\newcommand{\icml}[1]{\iftoggle{icml}{#1}{}}
\global\togglefalse{icml}
  \newcommand{\State}{\STATE}
  \newcommand{\For}{\FOR}
  \newcommand{\EndFor}{\ENDFOR}
  \newcommand{\If}{\IF}
  \newcommand{\EndIf}{\ENDIF}
  \newcommand{\Return}{\RETURN}
  \newcommand{\Statex}{\State}
\newcommand{\loose}{\looseness=-1}
\newcommand{\multiline}[1]{\parbox[t]{\dimexpr\linewidth-\algorithmicindent}{#1}}
\newtheorem*{rep@theorem}{\rep@title}
\newcommand{\newreptheorem}[2]{%
\newenvironment{rep#1}[1]{%
 \def\rep@title{#2 \ref{##1}}%
 \begin{rep@theorem}}%
 {\end{rep@theorem}}}
\newcommand\xlabel[2][]{\phantomsection\def\@currentlabelname{#1}\label{#2}}
\theoremstyle{plain}
\newtheorem{theorem}{Theorem}
\newtheorem{lemma}[theorem]{Lemma}
\newtheorem{corollary}[theorem]{Corollary}
\newtheorem{proposition}[theorem]{Proposition}
\newtheorem{fact}[theorem]{Fact}
\newtheorem{problem}[theorem]{Problem}
\theoremstyle{definition}
\newtheorem{defn}[theorem]{Definition}
\theoremstyle{remark}
\newtheorem{remark}[theorem]{Remark}
\numberwithin{theorem}{section}
\renewcommand{\eqref}[1]{\texorpdfstring{\hyperref[#1]{Eq. (\ref*{#1})}}{Eq. (\ref*{#1})}}
\Crefname{assumption}{Assumption}{Assumptions}
\Crefname{defn}{Definition}{Definitions}
\newcommand{\ind}[1]{^{\scriptscriptstyle (#1)}}
\DeclarePairedDelimiter{\abs}{\lvert}{\rvert} %
\DeclarePairedDelimiter{\brk}{[}{]}
\DeclarePairedDelimiter{\crl}{\{}{\}}
\DeclarePairedDelimiter{\prn}{(}{)}
\DeclareMathOperator{\En}{\mathbb{E}}
\newcommand{\wt}[1]{\widetilde{#1}}
\newcommand{\wb}[1]{\widebar{#1}}
\def\ddefloop#1{\ifx\ddefloop#1\else\ddef{#1}\expandafter\ddefloop\fi}
\def\ddef#1{\expandafter\def\csname bb#1\endcsname{\ensuremath{\mathbb{#1}}}}
\def\ddefloop#1{\ifx\ddefloop#1\else\ddef{#1}\expandafter\ddefloop\fi}
\def\ddef#1{\expandafter\def\csname sf#1\endcsname{\ensuremath{\mathsf{#1}}}}
\def\ddef#1{\expandafter\def\csname c#1\endcsname{\ensuremath{\mathcal{#1}}}}
\def\ddef#1{\expandafter\def\csname h#1\endcsname{\ensuremath{\widehat{#1}}}}
\def\ddef#1{\expandafter\def\csname hc#1\endcsname{\ensuremath{\widehat{\mathcal{#1}}}}}
\def\ddef#1{\expandafter\def\csname tc#1\endcsname{\ensuremath{\widetilde{\mathcal{#1}}}}}
\def\ddefloop#1{\ifx\ddefloop#1\else\ddef{#1}\expandafter\ddefloop\fi}
\def\ddef#1{\expandafter\def\csname scr#1\endcsname{\ensuremath{\mathscr{#1}}}}
\newcommand{\eps}{\epsilon}
\newcommand{\ldef}{\vcentcolon=}
\newcommand{\bigoh}{O}
\newcommand{\bigoht}{\wt{O}}
\newcommand{\bigom}{\Omega}
\newcommand{\bigomt}{\wt{\Omega}}
\newcommand{\citet}[1]{\cite{#1}}
\newcommand{\citep}[1]{\cite{#1}}
  \newcommand{\utility}{reward\xspace}
  \newcommand{\vlearning}{\texttt{V-learning}\xspace}
\newcommand{\sigmahat}{\wh{\sigma}}
\newcommand{\tstar}{t^{\star}}
\newcommand{\nc}{\newcommand}
\nc{\DMO}{\DeclareMathOperator}
\DeclareMathOperator*{\argmax}{arg\,max}
\DMO{\prox}{prox}
\DMO{\Span}{span}
\DMO{\UCB}{UCB}
\DMO{\LCB}{LCB}
\nc{\br}{\mathbf{r}}
\nc{\depth}[1]{{\rm d}({#1})}
\nc{\tA}{\textsc{A}}
\nc{\child}[2]{{\rm ch}_{#1}({#2})}
\nc{\parent}[1]{{\rm pa}({#1})}
\nc{\dg}{\dagger}
\nc{\bB}{\mathbf{B}}
\nc{\unif}{\mu_{\rm unif}}
\nc{\indsig}[2]{\mathcal{I}_{#1}({#2})}
\nc{\total}{{\rm fin}}
\nc{\early}{{\rm pre}}
\nc{\zsink}{z_{\rm sink}}
\nc{\lowv}{{\rm low}}
\nc{\ol}{\overline}
\nc{\ul}{\underline}
\nc{\madec}[3]{\texttt{ma-dec}_{#1}({#2}, {#3})}
\nc{\madeco}[1]{\texttt{ma-dec}_{#1}}
\nc{\madecd}[3]{\texttt{ma-dec}^{\texttt{d}}_{#1}({#2}, {#3})}
\nc{\SF}{\mathscr{F}}
\nc{\PiMarkovdet}{\Pi^{\mathrm{markov,det}}}
\nc{\PiMarkov}{\Pi^{\mathrm{markov}}}
\nc{\Pidet}{\Pi^{\mathrm{gen,det}}}
\nc{\Pirnd}{{\til\Pi}^{\mathrm{gen,rnd}}}
\nc{\Pirndrnd}{{\Pi}^{\mathrm{gen,rnd}}}
\nc{\Piall}{\til{\Pi}^{\mathrm{all}}}
\nc{\Piprod}{\Pi^{\mathrm{gen,prod}}}
\nc{\modf}{\til}
\nc{\distp}{P}
\nc{\Vdef}{V^{\mathrm{def}}}
\nc{\pidef}{\pi^{\mathrm{def}}}
\nc{\piatt}{\pi^{\mathrm{att}}}
\nc{\gamvec}{\gamma}
\nc{\til}{\widetilde}
\nc{\td}{\tilde}
\nc{\wh}{\widehat}
\nc{\todo}[1]{\ifnum\Comments=1 {\color{red}  [TODO: #1]}\fi}
\nc{\old}[1]{\ifnum\Comments=1 {\color{brown}  [OLD: #1]}\fi}
\nc{\noah}{\ngcomment}
\nc{\noahold}[1]{}
\nc{\BP}{\mathbb{P}}
\nc{\BI}{\mathbb{I}}
\nc{\PE}{\mathsf{Emb}}
\nc{\PF}{\mathsf{Fac}}
\nc{\indic}[1]{\mathbb{I}_{#1}}
\nc{\enc}{\mathsf{enc}}
\nc{\icmlcut}[1]{\color{black}{#1}\color{black}}
\nc{\icmlmove}[1]{\color{blue}{#1}\color{black}}
 \nc{\icmlfncut}[1]{\footnote{\textcolor{black}{#1}}}
\nc{\icmlfnmove}[1]{\footnote{\textcolor{blue}{#1}}}
\nc{\Tnm}{T}
\nc{\epnm}{\ep}
\nc{\Nnm}{N}
\nc{\Qnm}{Q}
\nc{\Tn}{T}
\nc{\epn}{\ep}
\nc{\Nn}{N}
\nc{\fools}[3]{\MF_{#3}({#1}, {#2})}
\nc{\fool}[2]{\MF({#1},{#2})}
\nc{\clip}[2]{{\rm clip}\left[ \left. {#1} \right| {#2} \right]}
\nc{\imax}{\omega}
\DMO{\conv}{conv}
\nc{\MH}{\mathcal{H}}
\nc{\CF}{\mathscr{F}}
\nc{\CH}{\mathscr{H}}
\nc{\MV}{\mathcal{V}}
\nc{\MC}{\mathcal{C}}
\nc{\MI}{\mathcal{I}}
\nc{\st}{\star}
\nc{\lng}{\langle}
\nc{\rng}{\rangle}
\DMO{\OOPT}{opt}
\nc{\dopt}[2]{\ell_{\OOPT}({#1},{#2})}
\nc{\grad}{\nabla}
\nc{\MG}{\mathcal{G}}
\nc{\MP}{\mathcal{P}}
\nc{\TT}{\mathbb{T}}
\nc{\TTmax}{\TT_{\max}}
\DMO{\WREG}{wReg}
\nc{\Reg}{\mathrm{Reg}}
\DMO{\Ham}{Ham}
\DMO{\Gap}{Gap}
\DMO{\GD}{GD}
\DMO{\GDA}{GDA}
\DMO{\EG}{EG}
\DMO{\OGDA}{OGDA}
\DMO{\Unif}{Unif}
\DMO{\Tr}{Tr}
\nc{\Qu}{\ul{Q}}
\nc{\Qo}{\ol{Q}}
\nc{\Ro}{\ol{R}}
\nc{\Vu}{\ul{V}}
\nc{\Vo}{\ol{V}}
\nc{\RanQ}{\Delta Q}
\nc{\RanV}{\Delta V}
\nc{\clipQ}{\Delta \breve{Q}}
\nc{\frzQ}{\Delta \mathring{Q}}
\nc{\clipV}{\Delta \breve{V}}
\nc{\clipdelta}{\breve{\delta}}
\nc{\cliptheta}{\breve{\theta}}
\nc{\delmin}{\Delta_{{\rm min}}}
\nc{\delmins}[1]{\Delta_{{\rm min},{#1}}}
\nc{\gapfinal}[1]{\max \left\{ \frac{\frzQ_{{#1}}^{k^\st}(x,a)}{2H}, \frac{\delmin}{4H} \right\}}
\nc{\post}[2]{R({#1}; {#2})}
\nc{\posts}[3]{R_{#3}({#1}; {#2})}
\nc{\PPAD}{\textsf{PPAD}\xspace}
\nc{\PP}{\textsf{P}\xspace}
\nc{\FP}{\textsf{FP}\xspace}
\nc{\RFP}{\textsf{RFP}\xspace}
\nc{\RP}{\textsf{RP}\xspace}
\nc{\NP}{\textsf{NP}\xspace}
\nc{\MarkCCE}{\textsc{SparseMarkovCCE}\xspace}
\nc{\GenCCE}{\textsc{SparseCCE}\xspace}
\nc{\Nash}{\textsc{Nash}\xspace}
\nc{\algnst}[1]{\begin{align*}#1\end{align*}}
\nc{\algn}[1]{\begin{align}#1\end{align}}
\nc{\matx}[1]{\left(\begin{matrix}#1\end{matrix}\right)}
\renewcommand{\^}[1]{^{\scriptscriptstyle{(#1)}}}
\nc{\lgls}{\ell_{\mathrm{log}}}
\nc{\bel}[1]{\mathbf{b}({#1})}
\nc{\nbel}[1]{\bar{\mathbf{b}}({#1})}
\nc{\sbel}[2]{\mathbf{b}'_{#1}({#2})}
\nc{\nsbel}[2]{\bar{\mathbf{b}}'_{#1}({#2})}
\nc{\bv}{\mathbf{v}}
\nc{\bone}{\mathbf{1}}
\nc{\bX}{\mathbf{X}}
\nc{\bY}{\mathbf{Y}}
\nc{\bG}{\mathbf{G}}
\nc{\bz}{\mathbf{z}}
\nc{\bw}{\mathbf{w}}
\nc{\bA}{\mathbf{A}}
\nc{\bJ}{\mathbf{J}}
\nc{\bK}{\mathbf{K}}
\nc{\bb}{\mathbf{b}}
\nc{\ba}{\mathbf{a}}
\nc{\bc}{\mathbf{c}}
\nc{\bC}{\mathbf{C}}
\nc{\BR}{\mathbb R}
\nc{\BA}{\mathbb{A}}
\nc{\BC}{\mathbb C}
\nc{\bx}{\mathbf{x}}
\nc{\bS}{\mathbf{S}}
\nc{\bM}{\mathbf{M}}
\nc{\bR}{\mathbf{R}}
\nc{\bN}{\mathbf{N}}
\nc{\by}{\mathbf{y}}
\nc{\sy}{y}
\nc{\sx}{x}
\nc{\CA}{\mathscr{A}}
\nc{\CB}{\mathscr{B}}
\nc{\MO}{\mathcal O}
\nc{\MU}{\mathcal{U}}
\nc{\ME}{\mathcal{E}}
\nc{\MN}{\mathcal{N}}
\nc{\MK}{\mathcal{K}}
\nc{\MM}{\mathcal{M}}
\nc{\MS}{\mathcal{S}}
\nc{\MT}{\mathcal{T}}
\nc{\BF}{\mathbb F}
\nc{\BQ}{\mathbb Q}
\nc{\MX}{\mathcal{X}}
\nc{\MA}{\mathcal{A}}
\nc{\MD}{\mathcal{D}}
\nc{\MB}{\mathcal{B}}
\nc{\MZ}{\mathcal{Z}}
\nc{\MJ}{\mathcal{J}}
\nc{\MW}{\mathcal{W}}
\nc{\MR}{\mathcal{R}}
\nc{\MY}{\mathcal{Y}}
\nc{\BZ}{\mathbb Z}
\nc{\BN}{\mathbb N}
\nc{\ep}{\epsilon}
\nc{\vep}{\varepsilon}
\nc{\gapfn}[1]{\varepsilon_{#1}}
\nc{\ggapfn}[2]{\varphi_{#1}({#2})}
\nc{\epsahk}{\gapfn{0}}
\nc{\BH}{\mathbb H}
\nc{\BG}{\mathbb{G}}
\nc{\D}{\Delta}
\nc{\MF}{\mathcal{F}}
\nc{\One}[1]{\mathbbm{1}\{{#1}\}}
\nc{\bOne}{\mathbf{1}}
\nc{\Aopt}{\mathcal{A}^{\rm opt}}
\nc{\Amul}{\mathcal{A}^{\rm mul}}
\nc{\SP}{\mathsf P}
\nc{\SQ}{\mathsf Q}
\nc{\DO}{\accentset{\circ}{\D}}
\nc{\mf}{\mathfrak}
\nc{\mfp}{\mathfrak{p}}
\nc{\mfq}{\mf{q}}
\nc{\Sp}{\mbox{Spec}}
\nc{\Spm}{\mbox{Specm}}
\nc{\hookuparrow}{\mathrel{\rotatebox[origin=c]{90}{$\hookrightarrow$}}}
\nc{\hookdownarrow}{\mathrel{\rotatebox[origin=c]{-90}{$\hookrightarrow$}}}
\nc{\hra}{\hookrightarrow}
\nc{\tra}{\twoheadrightarrow}
\nc{\sgn}{{\rm sgn}}
\nc{\aut}{{\rm Aut}}
\nc{\Hom}{{\rm Hom}}
\nc{\img}{{\rm Im}}
\DMO{\id}{Id}
\DMO{\supp}{supp}
\DMO{\KL}{KL}
\nc{\kld}[2]{D_{\mathsf{KL}}({#1}\|{#2})}
\nc{\ren}[2]{D_2({#1}||{#2})}
\nc{\chisq}[2]{\chi^2({#1}||{#2})}
\nc{\tvd}[2]{D_{\mathsf{TV}}({#1}, {#2})}
\nc{\hell}[2]{{\rm H}^2({#1}, {#2})}
\DMO{\BSS}{BSS}
\DMO{\BES}{BES}
\DMO{\BGS}{BGS}
\DMO{\poly}{poly}
\nc{\indep}{\perp}
\DMO{\sink}{sink}
\nc{\fp}[1]{\MP_1({#1})}
\nc{\BO}{\mathbb{O}}
\nc{\BT}{\mathbb{T}}
\nc{\RR}{\mathbb{R}}
\nc{\Gradient}{\nabla}
\DMO{\diag}{diag}
\nc{\norm}[1]{\left \lVert #1 \right \rVert}
\nc{\EE}{\mathbb{E}}
\DMO{\PR}{Pr}
\nc{\E}{\mathbb{E}}
\nc{\ra}{\rightarrow}
\renewcommand{\paragraph}{\textbf}
\title{Hardness of Independent Learning and \\ Sparse Equilibrium Computation in Markov Games}
\author{  Dylan J. Foster\\{\small \texttt{dylanfoster@microsoft.com}} \and Noah Golowich\\{\small \texttt{nzg@mit.edu}}\\ \and Sham M.  Kakade\\{\small \texttt{sham@seas.harvard.edu}}}
\date{\today}
\begin{document}

\icml{
\twocolumn[
\icmltitle{Hardness of Independent Learning \\ and Sparse Equilibrium
  Computation in Markov Games}

\icmlsetsymbol{equal}{*}

\begin{icmlauthorlist}
\icmlauthor{Firstname1 Lastname1}{equal,yyy}
\icmlauthor{Firstname2 Lastname2}{equal,yyy,comp}
\icmlauthor{Firstname3 Lastname3}{comp}
\icmlauthor{Firstname4 Lastname4}{sch}
\icmlauthor{Firstname5 Lastname5}{yyy}
\icmlauthor{Firstname6 Lastname6}{sch,yyy,comp}
\icmlauthor{Firstname7 Lastname7}{comp}
\icmlauthor{Firstname8 Lastname8}{sch}
\icmlauthor{Firstname8 Lastname8}{yyy,comp}
\end{icmlauthorlist}

\icmlaffiliation{yyy}{Department of XXX, University of YYY, Location, Country}
\icmlaffiliation{comp}{Company Name, Location, Country}
\icmlaffiliation{sch}{School of ZZZ, Institute of WWW, Location, Country}

\icmlcorrespondingauthor{Firstname1 Lastname1}{first1.last1@xxx.edu}
\icmlcorrespondingauthor{Firstname2 Lastname2}{first2.last2@www.uk}

\icmlkeywords{Independent Learning, Nash equilibrium, Folk theorem}

\vskip 0.3in
]

\printAffiliationsAndNotice{}  %
}

\arxiv{\maketitle}

\begin{abstract}
\icml{
We consider the problem of decentralized multi-agent reinforcement
learning in Markov games. A key question is whether there are algorithms
      that, when run independently by all agents\icmlcut{ and run independently in a
      decentralized fashion}, lead to
      no-regret for each player, analogous to celebrated results for no-regret learning in normal-form games. While recent work has shown that such algorithms exist for
      restricted settings (e.g., when regret is defined with
      respect to deviations to \emph{Markov} policies), the question of whether
      independent no-regret learning can be achieved in the standard
      Markov game framework was open.
      We provide a decisive negative resolution
      to this problem, both from a computational and statistical perspective. We show
      that:
      \begin{enumerate}
      \item Under the assumption
        that $\PPAD$-hard problems cannot be solved in polynomial time, there is no polynomial-time algorithm
        that attains no-regret in general-sum
        Markov games when executed independently by all players, even when
        the game is known to the algorithm
        designer and the number of players is a small constant.
      \item When the game is unknown, no algorithm, efficient or
        otherwise, can achieve no-regret without
        observing exponentially many episodes in the number of players.\loose
      \end{enumerate}
      \icmlcut{      Perhaps surprisingly, our lower bounds hold even for
      seemingly easier setting in which all agents are controlled by a
      a centralized algorithm. }
      These results are proven via lower bounds for a simpler problem we refer to
      as \GenCCE, in which the goal is to compute a coarse
      correlated equilibrium that is ``sparse'' in the sense that it
      can be represented as a mixture of a small number of
      product policies. \loose
      
    }
    
\arxiv{
We consider the problem of decentralized multi-agent reinforcement
learning in Markov games.
A fundamental question is whether there exist algorithms
      that, when adopted by all agents and run independently in a
      decentralized fashion, lead to
      no-regret for each player, analogous to celebrated convergence results for no-regret learning in normal-form games. While recent work has shown that such algorithms exist for
      restricted settings (notably, when regret is defined with
      respect to deviations to \emph{Markovian} policies), the question of whether
      independent no-regret learning can be achieved in the standard
      Markov game framework was open.
      We provide a decisive negative resolution
      this problem, both from a computational and statistical perspective. We show
      that:
      \begin{enumerate}
      \item Under the widely-believed complexity-theoretic assumption
        that $\PPAD$-hard problems cannot be solved in polynomial time, there is no polynomial-time algorithm
        that attains no-regret in general-sum
        Markov games when executed independently by all players, even when
        the game is known to the algorithm
        designer and the number of players is a small constant.\dfcomment{should we be more clear about the whole
          two-player vs three-player thing (and P vs RP) here or is
          this ok?}\noah{adjusted accordingly}
      \item When the game is unknown, no algorithm---regardless of computational efficiency---can achieve no-regret without observing a number of episodes that is exponential in the
        number of players.
      \end{enumerate}
      
    Perhaps surprisingly, our lower bounds  hold even for
      seemingly easier setting in which all agents are controlled by a
      a centralized algorithm.
      They are proven via lower bounds for a simpler problem we refer to
      as \GenCCE, in which the goal is to compute by any
      means---centralized, decentralized, or otherwise---a coarse
      correlated equilibrium that is ``sparse'' in the sense that it
      can be represented as a mixture of a small number of
      ``product'' policies. The crux of our approach is a novel
      application of aggregation techniques from online learning
      \cite{vovk1990aggregating,cesa2006prediction}, whereby we show
      that any algorithm for the \GenCCE problem can be used to
      compute approximate Nash
      equilibria for non-zero sum normal-form games; this enables the
      application of well-known hardness results for Nash.
      }

\end{abstract}

\section{Introduction}
\label{sec:intro}
\noah{I feel like the paper was longer before, make sure nothing is getting cut out due to switch to arxiv}

The framework of \emph{multi-agent reinforcement learning
  (MARL)}, which describes settings in which multiple agents interact
in a dynamic environment, has played a key role in recent breakthroughs in artificial intelligence,
including the development of agents that approach or surpass human
performance in games such as Go \cite{silver2016mastering}, Poker
\cite{brown2018superhuman}, Stratego \cite{perolat2022mastering}, and
Diplomacy \cite{kramar2022negotiation,bakhtin2022human}. MARL also
shows promise for real-world multi-agent systems, including autonomous driving
\cite{shalevshwartz2016safe}, and cybersecurity
\cite{malialis2015distributed}, and economic policy
\cite{zheng2022ai}. These applications, where reliability is critical, necessitate the development of algorithms that are practical and efficient, yet provide strong formal guarantees and robustness.

Multi-agent reinforcement learning is typically studied using the framework of \emph{Markov games} (also known as \emph{stochastic games}) \cite{shapley1953stochastic}. In a Markov game, agents interact over a finite number of steps: at each step, each agent observes the \emph{state} of the environment, takes an \emph{action}, and observes a \emph{reward} which depends on the current state as well as the other agents' actions. Then the environment transitions to a new state as a function of the current state and the actions taken. An \emph{episode} consists of a finite number of such steps, and agents interact over the course of multiple episodes, progressively learning new information about their environment.
Markov games generalize the well-known model of \emph{Markov Decision Processes (MDPs)} \cite{puterman_markov_1994}, which describe the special case in which there is a single agent acting in a dynamic environment, and we wish to find a policy that maximizes its reward. %
By contrast, for Markov games, we typically aim to find a distribution over agents' policies which constitutes some type of \emph{equilibrium}.

  \subsection{Decentralized learning}

In this paper, we focus on the problem of \emph{decentralized} (or, independent) learning in Markov games. In decentralized MARL, each agent in the Markov game behaves independently, 
optimizing their policy myopically while treating the effects of the other agents as exogenous. Agents observe local
information (in particular, their own actions and rewards), but do not observe the actions of the other agents directly. %
Decentralized learning enjoys a number of desirable properties, including scalability\icmlcut{\xspace(computation is inherently linear in the number of agents)}, versatility\icmlcut{\xspace(by virtue of independence, algorithms can be applied in uncertain environments in which the nature of the interaction and number of other agents are not known)}, and practicality\icmlcut{\xspace(architectures for single-agent reinforcement learning can often be adapted directly)}. The central question we consider is whether there exist decentralized learning algorithms which, when employed by all agents in a Markov game, lead them to play near-equilibrium strategies over time.

\dfcomment{it might be good to mention below that we like CCE because Nash is intractable (since we will refer to nash in the next section of the intro).}\noah{did so in below paragraph}

Decentralized equilibrium computation in MARL is not well understood theoretically, and algorithms with provable guarantees are scarce. To motivate the challenges and most salient issues, it will be helpful to contrast with the simpler problem of decentralized learning in \emph{normal-form games}, which may be interpreted as Markov games with a single state. \icmlcut{Normal-form games enjoy a rich and celebrated theory of decentralized learning, dating back to Brown's work on fictitious play \cite{brown1949some} and Blackwell's theory of approachability \cite{blackwell1956analog}. }
Much of the modern work on decentralized learning in normal-form games centers on \emph{no-regret} learning, where agents select actions independently using \emph{online learning} algorithms \cite{cesa2006prediction} designed to minimize their \emph{regret} (that is, the gap between realized payoffs and the payoff of the best fixed action in hindsight).
In particular, a foundational result is that if each agent employs a no-regret learning strategy,
then the average of the agents' joint action distributions approaches a \emph{coarse correlated equilibrium (CCE)} for the normal-form game \cite{cesa2006prediction,hannan1957approximation,blackwell1956analog}. %
CCE is a natural relaxation of the foundational concept of \emph{Nash equilibrium}, which has the downside of being intractable to compute. 
On the other hand, there are many efficient algorithms that can achieve vanishing regret in a normal-form game, even when opponents select their actions in an arbitrary, potentially adaptive fashion, and thus converge to a CCE  \cite{vovk1990aggregating,littlestone1994weighted,cesa1997how,hart2000simple, syrgkanis2015fast}.

\dfcomment{I am on the fence about whether to shorten these two sentences on motivation for no-regret in normal-form games (or perhaps merge with the general motivation for decentralized learning in the first paragraph).}\noah{I think it's ok}

This simple connection between no-regret learning and decentralized convergence to equilibria has been influential in game theory, leading to numerous lines of research including
fast rates of convergence to equilibria
\cite{syrgkanis2015fast,chen2020hedging,daskalakis2021nearoptimal,anagnostides2022uncoupled}, price of anarchy bounds for smooth games \cite{roughgarden2015intrinsic}, and lower bounds on query and communication complexity for equilibrium computation \cite{fearnley2013learning,rubinstein2016settling,babichenko2017communication}.  Empirically, no-regret algorithms such as regret matching \cite{hart2000simple} and Hedge \cite{vovk1990aggregating,littlestone1994weighted,cesa1997how} have been used to compute equilibria that can achieve state-of-the-art performance in application domains such as Poker \cite{brown2018superhuman} and Diplomacy \cite{bakhtin2022human}. 
Motivated by these successes, we ask whether an analogous theory can be developed for Markov games. In particular:
\begin{center}
\emph{Are there efficient
  algorithms \icml{\\}for no-regret learning in Markov games?}
\end{center}
\icmlcut{
  Any Markov game can be viewed as a large normal-form game where each agent's action space consists of their exponentially-sized space of policies, and their utility function is given by their expected reward. Thus, any learning algorithm for normal-form games can also be applied to Markov games, but the resulting sample and computational complexities will be intractably large. Our goal is to explore whether more efficient decentralized learning guarantees can be established.
  }

\paragraph{Challenges for no-regret learning.}
In spite of active research effort and many promising pieces of progress \cite{jin2021vlearning,song2022when,mao2021provably,daskalakis2022complexity,erez2022regret}, no-regret learning guarantees for Markov games have been elusive. A barrier faced by naive algorithms is that
it is intractable to ensure no-regret against an \emph{arbitrary} adversary, both computationally \cite{bai2020nearoptimal,abbasi2013online} and statistically \cite{liu2022learning,kwon2021rl,foster2022complexity}.

Fortunately, many of the implications of no-regret learning (in particular, convergence to equilibria) do not require the algorithm to have sublinear regret against an arbitrary adversary, but rather only against other agents who are running the same algorithm independently. This observation has been influential in normal-form games, where the \arxiv{well-known }line of work on fast rates of convergence to equilibrium \cite{syrgkanis2015fast,chen2020hedging,daskalakis2021nearoptimal,anagnostides2022uncoupled} holds only in this more restrictive setting. This motivates the following relaxation to our central question.\loose
  \icml{\vspace{-5pt}}
\begin{problem}
  \label{pr:main-question}
Is there an efficient algorithm that, when adopted by all agents in a Markov game and run independently, leads to sublinear regret for each individual agent?
  \end{problem}
  \icml{\vspace{-5pt}}

  \paragraph{Attempts to address Problem
  \ref{pr:main-question}.}
Two recent lines of research have made progress toward addressing Problem
\ref{pr:main-question} and related questions. In one direction,
several recent papers have provided algorithms, including
\texttt{V-learning}
\cite{jin2021vlearning,song2022when,mao2021provably} and
\texttt{SPoCMAR} \cite{daskalakis2022complexity}, that do not achieve no-regret, but can nevertheless compute and then sample from a coarse
correlated equilibrium in a Markov game in a (mostly) \emph{decentralized}
fashion, with the caveat that they require a shared source of
  random bits as a mechanism to coordinate. Notably, \texttt{V-learning} depends only mildly on
the shared randomness: agents first play policies in a fully independent fashion (i.e.,
without shared randomness) according to a simple learning algorithm
for $T$ episodes, and use shared random bits only once learning finishes as part of a post-processing procedure to  extract a CCE policy. A question left open by these works, is whether the sequence of policies played by 
the \texttt{V-learning} algorithm in the initial independent phase can itself guarantee each
agent sublinear regret\icml{.}\arxiv{; this would eliminate the need for a separate
post-processing procedure and shared randomness.}

Most closely related to our work, \citet{erez2022regret} recently
showed that Problem \ref{pr:main-question} can be solved positively
for a restricted setting in which regret for each agent is defined as the maximum gain in
value they can achieve by deviating to a fixed \emph{Markov}
  policy. Markov policies are those whose choice of action depends
only on the current state as opposed to the entire history of
interaction. This notion of deviation is restrictive because in general, even when the opponent plays a sequence of Markov policies, the best response will be \emph{non-Markov}. In challenging settings that abound in practice, it is standard to consider non-Markov policies \cite{leibo2021scalable,agapiou2022melting}, since they often achieve higher value than Markov policies; we provide a simple example in Proposition \ref{prop:nonmarkov-deviation}.
Thus, while a regret guarantee with respect to the class of Markov policies (as in \cite{erez2022regret}) is certainly interesting, it may be too weak in general, and it is of great interest to understand whether Problem \ref{pr:main-question} can be answered positively in the general setting.\icmlfncut{We remark that the \texttt{V-learning} and \texttt{SPoCMAR} algorithms mentioned above do learn equilibria that are robust to deviations to non-Markov policies, though they do not address Problem \ref{pr:main-question} since they do not have sublinear regret.}

\subsection{Our contributions}
We resolve Problem
\ref{pr:main-question} in the negative, from both a computational and statistical perspective.

\dfcomment{It occurs to me that we never mention in the intro that we work in an episodic setting. I think it might be more clear if we do this, and then use ``episodes'' in place of iterations}\noah{good point, made this change}

\paragraph{Computational hardness.}
We provide two computational lower bounds (\cref{thm:markov-intro,thm:nonmarkov-intro}) which show that under standard complexity-theoretic assumptions, there is no efficient algorithm that runs for a polynomial number of episodes and guarantees each agent non-trivial (``sublinear'') regret when used in tandem by all agents. Both results hold even if the Markov game is explicitly known to the algorithm designer; \cref{thm:nonmarkov-intro} is stronger and more general, but applies only to $3$-player games, while \cref{thm:markov-intro} applies to $2$-player games, but only for agents restricted to playing Markovian policies.

To state our first result, \cref{thm:markov-intro}, we define a \emph{product Markov policy} to be a joint policy in which players choose their actions independently according to Markov policies (see Sections \ref{sec:prelim} and \ref{sec:markov} for formal
definitions). Note that if all players use independent no-regret algorithms to choose Markov policies at each episode, then their joint play at each round is described by a product Markov policy, since any randomness in each player's policy must be generated independently. %

\begin{theorem}[Informal version of Corollary \ref{cor:markov-formal-ppad}]
  \label{thm:markov-intro}
  If $\PPAD \neq \PP$, then there is no polynomial-time algorithm that, given the description of a 2-player Markov game, outputs a sequence of joint product Markov policies which guarantees each agent sublinear regret. %
\end{theorem}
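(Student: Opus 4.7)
My plan is to derive the theorem via two reductions: (i) any polynomial-time algorithm that, when run independently by both agents, guarantees sublinear regret against Markov deviations can be converted into a polynomial-time algorithm for computing a ``sparse'' Markov CCE; and (ii) computing a sparse Markov CCE is $\PPAD$-hard, via a reduction from Nash equilibrium computation in normal-form games.

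For (i), suppose both agents run the purported no-regret algorithm independently for $T = \poly(\cdot)$ episodes, inducing a sequence of joint product Markov policies $\pi_1, \ldots, \pi_T$. By the standard connection between regret and equilibria, the uniform mixture $\hat\mu = \frac{1}{T}\sum_{t=1}^T \pi_t$ is an $\epsilon$-approximate CCE with respect to Markov deviations, with $\epsilon = o(1)$. Since $T$ is polynomial and each $\pi_t$ is a product Markov policy, $\hat\mu$ is \emph{sparse}: a uniform mixture over polynomially many product Markov policies. Thus the purported algorithm yields, in polynomial time, an instance of the \MarkCCE problem, and hardness of \MarkCCE implies the theorem.

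For (ii), I would reduce from approximate Nash equilibrium in 2-player general-sum normal-form games, which is known to be $\PPAD$-hard. Given a normal-form game $G$, I would construct a Markov game $M_G$ of size polynomial in $G$ whose sparse Markov CCEs can be decoded into approximate Nash equilibria of $G$. The conceptual core is an aggregation argument in the spirit of Vovk's aggregating algorithm: when a CCE is supported on only $k$ product policies, one can run a Hedge-style aggregation over these $k$ components to extract an object that approximately best-responds to itself, i.e., an approximate Nash of $G$. The game $M_G$ should be designed so that any failure of the decoded object to be approximately Nash corresponds to a profitable deviation for one of the players in $M_G$, contradicting the CCE condition. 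The horizon and state structure of $M_G$ would be used to ``scan'' over the $k$ components of the CCE support, while the restriction to Markov policies would prevent the players from correlating across this scan, forcing product-like behavior that is necessary for the aggregation to go through.

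The main obstacle is step (ii). A CCE in a 2-player game is strictly weaker than Nash, so the whole reduction rests on the sparsity assumption: only because $\hat\mu$ is a mixture of \emph{few} product distributions does aggregation recover a genuine near-Nash. Making this quantitative---ensuring that a $\poly$-sparse, $o(1)$-approximate Markov CCE of $M_G$ translates into an inverse-polynomial-approximate Nash of $G$---is where the careful gadget design and error propagation live. Combining (i) and (ii), the theorem follows: under $\PPAD \neq \PP$, no polynomial-time algorithm can solve \MarkCCE, so none can produce the required sequence of joint product Markov policies.
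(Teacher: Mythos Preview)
Your two-step decomposition (no-regret $\Rightarrow$ sparse Markov CCE $\Rightarrow$ Nash) and the invocation of Vovk-style aggregation are exactly the architecture of the paper's proof. Step (i) is correct as stated.

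For step (ii), however, your description of how the aggregation is deployed is garbled in a way that matters. In the paper, the Markov game $\MG(G)$ is simply the normal-form game $G$ played $H$ times, with the state at even steps encoding the previous joint action profile (so each player can observe the opponent's action). The Nash extraction algorithm is trivial: enumerate all $(t,h)$ and check whether $\sigma^{(t)}_h(\mathfrak{s})$ is an approximate Nash of $G$. Aggregation is not used to ``extract an object that approximately best-responds to itself''; it appears only in the \emph{proof} that this enumeration succeeds. The argument is by contradiction: if no $\sigma^{(t)}_h(\mathfrak{s})$ were approximately Nash, then one player $j$ could deviate to a \emph{non-Markov} policy $\pi_j^\dagger$ that, at each step $h$, runs Vovk's aggregating algorithm over the $T$ experts $\{\sigma^{(t)}_{-j}\}_{t\in[T]}$ using the observed action history as data, thereby estimating the opponent's current distribution $\sigma^{(t^\star)}_{-j,h}(\mathfrak{s})$ and best-responding to it. The regret bound $\sqrt{H\log T}$ then contradicts the $\varepsilon$-CCE property, provided $T < \exp(\Omega(\varepsilon^2 H))$.

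So two specific corrections to your sketch: (a) the horizon does not ``scan over the $k$ components of the CCE support''---rather, the horizon gives the deviation policy $H$ rounds of data to do online density estimation over the $T$ experts, and the quantitative constraint is $\log T \ll \varepsilon^2 H$; (b) the Markov restriction on $\sigma^{(t)}$ is not about ``preventing correlation across a scan'' but about making the opponent's conditional action distribution at each $(h,s)$ a fixed function of $(t^\star,h,s)$ only, which is exactly the realizability condition needed for aggregation to converge. Crucially, the CCE definition permits the deviation $\pi_j^\dagger$ to be non-Markov, and this is what lets the deviation exploit history.
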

\cref{thm:markov-intro} provides a decisive negative resolution to Problem
\ref{pr:main-question} under the assumption that $\PPAD\neq
\PP$,\footnote{Technically, the class we are denoting by $\PP$, namely of total search problems that have a deterministic polynomial-time algorithm, is sometimes denoted by $\mathsf{FP}$, as it is a search problem. We ignore this distinction.} which is standard in the theory of computational
complexity
\cite{papadimitriou1994complexity}.\footnote{\PPAD is the most well-studied complexity class in algorithmic game theory, and is widely believed to not admit polynomial time algorithms.
  Notably, the problem of computing a Nash equilibrium for normal-form games with two or more players is \PPAD-complete \cite{daskalakis2009complexity,chen2006computing,rubinstein2018inapproximability}.}
Beyond simply ruling out the existence of fully
decentralized no-regret algorithms, it rules out existence of
\emph{centralized} algorithms that compute a sequence of product 
policies for which each agent has sublinear regret, even if such a
sequence does not arise naturally as the result of agents
independently following some learning algorithm. Salient implications include:
\begin{itemize}
\item \cref{thm:markov-intro} provides a separation between Markov games and normal-form games, since standard no-regret algorithms for normal-form games i) run in polynomial time and ii) produce sequences of joint product policies that guarantee each agent sublinear regret. Notably, no-regret learning for  normal-form games is efficient whenever the number of agents is polynomial, whereas \cref{thm:markov-intro} rules out polynomial-time algorithms for as few as two agents.
  \item A question left open by the work of \citet{jin2021vlearning,song2022when,mao2021provably} was whether the sequence of policies played by 
    the \texttt{V-learning} algorithm during its independent learning phase can guarantee each
    agent sublinear regret.
  Since \texttt{V-learning}
  plays product Markov policies during the independent phase and is computationally efficient, \cref{thm:markov-intro} implies that these policies \emph{do not} enjoy sublinear regret (assuming $\PPAD \neq \PP$). 
\end{itemize}

Our second result, Theorem \ref{thm:nonmarkov-intro}, extends the
guarantee of Theorem \ref{thm:markov-intro} to the more general setting in
which agents can select arbitrary, potentially \emph{non-Markovian} policies at each episode. This comes at the cost of only providing hardness for
3-player games as opposed to 2-player games, as well as relying on the slightly stronger complexity-theoretic assumption that $\PPAD\nsubseteq\RP$.\footnote{We use $\RP$ to denote the class of total search problems for which there exists a polynomial-time randomized algorithm which outputs a solution with probability at least $2/3$, and otherwise outputs ``fail''.}
\begin{theorem}[Informal version of Corollary \ref{cor:nonmarkov-formal-ppad}]
  \label{thm:nonmarkov-intro}
    If $\PPAD \nsubseteq \RP$, then there is no polynomial-time algorithm that, given the description of a 3-player Markov game, outputs a sequence of joint product general policies (i.e., potentially non-Markov) which guarantees each agent sublinear regret. 
  \end{theorem}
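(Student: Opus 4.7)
The plan is to reduce Theorem~\ref{thm:nonmarkov-intro} to the $\PPAD$-hardness (under randomized reductions) of a sparse coarse correlated equilibrium computation problem, following the \GenCCE strategy foreshadowed in the abstract. The starting observation is the standard one: if a centralized algorithm runs for $T$ episodes and outputs a sequence of joint product (possibly non-Markov) policies $\pi^{(1)},\ldots,\pi^{(T)}$ under which every agent's regret against arbitrary non-Markov deviations is $o(T)$, then the uniform mixture $\bar\pi = \tfrac{1}{T}\sum_{t} \pi^{(t)}$ is an $o(1)$-approximate CCE supported on at most $T$ product policies. Thus a polynomial-time independent learning algorithm in the sense of the theorem statement immediately yields a polynomial-time (randomized) procedure producing a polynomially-sparse CCE in a 3-player Markov game.

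It therefore suffices to establish that producing such a polynomially-sparse CCE in a 3-player Markov game is $\PPAD$-hard under randomized reductions. The plan, again following the paper's abstract, is to reduce from \Nash in non-zero-sum normal-form games using online aggregation: given an instance $G$ of normal-form Nash, construct a 3-player Markov game $M$ and then show that \emph{any} CCE of $M$ supported on $k$ product policies can be aggregated via a Hedge / Vovk-style algorithm (with regret $O(\sqrt{T\log k})$) into an approximate Nash of $G$. Provided $k = \poly(|M|)$, this aggregation introduces only $o(1)$ error, so a polynomial-time sparse-CCE algorithm for 3-player Markov games would give a polynomial-time randomized algorithm for normal-form Nash, contradicting $\PPAD\nsubseteq\RP$.

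The reason the jump from the Markov-policy setting of \cref{thm:markov-intro} (2 players) to the general non-Markov setting forces a third player is that non-Markov best responses are substantially more powerful: a single player can condition on the state trajectory, which is partly driven by the other players' actions, to correlate deviations with the opponents' histories in ways that can exploit almost any structure one tries to embed into a 2-player Markov game. My construction would use the third player as a kind of exogenous ``signal'' whose reward structure disincentivizes history-dependent coordination between the first two players, so that the effective best-response class reduces at equilibrium to the product action distributions of the embedded normal-form instance $G$.

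The main obstacle I expect is the design of the third-player gadget. We need rewards and transitions so that (i) any polynomially-sparse product-policy CCE of $M$ encodes a mixture of joint action distributions of $G$ of polynomial support to which Hedge-style aggregation applies, and (ii) no non-Markov deviation available to the first two players can outperform the intended best response, so that the CCE constraint in $M$ genuinely translates back to the Nash condition in $G$. The use of $\RP$ rather than $\PP$ in the hypothesis is what I would expect to emerge naturally from this pipeline, since both the aggregation step and (most likely) the gadget itself consume shared randomness to coordinate the ``experts.''
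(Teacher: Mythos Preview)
Your high-level reduction scheme---no-regret sequence $\Rightarrow$ sparse CCE, then reduce from normal-form \Nash to \GenCCE via an aggregation argument---matches the paper's, and you correctly anticipate that Vovk/Hedge over the $T$ ``experts'' $\sigma\^1,\ldots,\sigma\^T$ is the right tool. However, your account of the third player's role is not the mechanism that actually works, and the obstacle you flag (preventing non-Markov deviations by players $1,2$ from outperforming the intended best response) is not the one the proof must overcome.

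The real obstruction to a 2-player reduction is the \emph{folk theorem}: in a repeated 2-player game, the first two players can use tit-for-tat threats to support many equilibria, and in fact Nash equilibria of 2-player repeated games can be computed in polynomial time \cite{littman2005polynomial}. So the problem is not that players $1,2$ have deviations that are ``too powerful''---it is that the CCE of the Markov game need not encode a Nash of the stage game at all. The paper's fix, following the anti-folk-theorem construction of \citet{borgs2008myth}, is to make the third player a \emph{kibitzer}: its action set is $\{(j,a_j): j\in[2],\,a_j\in\MA_j\}$, its reward for playing $(j,a_j)$ is player $j$'s gain from deviating to $a_j$, and player $j$ receives the negation of that amount. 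The crucial property is that the kibitzer can always guarantee itself reward $\geq 0$ against \emph{any} product strategy of players $1,2$, so they cannot punish it via history-dependent threats. Consequently, the CCE constraint \emph{for the kibitzer}---not for players $1,2$---forces the first two players to be at approximate Nash with constant probability. Vovk's algorithm is then run \emph{by the kibitzer's deviation policy} (and algorithmically in the reduction) to estimate the opponents' per-step action distributions; the randomness needed to sample best responses from these estimates is what makes the reduction land in $\RP$ rather than $\PP$. Your proposed role for the third player---a signal that ``disincentivizes history-dependent coordination''---would not obviously yield property (ii) in your own list, whereas the kibitzer construction sidesteps it entirely by shifting the binding CCE constraint to the unpunishable third player.
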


\paragraph{Statistical hardness.}
\cref{thm:markov-intro,thm:nonmarkov-intro} rely on the widely-believed complexity
  theoretic assumption that $\PPAD$-complete problems cannot be solved
  in (randomized) polynomial time. Such a restriction is inherent if
  we assume that the game is known to the algorithm designer.
  To avoid complexity-theoretic assumptions, we consider a setting in which the Markov game is \emph{unknown} to the algorithm designer, and algorithms must learn about the game by executing policies (``querying'') and observing the resulting sequences of states, actions, and rewards.
  Our final result, \cref{thm:query-intro}, shows \emph{unconditionally} that, for $m$-player Markov games whose parameters
  are unknown, any algorithm computing a
  no-regret sequence as in Theorem \ref{thm:nonmarkov-intro} requires a number of queries that is exponential in $m$.
\begin{theorem}[Informal version of \cref{thm:statistical-lb}]
  \label{thm:query-intro}
  Given query access to a $m$-player Markov game, no algorithm that makes fewer than $2^{\Omega(m)}$ queries can output a sequence of joint product policies which guarantees each agent sublinear regret.
\end{theorem}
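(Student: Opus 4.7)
The plan is to establish the $2^{\Omega(m)}$ query lower bound by lifting the reduction from sparse CCE computation to Nash equilibrium---the backbone of Theorems~\ref{thm:markov-intro} and~\ref{thm:nonmarkov-intro}---into the query-complexity model, and then combining it with the classical payoff-query lower bounds of \citet{babichenko2016query} and \citet{rubinstein2016settling} for computing approximate Nash equilibria in normal-form games.

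First I would revisit the construction used for \cref{thm:nonmarkov-intro}, which takes an $m$-player binary-action normal-form game $G$ and produces a Markov game $\mathcal{M}(G)$ together with an aggregation procedure that converts any sequence of joint product policies with sublinear per-agent regret in $\mathcal{M}(G)$ into an $\epsilon_0$-approximate Nash equilibrium of $G$, for some absolute constant $\epsilon_0 > 0$. The key additional property needed for the statistical setting is that this reduction is \emph{query-efficient}: a single episode of interaction with $\mathcal{M}(G)$---a trajectory of states, joint actions, and rewards---can be simulated using only $\poly(m)$ queries to the payoff oracle of $G$, because $\mathcal{M}(G)$ has polynomial horizon and polynomial-size state and action spaces, and its rewards and transitions are simple local functions of individual entries of the payoff tensor of $G$.

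Next I would invoke the payoff-query lower bound: there is an absolute constant $\epsilon_0 > 0$ such that any randomized algorithm which, with probability at least $2/3$, outputs an $\epsilon_0$-approximate Nash equilibrium of an $m$-player binary-action normal-form game must make at least $2^{\Omega(m)}$ payoff-oracle queries \citep{babichenko2016query,rubinstein2016settling}. Combining this with the query-efficient reduction yields the theorem: suppose for contradiction that some algorithm $\mathcal{A}$ produces, on any $m$-player Markov game, a sublinear-regret sequence of joint product policies using only $q = 2^{o(m)}$ episodes. Given a normal-form game $G$, run $\mathcal{A}$ on $\mathcal{M}(G)$ by answering each Markov-game episode with $\poly(m)$ payoff queries to $G$, and apply the aggregation procedure to the resulting product sequence. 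This yields an $\epsilon_0$-Nash of $G$ using a total of $q \cdot \poly(m) = 2^{o(m)}$ payoff queries, contradicting the lower bound above. Yao's minimax principle is then used to pass from a distributional statement over game instances to hardness against randomized algorithms on worst-case inputs.

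The main obstacle is verifying that the reduction is genuinely query-efficient. Two sub-issues arise: (i) the construction of $\mathcal{M}(G)$ must depend only ``locally'' on $G$, so that generating one trajectory never implicitly requires evaluating $G$ at more than $\poly(m)$ action profiles; and (ii) the aggregation step converting a sparse CCE of $\mathcal{M}(G)$ into an approximate Nash of $G$ must be information-theoretically valid rather than merely polynomial-time, so that the implication remains sound in the statistical regime where no computational bound is placed on the Markov-game algorithm. Checking that the construction underlying \cref{thm:markov-intro,thm:nonmarkov-intro} already satisfies both properties---and tracking the precise dependence of the resulting query bound on $m$---is where the technical work lies; once these are in place, the lower bound follows by a black-box simulation argument.
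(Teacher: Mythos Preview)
Your proposal is correct and takes essentially the same approach as the paper: the paper packages the query-efficient reduction as Theorem~\ref{thm:nonmarkov-multiplayer} (each generative-model query to $\MG(G)$ is simulated by $O(1)$ payoff queries to $G$, and the Vovk-based extraction of a Nash from the sparse CCE costs only $\poly(m)$ additional payoff queries), and then invokes the $2^{\Omega(m)}$ payoff-query lower bound of \citet{rubinstein2016settling} for $\epsilon_0$-Nash in $m$-player binary-action games. Two minor points: the reduction loses a factor of $m$ in approximation quality (an $\epsilon$-CCE of $\MG(G)$ yields only an $O(m\epsilon)$-Nash of $G$), so the formal statement is against algorithms achieving regret $\leq (\epsilon_0/\Theta(m))\cdot T$ rather than arbitrary $o(T)$; and Yao's principle is unnecessary, since the cited query lower bound already holds for randomized algorithms.
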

Similar to our computational lower bounds, \cref{thm:query-intro} goes far beyond decentralized algorithms, and rules out even centralized algorithms that compute a no-regret sequence by jointly controlling all players. The result provides another separation between Markov games and normal-form games, since standard no-regret algorithms for normal-form games can achieve sublinear regret using $\poly(m)$ queries for any $m$. The $2^{\Omega(m)}$ scaling in the lower bound, which does not rule out query-efficient algorithms when $m$ is constant, is to be expected for an unconditional result: If the game has only polynomially many parameters (which is the case for constant $m$), one can estimate all of the parameters using standard techniques \cite{jin2020reward}, then directly find a no-regret sequence.

\paragraph{Proof techniques: the \GenCCE problem.}
\icml{
Our proofs proceed via establishing lower bounds for a computational problem we refer to as \GenCCE. In the \GenCCE problem, the aim is to compute a CCE that can be represented as the mixture of a small number of product policies. See \cref{sec:markov,sec:nonmarkov} for detailed proof overview. 
  }

\icmlcut{
Rather than directly proving lower bounds for the problem of no-regret learning, we establish lower bounds for a simpler problem we refer to as \GenCCE. In the \GenCCE problem, the aim is to compute by any means---centralized, decentralized, or otherwise---a coarse correlated equilibrium that is ``sparse'' in the sense that it can be represented as the mixture of a small number of product policies. Any algorithm that computes a sequence of product policies with sublinear regret (in the sense of \cref{thm:nonmarkov-intro}) immediately yields an algorithm for the \GenCCE problem, as---using the standard connection between CCE and no-regret---the uniform mixture of the policies in the no-regret sequence forms a sparse CCE. Thus, any lower bound for the sparse \GenCCE problem yields a lower bound for computation of no-regret sequences.}

\icmlcut{
To provide lower bounds for the \GenCCE problem, we reduce from the problem of Nash equilibrium computation in normal-form games. We show that given any two-player normal-form game, it is possible to construct a Markov game (with two players in the case of \cref{thm:markov-intro} and three players in the case of \cref{thm:nonmarkov-intro}) with the property that i) the description length is polynomial in the description length of the normal-form game, and ii) any (approximate) \GenCCE for the Markov game can be efficiently transformed into a approximate Nash equilibrium for the normal-form game. With this reduction established, our computational lower bounds follow from celebrated \PPAD-hardness results for approximate Nash equilibrium computation in two-player normal-form games, and our statistical lower bounds follow from query complexity lower bounds for Nash. Proving the reduction from Nash to \GenCCE constitutes the bulk of our work, and makes novel use of aggregation techniques from online learning \cite{vovk1990aggregating,cesa2006prediction}, as well as techniques from the literature on anti-folk theorems in game theory \cite{borgs2008myth}.
}

\icml{
\paragraph{Organization.} \cref{sec:prelim} presents preliminaries, and \cref{sec:markov,sec:nonmarkov} provide our computational lower bounds. \textbf{\emph{Due to space limitations, our statistical lower bounds for multi-player Markov games are in \cref{sec:multiplayer}.}}
  }
\icmlcut{
  \subsection{Organization}
  \cref{sec:prelim} presents preliminaries on no-regret learning and equilibrium computation in Markov games and normal-form games. \cref{sec:markov,sec:nonmarkov,sec:multiplayer} present our main results:
  \begin{itemize}
  \item \cref{sec:markov,sec:nonmarkov} provide our computational lower bounds for no-regret in Markov games. \cref{sec:markov} gives a lower bound for the setting in which algorithms are constrained to play Markovian policies, and \cref{sec:nonmarkov} builds on the approach in this section to give a lower bound for general, potentially non-Markovian policies.
  \item \cref{sec:multiplayer} provides statistical (query complexity) lower bounds for multi-player Markov games.
  \end{itemize}
  Proofs are deferred to the appendix unless otherwise stated.
}
\dfcomment{Add a sentence here in bold telling people to look at Part I of the appendix.}\noah{did so}

\noahold{Main motivating work about no-regret learning in MDPs:
  \begin{itemize}
  \item In the worst case, it is computationally hard (due to hardness of Latent MDPs) and statistically hard (Chi's paper on adversarial MDPs, which uses statistical hardness of latent MDPs; also adversarial DEC paper).
  \item Question we consider (considered by many works in literature in other simpler online learning settings): is there some ``independent learning algorithm'' (broadly construed; see prelims for formal defn) so that if each player in a MG plays it, they have no regret? This avoids above upper bounds since the opponent is no longer acting adversarially.
\item Upper bounds against general policy classes: V-learning, spocmar give decentralized algorithms, though not no-regret (playing the output policy requires sharing random bits -- this is a very important point)!
\item Upper bounds against Markovian policies: recent work by Erez et al shows that you can get no-regret independent learning algorithm (i.e., don't share any random bits during the course of the algorithm) if (a) you know (or can explore) the MG, and (b) your comparison class is Markov policies. In particular, the policies' average is a CCE (nice!). 
\item But remember that the best response to a mixture of markov policies is in general non-Markov -- so can you remove the requirement of Erez et al of competing against class of Markov policies?

  We should drive the point home that in many settings (esp in game theory/economics settings, where people consider repeated games; also POMGs, cite Deepmind's melting pot), history dependent policies are the norm.
\item Our answer is no:

  (a) first show this if the individual policies played by the algorithm are Markov (product) policies. (Simpler proof)

  (b) More involved proof applies to when the inidividual policies played by the algorithm are general product policies.
  \end{itemize}
}

\noahold{Also mention statistical implications for $n$-player games.}

\noahold{Open questions:
  \begin{itemize}
  \item $O(1)$-player, $O(1)$-action games: can we show lower bounds? No, there's actually upper bounds: you can find Nash equilibria in polynomial time. 
  \item 2-player  games with non-Markov policies: here I expect there is an upper bound (for sparse CCE, as long as accuracy is $> 1/H$). 
  \end{itemize}
}
  \paragraph{Notation.} 
  For $n \in \BN$, we write $[n] := \{ 1, 2, \ldots, n\}$. For a finite set $\MT$, $\Delta(\MT)$ denotes the space of distributions on $\MT$. For an element $t\in\cT$, $\indic{t}\in\Delta(\cT)$ denotes the delta distribution that places probability mass $1$ on $t$. We adopt standard
        big-oh notation, and write $f=\bigoht(g)$ to denote that $f =
        \bigoh(g\cdot{}\max\crl*{1,\mathrm{polylog}(g)})$, with $\bigom(\cdot)$ and $\bigomt(\cdot)$ defined analogously.

  \section{Preliminaries}
  \label{sec:prelim}

This section contains preliminaries necessary to present our main
results. We first introduce the Markov game framework
(\cref{sec:markov_games}), then provide a brief review of normal-form
games (\cref{sec:normal_form}), and finally introduce the concepts of
coarse correlated equilibria and regret minimization (\cref{sec:regret}).

\subsection{Markov games}
\label{sec:markov_games}

  We consider general-sum Markov games in a finite-horizon, episodic
  framework. For $m \in \BN$, an $m$-player Markov game $\MG$ consists of a tuple $\MG = (\MS,H,  (\MA_i)_{i \in [m]} , \BP, (R_i)_{i \in [m]}, \mu)$, where:\loose
  \begin{itemize}
  \item $\MS$ denotes a finite state space and $H \in \BN$ denotes a finite time horizon. We write $S := |\MS|$. 
  \item For $i \in [m]$, $\MA_i$ denotes a finite action space for agent $i$. We let $\MA := \prod_{i=1}^m \MA_i$ denote the \emph{joint action space} and $\MA_{-i} := \prod_{i' \neq i} \MA_{i'}$.
    We denote joint actions in \arxiv{boldface}\icml{bold}, \arxiv{for example}\icml{e.g.}, $\ba = (a_1, \ldots, a_m) \in \MA$. We write $A_i \ldef |\MA_i|$ and $A \ldef |\MA|$. 
  \item $\BP = (\BP_1, \ldots, \BP_H)$ is the transition kernel, with
    each $\BP_h : \MS \times \MA \ra \Delta(\MS)$ denoting the kernel
    for step $h\in\brk{H}$. In particular, $\BP_h(s' | s, \ba)$ is the probability of transitioning to $s'$ from the state $s$ at step $h$ when agents play $\ba$.
  \item For $i \in [m]$ and $h \in [H]$, $R_{i,h} : \MS \times \MA \ra
    [-1/H,1/H]$ is the \arxiv{instantaneous reward function of}\icml{reward function for} agent
    $i$:\footnote{We assume that rewards lie in $\brk*{-1/H,1/H}$ for
      notational convenience, as this ensures that the cumulative reward for each
      episode lies in $\brk*{-1,1}$. This assumption is not important
      to our results.}
the reward agent $i$
    receives in state $s$ at step $h$ if agents play
    $\ba$ is \arxiv{given by }$R_{i,h}(s, \ba)$.\footnote{\arxiv{We consider Markov games in which the rewards at
      each step are a deterministic function of the state and action
      profile. While some works consider the more general case of
      stochastic rewards, since our main goal is to prove lower
      bounds, it is without loss for us to assume that rewards are deterministic.}\icml{We restrict our attention to Markov games in which the rewards at each step are a deterministic function of the state and action profile. Since our goal is to prove lower bounds, this is without loss.}}\loose
  \item $\mu \in \Delta(\MS)$ denotes the initial state distribution. 
  \end{itemize}
  An \emph{episode} in the Markov game proceeds as follows:
  \icml{
   the initial state $s_1$ is drawn from the initial state
    distribution $\mu$.
    Then, for each $h
    \leq H$, given state $s_h$, each agent $i$ plays action $a_{i,h} \in
    \MA_i$, and given the joint action profile $\ba_h = (a_{1,h}, \ldots,
    a_{m,h})$, each agent $i$ receives reward of $r_{i,h}= R_{i,h}(s_h,
    \ba_h)$ and the state of the system transitions to $s_{h+1} \sim
    \BP_h(\cdot | s_h, \ba_h)$.
    }
  \arxiv{
  \begin{itemize}
  \item the initial state $s_1$ is drawn from the initial state
    distribution $\mu$.
    \item For each $h
    \leq H$, given state $s_h$, each agent $i$ plays action $a_{i,h} \in
    \MA_i$, and given the joint action profile $\ba_h = (a_{1,h}, \ldots,
    a_{m,h})$, each agent $i$ receives reward of $r_{i,h}= R_{i,h}(s_h,
    \ba_h)$ and the state of the system transitions to $s_{h+1} \sim
    \BP_h(\cdot | s_h, \ba_h)$.
  \end{itemize}
  }
  We denote the tuple of agents' rewards at each step $h$ by $\br_h =
  (r_{1,h}, \ldots, r_{m,h})$, and refer to the resulting sequence
$\tau_H\ldef{}(s_1,\ba_1,\br_1),\ldots,(s_H,\ba_H,\br_H)$ as a
\emph{trajectory}. For $h\in\brk{H}$, we define the prefix of the
trajectory via
$\tau_h\ldef{}(s_1,\ba_1,\br_1),\ldots,(s_h,a_h,r_h)$.

\arxiv{\paragraph{Indexing.} }
  We use the following notation: for some quantity $x$ (e.g., action, reward, etc.) indexed by agents, i.e., $x = (x_1, \ldots, x_m)$, and an agent $i \in [m]$, we write $x_{-i} = (x_1, \ldots, x_{i-1}, x_{i+1}, \ldots, x_m)$ to denote the tuple consisting of all $x_{i'}$ for $i' \neq i$.

  \subsection{Policies and value functions}
  \label{sec:polval-prelim}
We now introduce the notion of policies and value functions for Markov
games. Policies are mappings from states (or sequences of states) to
actions for the agents. We consider several different types of policies, which play a
  crucial role in distinguishing the types of equilibria that are
  tractable and those that are intractable to compute
  efficiently. 

  \paragraph{Markov policies.} A randomized \emph{Markov policy} for
 agent $i$ is a sequence $\sigma_i = (\sigma_{i,1}, \ldots,
\sigma_{i,H})$, where $\sigma_{i,h} : \MS \ra \Delta(\MA_i)$. We
denote the space of randomized Markov policies for agent $i$ by
$\PiMarkov_i$. We write $\PiMarkov := \PiMarkov_1 \times \cdots \times
\PiMarkov_m$ to denote the space of \emph{product Markov policies}, which are
joint policies in which each agent $i$ independently follows a policy
in $\PiMarkov_i$. In
particular, a policy $\sigma\in \PiMarkov$ is specified by a
collection $\sigma = (\sigma_1, \ldots, \sigma_H)$, where $\sigma_h :
\MS \ra \Delta(\MA_1) \times \cdots \times \Delta(\MA_m)$.  We
additionally define $\PiMarkov_{-i} := \prod_{i' \neq i}
\PiMarkov_{i'}$, and for a policy $\sigma \in \PiMarkov$, write
$\sigma_{-i}$ to denote the collection of mappings $\sigma_{-i} =
(\sigma_{-i,1}, \ldots, \sigma_{-i,H})$, where $\sigma_{-i,h} : \MS
\ra \prod_{i' \neq i} \Delta(\MA_{i'})$ denotes the tuple of all but
player $i$'s policies.

When the Markov game $\cG$ is clear from context, for a policy
$\sigma\in\PiMarkov$ we let $\bbP_{\sigma}\brk*{\cdot}$
denote the law of the trajectory $\tau$ when players select actions
via $\ba_{h}\sim{}\sigma(s_h)$, and let $\En_{\sigma}\brk*{\cdot}$
denote the corresponding expectation.

    \paragraph{General (non-Markov) policies.} In addition to Markov
    policies, we will consider general \emph{history-dependent} (or,
    \emph{non-Markov}) policies, which select actions based on the
    \emph{entire sequence of states and actions} observed up the
    current step. To streamline notation, %
  for $i \in [m]$, let $\tau_{i,h} = (s_1, a_{i,1}, r_{i,1}, \ldots,
  s_h, a_{i,h}, r_{i,h})$ denote the history of agent $i$'s states,
  actions, and reward up to step $h$. Let $\CH_{i,h} = (\MS \times
  \MA_i \times [0,1])^h$ denote the space of all possible histories of
  agent $i$ up to step $h$. %
  For $i \in [m]$, a \emph{randomized general (i.e., non-Markov)
    policy of agent $i$} is a collection of mappings $\sigma_i =
  (\sigma_{i,1}, \ldots, \sigma_{i,H})$ where $\sigma_{i,h} :
  \CH_{i,h-1} \times \MS \ra \Delta(\MA_i)$ is a mapping that takes
  the history observed by agent $i$ up to step $h-1$ and the current
  state and outputs a distribution over actions for agent
  $i$.\loose

  We denote by $\Pirndrnd_i$ the space of randomized general policies
  of agent $i$, and further write $\Pirndrnd := \Pirndrnd_1 \times
  \cdots \times \Pirndrnd_m$ to denote the space of product general
  policies; note that $\PiMarkov_i \subset \Pirndrnd_i$ and $\PiMarkov
  \subset \Pirndrnd$. In particular, a policy $\sigma \in \Pirndrnd$ is
  specicfied by a collection $(\sigma_{i,h})_{i \in [m], h \in [H]}$,
  where $\sigma_{i,h} : \CH_{i,h-1} \times \MS \ra
  \Delta(\MA_i)$. When agents play according to a general policy
  $\sigma\in \Pirndrnd$, at each step $h$, each agent, given the
  current state $s_h$ and their history $\tau_{i,h-1} \in
  \CH_{i,h-1}$, chooses to play an action $a_{i,h} \sim
  \sigma_{i,h}(\tau_{i,h-1}, s_h)$, independently from all other
  agents. For a policy $\sigma\in\Pirndrnd$, we let
  $\bbP_{\sigma}\brk*{\cdot}$ and $\En_{\sigma}\brk*{\cdot}$
denote the law and expectation operator for the trajectory $\tau$ when players select actions
via $\ba_{h}\sim{}\sigma(\tau_{h-1}, s_h)$, and write
  $\sigma_{-i}$ to denote the collection of policies of all agents but
  $i$, i.e., $\sigma_{-i} = (\sigma_{j,h})_{h \in [H], j \in
    [m]\backslash \{ i \}}$.

  We will also consider distributions over product randomized general
  policies, namely elements of $\Delta(\Pirndrnd)$.\footnote{When
    $\cT$ is not a finite set, we take $\Delta(\cT)$ to be the set
    of Radon probability measures over $\cT$ equipped with the Borel $\sigma$-algebra.} We will refer to elements of $\Delta(\Pirndrnd)$ as \emph{distributional policies}. To play \arxiv{according to some}\icml{a} distributional policy $\distp\in\Delta(\Pirndrnd)$, agents draw a randomized policy $\sigma \sim \distp$ (so that $\sigma \in \Pirndrnd$) and then play \arxiv{according to }$\sigma$.

  \icmlcut{
  \begin{remark}[Alternative definition for randomized general policies]
    Instead of defining distributional policies as above, one might alternatively define $\Pirndrnd_i$ as the set of distributions over agent $i$'s deterministic general policies, namely as the set  $\Delta(\Pidet_i)$. We show in Section \ref{sec:equivalence} that this alternative definition is equivalent to our own: in particular, there is a mapping from $\Pirndrnd$ to $\Delta(\Pidet_1) \times \cdots \times \Delta(\Pidet_m)$ so that, for any Markov game, any policy $\sigma \in \Pirndrnd$ produces identically distributed trajectories to its corresponding policy in $\Delta(\Pidet_1) \times \cdots \times \Delta(\Pidet_m)$. Further, this mapping is one-to-one if we identify policies that produce the same distributions over trajectories for all Markov games. %
  \end{remark}
  }

  \icmlcut{
    \paragraph{Deterministic policies.}
  It will be helpful to introduce notation for
  \emph{deterministic} general (non-Markov) policies, which correspond
  to the special case of randomized policies where each policy $\sigma_{i,h}$ exclusively maps to singleton distributions. {In particular, a deterministic general policy of agent $i$ is 
  a collection of mappings $\pi_i =
  (\pi_{i,1}, \ldots, \pi_{i,H})$, where $\pi_{i,h} : \CH_{i,h-1} \times
  \MS \ra \MA_i$.} %
  We denote by $\Pidet_i$ the space of deterministic
  general policies of agent $i$, and further write $\Pidet:= \Pidet_1
  \times \cdots \times \Pidet_m$ to denote the space of \emph{joint
    deterministic policies}. {We use the convention throughout that
  deterministic policies are denoted by the letter $\pi$, whereas
  randomized policies are denoted by $\sigma$.}
}

  \paragraph{Value functions.} For a general policy
  $\sigma \in \Pirndrnd$, we define the value function for agent $i \in
  [m]$ as
    \icml{
$V_i^\sigma := \E_\sigma \left[ \sum_{h=1}^H R_{i,h}(s_h, \ba_h) \ \mid s_1 \sim \mu \right]$;%
  }
  \arxiv{
  \begin{align}
V_i^\sigma := \E_\sigma \left[ \sum_{h=1}^H R_{i,h}(s_h, \ba_h) \ \mid s_1 \sim \mu \right];\label{eq:valfn}
  \end{align}
  }
  this represents the expected reward that agent $i$ receives when
  each agent chooses their actions via $a_{i,h} \sim \sigma_h(\tau_{i,h-1}, s_h)$. For a distributional policy $\distp \in \Delta(\Pirndrnd)$, we extend this notation by defining $V_i^{\distp} := \E_{\sigma\sim \distp} [V_i^\sigma]$.

  \subsection{Normal-form games}
  \label{sec:normal_form}
  \noah{I feel like it might make sense to swap this subsection and the following one -- currently we're going back and forth between MGs and normal form.}
  \dfedit{To motivate the solution concepts we consider for Markov games, let us first revisit the notion of normal-form games, which may be interpreted as Markov games with a single state.}
  For $m,n \in \BN$, an \emph{$m$-player $n$-action normal-form game
    $G$} is specified by a tuple of $m$ \emph{\utility tensors} $M_1,
  \ldots, M_m \in [0,1]^{n \times \cdots \times n}$, where each tensor
  is of order $m$ (i.e., has $n^m$ entries). We will write $G = (M_1,
  \ldots, M_m)$.  We assume for simplicity that each player has the
  same number $n$ of actions, and identify each player's action space
  with $[n]$. Then an an action profile is specified by $\ba \in
  [n]^m$; if each player acts according to $\ba$, then the \utility for
  player $i \in [m]$ is given by $(M_i)_{\ba} \in [0,1]$. 
    Our hardness results will use the standard notion of \emph{Nash equilibrium} in normal-form games.       We define the \emph{$m$-player $(n,\ep)$-\Nash problem} to be the
  problem of computing an $\ep$-approximate Nash equilibrium of a given $m$-player
  $n$-action normal-form game. (See Definition \ref{def:nash} for a formal definition of $\ep$-Nash equilibrium.)  A celebrated result is that Nash equilibria are \PPAD-hard to approximate, i.e., the 2-player $(n, n^{-c})$-\Nash problem is \PPAD-hard for any constant $c > 0$  \cite{daskalakis2009complexity,chen2006computing}. We refer the reader to Section \ref{sec:nash-prelims} for further background on these concepts.

\arxiv{\subsection{Markov games: Equilibria, no-regret, and
    independent learning}}
\icml{\subsection{Markov games: Equilibria and no-regret}}
  \label{sec:regret}

We now turn our focus back to Markov games, and introduce the main
solution concepts we consider, as well as the notion of no-regret. Since computing Nash equilibria is intractable even for normal-form
games, much of the work on efficient equilibrium computation has
focused on alternative notions of equilibrium, notably \emph{coarse
  correlated equilibria}\icmlcut{{} and \emph{correlated equilibria}. We focus on coarse correlated equilibria: being a superset of
correlated equilibria, any lower bound for computing a coarse
correlated equilibrium implies a lower bound for computing a
correlated equilibrium}.

For a distributional policy $\distp \in \Delta(\Pirndrnd)$ and a
randomized policy $\sigma_i' \in \Pirndrnd_i$ of player $i$, we let
$\sigma_i' \times \distp_{-i} \in \Delta(\Pirndrnd)$ denote the
distributional policy which is given by the distribution of
$(\sigma_i', \sigma_{-i}) \in \Pirndrnd$ for  $\sigma \sim \distp$
(and $\sigma_{-i}$ denotes the marginal of $\sigma$ on all players but
$i$). For $\sigma \in \Pirndrnd$, we write $\sigma_i' \times
\sigma_{-i}$ to denote the policy given by $(\sigma_i', \sigma_{-i}) \in \Pirndrnd$.
 Let us fix a Markov game $\MG$, which in particular determines the players' value functions $V_i^\sigma$\arxiv{\xspace as in (\ref{eq:valfn})}.
\begin{defn}[Coarse correlated equilibrium]
  \label{def:cce}
  For $\ep > 0$, a distributional policy $\distp \in \Delta(\Pirndrnd)$ is defined to be an \emph{$\ep$-coarse correlated equilibrium (CCE)} if for each $i \in [m]$, it holds that \icml{$\max_{\sigma_i' \in \Pirndrnd_i} V_i^{\sigma_i' \times \distp_{-i}} - V_i^\distp \leq \ep$}.
  \arxiv{
  \begin{align}
\max_{\sigma_i' \in \Pirndrnd_i} V_i^{\sigma_i' \times \distp_{-i}} - V_i^\distp \leq \ep\nonumber.
  \end{align}
  }
\icmlcut{The maximizing policy $\sigma_i'$ can always be chosen to be determinimistic, so $\distp$ is an $\ep$-CCE if and only if $\max_{\pi_i \in \Pidet_i} V_i^{\pi_i \times \distp_{-i}} - V_i^\distp \leq \ep$. }
\end{defn}

Coarse correlated equilibria can be computed efficiently for both
normal-form games and Markov games, and are fundamentally connected to
the notion of no-regret and independent learning, which we now introduce.

\paragraph{Regret.}
For a policy $\sigma \in \Pirndrnd$, we denote the distributional
policy which puts all its mass on $\sigma$ by $\indic{\sigma} \in
\Delta(\Pirndrnd)$. Thus $\frac 1T \sum_{t=1}^T \indic{\sigma\^t} \in
\Delta(\Pirndrnd)$ denotes the distributional policy which randomizes
uniformly over the $\sigma\^t$.  We define \emph{regret} as follows.
  \begin{defn}[Regret]
    \label{def:regret}
    Consider a sequence of policies $\sigma\^1, \ldots, \sigma\^T \in \Pirndrnd$. For $i \in [m]$, the \emph{regret of agent $i$} with respect to this sequence is defined as:
    \begin{align}
\arxiv{\Reg_{i,T} = }\Reg_{i,T}(\sigma\^1, \ldots, \sigma\^T) = \max_{\sigma_i \in \Pirndrnd_i} \sum_{t=1}^T  V_i^{\sigma_i \times \sigma_{-i}\^t} -  V_i^{\sigma\^t} \label{eq:reg-defn}.
    \end{align}
\icmlcut{    In (\ref{eq:reg-defn}) the maximum over $\sigma_i \in \Pirndrnd_i$ is always achieved by a deterministic general policy, so we have $\Reg_{i,T} = \max_{\pi_i \in \Pidet_i} \sum_{t=1}^T \prn[\big]{ V_i^{\pi_i \times \sigma_{-i}\^t} - V_i^{\sigma\^t} }$. }
\end{defn}

\icml{It is immediate from the above definitions that a sequence of policies $\sigma\^1, \ldots, \sigma\^T \in \Pirndrnd$ satisfies $\Reg_{i,t}(\sigma\^1, \ldots, \sigma\^T) \leq \ep \cdot T$ if and only if the distributional policy $\ol \sigma := \frac 1T \sum_{t=1}^T \indic{\sigma\^t}$ is an $\ep$-CCE (stated formally in Fact \ref{fac:no-regret-cce} in the appendix).}

\icmlcut{The following standard result
shows that the uniform average of any no-regret sequence forms an
approximate coarse correlated equilibrium.

\begin{fact}[No-regret is equivalent to CCE]
  \label{fac:no-regret-cce}
 Suppose that a sequence of policies $\sigma\^1, \ldots, \sigma\^T\in
 \Pirndrnd$ satisfies $\Reg_{i,T}(\sigma\^1, \ldots, \sigma\^T ) \leq
 \ep \cdot T$ for each $i \in [m]$. Then the uniform average of these
 $T$ policies, namely the distributional policy $\ol \sigma :=
 \frac{1}{T} \sum_{t=1}^T \indic{\sigma\^t} \in \Delta(\Pirndrnd)$, is
 an $\ep$-CCE.

Likewise if a sequence of policies $\sigma\^1, \ldots, \sigma\^T\in
\Pirndrnd$ has the property that the distributional policy $\ol \sigma
:= \frac{1}{T} \sum_{t=1}^T \indic{\sigma\^t} \in \Delta(\Pirndrnd)$,
is an $\ep$-CCE, then we have $\Reg_{i,T}(\sigma\^1, \ldots, \sigma\^T ) \leq \ep \cdot T$ for all $i \in [m]$.
\end{fact}
}

\paragraph{No-regret learning.}
\icmlcut{Fact \ref{fac:no-regret-cce} is an immediate consequence of Definitions \ref{def:regret} and \ref{def:cce}. }
A standard approach to
decentralized equilibrium computation, which exploits Fact \ref{fac:no-regret-cce}, is to select
$\sigma\ind{1},\ldots,\sigma\ind{T} \in\Pirndrnd$ using independent \emph{no-regret
  learning} algorithms. A no-regret learning algorithm for player $i$ selects $\sigma_i\ind{t} \in \Pirndrnd_i$ based on the realized trajectories $\tau\ind{1}_{i,H},\ldots,\tau\ind{t-1}_{i,H} \in \CH_{i,H}$ that player $i$ observes over the course of play,\icmlfncut{An alternative model allows for player $i$ to have knowledge of the previous joint policies $\sigma\^1, \ldots, \sigma\^{t-1}$, when selecting $\sigma_i\^t$.}\xspace 
 but
with no knowledge of $\sigma\ind{t}_{-i}$, so as to ensure that
no-regret is achieved: $\Reg_{i,T}(\sigma\^1, \ldots, \sigma\^T ) \leq
\ep \cdot T$. If each player $i$ uses their own, independent no-regret
learning algorithm, this approach yields \emph{product policies}
$\sigma\ind{t}=\sigma_1\ind{t}\times\cdots\times\sigma\ind{t}_m$, and the uniform average of the $\sigma\^t$ yields a CCE as long as all of the players
can keep their regret small.\footnote{In \cref{sec:discussion}, we
  discuss the implications of relaxing the stipulation that
  $\sigma\^t$ be product policies \dfedit{(for example, by allowing
    the use of shared randomness, as in \texttt{V-learning})}. In short, allowing $\sigma\^t$ to be non-product
  essentially trivializes the problem.
  \label{fn:shared-randomness}}

\icml{
For the special case of normal-form games, there are several efficient
algorithms, %
which---when run independently---ensure that each player's regret after $T$ episodes is bounded
above by $O(\sqrt{T})$ (that is $\eps=\bigoh(1/\sqrt{T})$), even when
the other players' actions are chosen
\emph{adversarially}.
}

\arxiv{
For the special case of normal-form games, the no-regret learning approach
has been fruitful.
There are several efficient algorithms, including regret matching \cite{hart2000simple}, Hedge (also known as exponential weights) \cite{vovk1990aggregating,littlestone1994weighted,cesa1997how}, and generalizations of Hedge based on the
\emph{follow-the-regularized-leader (FTRL)} framework \cite{shalevshwartz2012online},
which ensure that each player's regret after $T$ episodes is  bounded
above by $O(\sqrt{T})$ (that is $\eps=\bigoh(1/\sqrt{T})$), even when
the other players' actions are chosen
\emph{adversarially}.
All of these guarantees, which bound regret by a sublinear function in
$T$, lead to efficient, decentralized computation of approximate
coarse correlated equilibrium in normal-form games. The success of
this motivates our central question, which is whether similar
guarantees may be established for Markov games. In particular, a
formal version of Problem \ref{pr:main-question} asks: \emph{Is there an efficient
algorithm that, when adopted by all agents in a Markov game and run
independently, ensures that for all $i$, $\Reg_{i,T}\leq\eps\cdot{}T$
for some $\eps=o(1)$?} \noah{This last sentence feels somewhat
repetitive -- I'm not sure if the change from Problem
\ref{pr:main-question}, namely the $\ep = o(1)$, adds much?}
\dfcomment{I wonder if we should formally define the model for
  \emph{independent} no-regret algorithms
  somewhere (in particular, what info they have access to when selecting
  $\sigma\ind{t}_i$). I kind of sketched this above, but it's a bit informal/vague.}\noah{there's not necessarily a fixed model, so not sure it makes sense to do so more formally -- I think what you wrote above is good.}
}

\arxiv{\section{Lower bound for Markovian algorithms}}
\icml{\section{Lower bound for Markovian algorithms}}
\label{sec:markov}
In this section we prove Theorem \ref{thm:markov-intro} (restated
formally below as Theorem \ref{thm:markov-formal}), establishing that
in two-player Markov games, there is no computationally efficient
algorithm that computes a sequence $\sigma\^1, \ldots, \sigma\^T$ of product Markov policies so that each player has small regret
under this sequence. This section serves as a warm-up for our results in \cref{sec:nonmarkov}, which remove the assumption that $\sigma\ind{1},\ldots,\sigma\ind{T}$ are Markovian.

\arxiv{\subsection{\MarkCCE problem and computational model}}
\icml{\subsection{\MarkCCE and computational model}}
\label{sec:model}
As discussed in the introduction, our lower bounds for no-regret learning are a consequence of lower bounds for the \emph{\GenCCE} problem. In what follows, we formalize this problem (specifically, the Markovian variant, which we refer to as \emph{\MarkCCE}), as well as our computational model.

\paragraph{Description length for Markov games (constant $m$).}
Given a Markov game $\MG$, we let $\beta(\MG)$ denote the maximum number
of bits needed to describe any of the rewards $R_{i,h}(s, \ba)$ or transition probabilities $\BP_h(s' | s,\ba)$ in binary.\footnote{We emphasize that $\beta(\cG)$ is defined as the maximum number of bits required by any particular $(s,\ba)$ pair, not the total number of bits required for \emph{all} $(s,\ba)$ pairs.} We define $|\MG| := \max\{ S, \max_{i \in [m]} A_i, H, \beta(\MG) \}$. The interpretation of $|\MG|$ depends on the number of players $m$: If $m$ is a constant (as will be the case in the current section and Section \ref{sec:nonmarkov}), then $|\MG|$ should be interpreted as the description length of the game $\MG$, up to polynomial factors. In particular, for constant $m$, %
the game $\MG$ can be described using $|\MG|^{O(1)}$
bits. %
In Section \ref{sec:multiplayer}, we discuss the interpretation of $|\MG|$ when $m$ is large.

\paragraph{The \MarkCCE problem.}  
From Fact \ref{fac:no-regret-cce}, we know that the problem of computing a sequence $\sigma\^1, \ldots, \sigma\^T$ of
joint product Markov policies for which each player has at most $\eps\cdot{}T$ regret is equivalent to computing a sequence $\sigma\^1, \ldots, \sigma\^T$ for which the uniform mixture forms an $\eps$-approximate CCE. We define $(T, \ep)$-\MarkCCE as the computational problem of computing such a CCE directly.

\begin{defn}[\MarkCCE problem]
  \label{def:markcce}
  For an $m$-player Markov game $\MG$ and parameters $T \in \BN$ and $\ep > 0$ (which may depend on the size of the game $\MG$), $(T, \ep)$-\MarkCCE is the problem of finding a sequence $\sigma\^1, \ldots, \sigma\^T$, with each $\sigma\^t \in \PiMarkov$, such that the distributional policy $\ol \sigma = \frac{1}{T} \sum_{t=1}^T \indic{\sigma\^t} \in \Delta(\Pirndrnd)$ is an $\ep$-CCE of $\MG$ (or equivalently, such that for all $i \in [m]$, $\Reg_{i,T}(\sigma\^1, \ldots, \sigma\^T) \leq \ep \cdot T$).
\end{defn}
Decentralized learning algorithms naturally lead to solutions to the \MarkCCE problem. In particular, consider any decentralized protocol which runs for $T$ episodes, where at each timestep $t\in\brk{T}$, each player $i \in [m]$ chooses a Markov policy $\sigma_i\^t \in \PiMarkov_i$ to play, without knowledge of the other players' policies $\sigma_{-i}\ind{t}$ (but possibly using the history); any strategy in which players independently run online learning algorithms falls under this protocol. If each player experiences overall regret at most $\ep \cdot T$, then the sequence $\sigma\ind{1},\ldots,\sigma\ind{T}$ is a solution to the $(T,\ep)$-\MarkCCE problem. However, one might expect the $(T,\ep)$-\MarkCCE problem to be much easier than decentralized learning, since it allows for algorithms that produce $(\sigma\^1, \ldots, \sigma\^T)$ satisfying the constraints of Definition \ref{def:markcce} in a centralized manner. The main result of this section, Theorem \ref{thm:markov-formal}, rules out the existence of \emph{any} efficient algorithms, including centralized ones, that solve the \MarkCCE problem.

{Before moving on, let us give a sense for what sort of scaling one should expect for the parameters $T$ and $\eps$ in the $(T,\eps)$-\MarkCCE problem.}\xspace First, we note that there always exists a solution to the $(1,0)$-\MarkCCE problem in a Markov game, which is given by a (Markov) Nash equilibrium of the game; of course, Nash equilibria are intractable to compute in general.\icmlfncut{Such a Nash equilibrium can be seen to exist by using backwards induction to specify the player's joint distribution of play at each state at steps $H, H-1, \ldots, 1$.}
For the special case of normal-form games (where there is only a single state, and $H=1$), no-regret learning (e.g., Hedge) yields a computationally efficient solution to the $(T, \til O(1/\sqrt{T}))$-\MarkCCE problem, where the $\til O(\cdot)$ hides a $\max_i \log |A_i|$ factor. Refined convergence guarantees of \citet{daskalakis2021nearoptimal,anagnostides2022uncoupled} improve upon this result, and yield an efficient solution to the $(T, \til O(1/T))$-\MarkCCE problem.

\subsection{Main result}
\label{sec:markov-formal}
\begin{theorem}
  \label{thm:markov-formal}
There is a constant $C_0 > 1$ so that the following holds. Let $n \in \BN$ be given, and let $\Tn \in \BN$ and $\epn > 0$ satisfy $\Tn < \exp(\epn^2 \cdot n^{1/2}/2^{5})$. Suppose there is an algorithm that, given the description of any 2-player Markov game $\MG$ with $|\MG| \leq n$, solves the $(\Tn, \epn)$-\MarkCCE problem in time $U$, for some $U \in \BN$. %
  Then, for each $n \in \BN$, the 2-player $(\lfloor n^{1/2}\rfloor, 4 \cdot \epn)$-\Nash problem (Definition \ref{def:nash}) can be solved in time $(n\Tn U)^{C_0}$. \loose
\end{theorem}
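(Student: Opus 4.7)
Plan. I would prove \cref{thm:markov-formal} by constructing an efficient reduction from the 2-player $(n', 4\epn)$-\Nash problem (with $n' = \lfloor n^{1/2}\rfloor$) to the $(T_n, \epn)$-\MarkCCE problem. Given a 2-player normal-form game $G = (M_1,M_2)$ on $[n'] \times [n']$, the goal is to build a 2-player Markov game $\cG$ with $|\cG| \leq n$ so that any solution $\sigma\ind{1},\ldots,\sigma\ind{T_n}$ to the $(T_n,\epn)$-\MarkCCE problem on $\cG$ can be efficiently post-processed into a $(4\epn)$-approximate Nash of $G$. The reduction itself runs in $\poly(n)$ time, so if \MarkCCE admits an algorithm in time $U$, we get Nash in time $(n T_n U)^{O(1)}$.

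Construction. I would design $\cG$ to have horizon $H = \Theta(n^{1/2})$, per-player action set $[n']$ (padded as needed), and a modestly-sized state space, with the reward/transition structure chosen so that (a) evaluating a product Markov policy $\sigma = \sigma_1 \times \sigma_2$ amounts essentially to playing $G$ against the state-averaged marginals of $\sigma_1,\sigma_2$, and (b) the state/horizon structure encodes an ``aggregation gadget'' in the spirit of Vovk's aggregating algorithm \cite{vovk1990aggregating,cesa2006prediction}. The gadget is what forces every sparse CCE to look product-like when projected onto $[n'] \times [n']$. The overall description length will satisfy $|\cG| \leq n$ by choosing the state space on the order of $n$ and the per-entry bit precision on the order of $\log n$.

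Aggregation step (the crux). Given a $T_n$-sparse $\epn$-CCE $\bar\sigma = \frac{1}{T_n}\sum_t \mathbb{I}_{\sigma\ind{t}}$, each $\sigma_i\ind{t}$ induces, under $\cG$'s initial distribution, a marginal distribution $p_i\ind{t} \in \Delta([n'])$ on actions of $G$. The $\epn$-CCE guarantee yields, for each player $i$ and every deviation $a_i \in [n']$,
\begin{align*}
\frac{1}{T_n}\sum_{t=1}^{T_n} \langle M_i e_{a_i},\, p_{-i}\ind{t}\rangle \;\leq\; \frac{1}{T_n}\sum_{t=1}^{T_n} \langle M_i p_i\ind{t},\, p_{-i}\ind{t}\rangle + \epn,
\end{align*}
but the right-hand side carries a ``correlation gap'' relative to the product of marginals $\mu_i = \frac{1}{T_n}\sum_t p_i\ind{t}$. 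To close this gap I would apply an exponential-weights/Hoeffding style aggregation: imagine drawing $H = \Theta(n^{1/2})$ indices $t_1,\ldots,t_H$ i.i.d.\ uniform from $[T_n]$ and averaging the $p_i\ind{t_j}$; by Hoeffding/McDiarmid applied to each of the $n'$ coordinates and a union bound over the $\leq T_n$ ``experts'' plus the $2n'$ possible deviations, with positive probability the resulting product strategy is a $4\epn$-Nash provided
\begin{align*}
\log T_n + \log(2 n') \;\lesssim\; \epn^2 \cdot n^{1/2},
\end{align*}
which holds under the hypothesis $T_n < \exp(\epn^2 n^{1/2}/2^5)$. Derandomizing by conditional expectations (or exhaustively searching the small random set) makes the extraction deterministic and polynomial-time in $(n, T_n)$.

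Main obstacle. The hard part is engineering $\cG$ so that: (i) its description length is $\leq n$; (ii) the per-episode expected reward of a product Markov policy depends only on each player's state-averaged marginal on $[n']$ (so that projecting to $G$ is faithful); and (iii) the state/horizon gadget is rich enough that the aggregation argument actually goes through for every sparse CCE, not just well-behaved ones. I expect the bookkeeping around the reward normalization (which must lie in $[-1/H,1/H]$) and around converting between CCE deviations in $\cG$ (which range over arbitrary Markov policies) and pure-action deviations in $G$ will be the most delicate part. Once these are in place, the exponential-weights aggregation and the PPAD-hardness of 2-player Nash \cite{daskalakis2009complexity,chen2006computing} yield \cref{thm:markov-formal}, with the corollary that $(T_n,\epn)$-\MarkCCE is PPAD-hard for the stated parameter regime.
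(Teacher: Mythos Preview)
Your high-level plan (reduce $(n',4\epn)$-\Nash to $(T_n,\epn)$-\MarkCCE via a Markov game built from $G$, with $H=\Theta(n^{1/2})$ and actions $[n']$) matches the paper. But the heart of the argument --- what you call the ``aggregation step'' --- is not right, and it misidentifies where Vovk's aggregation enters.

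\textbf{Why your aggregation step fails.} From the CCE property you only get, for each player $i$ and deviation $a_i$,
\[
\langle M_i e_{a_i},\, \mu_{-i}\rangle \;\leq\; \frac{1}{T_n}\sum_{t}\langle M_i p_i\ind{t},\, p_{-i}\ind{t}\rangle + \epn,
\qquad \mu_j := \frac{1}{T_n}\sum_t p_j\ind{t}.
\]
Your proposed fix is to subsample $H$ indices $t_1,\ldots,t_H$ and set $q_i=\frac{1}{H}\sum_j p_i\ind{t_j}$. Both sides of the Nash condition then concentrate around their population versions: the left-hand side around $\langle M_i e_{a_i},\mu_{-i}\rangle$ and the right-hand side around $\mu_i^\top M_i \mu_{-i}$. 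But the gap between $\frac{1}{T_n}\sum_t \langle M_i p_i\ind{t}, p_{-i}\ind{t}\rangle$ and $\mu_i^\top M_i \mu_{-i}$ is a genuine covariance term, not a sampling error; no Hoeffding/McDiarmid argument shrinks it. So subsampling cannot turn a sparse CCE into a product Nash. Relatedly, the displayed CCE inequality you wrote uses only \emph{single-action} deviations --- that is the normal-form CCE condition, and it is exactly what is too weak to yield Nash. The entire leverage in the Markov setting comes from the fact that the deviating player in \cref{def:cce} may use \emph{non-Markov} policies.

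\textbf{What the paper actually does.} The Markov game $\MG(G)$ is just $H$ sequential copies of $G$, with the state at step $h{+}1$ encoding the joint action profile at step $h$ (so each player can read off the opponent's past actions from the state sequence). There is no aggregation gadget baked into $\MG$; aggregation lives entirely inside the \emph{deviation policy}. Concretely: if $\bar\sigma=\frac{1}{T}\sum_t\indic{\sigma\ind{t}}$ is an $\ep$-CCE, one argues by contradiction that some $\sigma_h\ind{t}(\mf s)$ must already be a $4\ep$-Nash of $G$. If not, one builds for each player $j$ a non-Markov policy $\pi_j^\dagger$ that, \emph{within a single episode}, runs Vovk's aggregating algorithm with the $T$ ``experts'' $\{p_{-j,\cdot}\ind{t}\}_{t\in[T]}$, using the opponent's observed actions (recovered from the state) as the outcomes. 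Proposition~\ref{prop:online-tvd} gives
\[
\E\!\left[\sum_{h}\tvd{\wh q_{j,h}}{p_{-j,h}\ind{t^\star}}\right]\;\leq\;\sqrt{H\log T},
\]
so $\pi_j^\dagger$ can best-respond to an accurate estimate of the opponent at most steps. Summing the two players' gains yields $\sum_j\bigl(V_j^{\pi_j^\dagger\times\bar\sigma_{-j}}-V_j^{\bar\sigma}\bigr)\geq \tfrac{\ep_0}{2}-4\sqrt{\log(T)/H}$, contradicting the CCE property exactly when $T<\exp(\ep_0^2 H/2^8)$ --- this is where the hypothesis $T_n<\exp(\epn^2 n^{1/2}/2^5)$ comes from. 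The extraction algorithm is then trivial: enumerate the $HT$ pairs $(h,t)$ and test whether $\sigma_h\ind{t}(\mf s)$ is a $4\epn$-Nash.

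In short, the missing idea is that the aggregation is run \emph{online by a non-Markov deviating policy inside one episode}, with experts indexed by $t\in[T]$ and rounds indexed by $h\in[H]$; the Markov game's only role is to make the opponent's past actions visible through the state. Your post-hoc sampling-over-$[T_n]$ argument does not exploit non-Markov deviations and cannot close the correlation gap.
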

\dfcomment{based on reviewer suggestion: add remark on why restriction to $T\leq\exp(.)$ is natural.}\noah{added it}

We emphasize that the range $T < \exp(n^{O(1)})$ ruled out by Theorem \ref{thm:markov-formal} is the most natural parameter regime, since the runtime of any decentralized algorithm which runs for $T$ episodes and produces a solution to the \MarkCCE problem is at least linear in $T$. 
Using that 2-player $(n, \ep)$-\Nash is \PPAD-complete for $\ep = n^{-c}$ (for any $c > 0$) \citep{daskalakis2009complexity,chen2006computing,rubinstein2018inapproximability}, we obtain the following corollary.
\begin{corollary}[\MarkCCE is \PPAD-complete]
  \label{cor:markov-formal-ppad}
For any constant $C > 4$, if there is an algorithm which, given the description of a 2-player Markov game $\MG$, solves the $(|\MG|^C, |\MG|^{-\frac{1}{C}})$-\MarkCCE problem  in time $\poly(|\MG|)$, then $\PPAD = \PP$.
\end{corollary}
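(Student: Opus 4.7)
The plan is to deduce the corollary directly from Theorem \ref{thm:markov-formal} by instantiating the parameters $T_n$ and $\epsilon_n$ appropriately and then invoking the known $\PPAD$-hardness of approximate Nash for two-player normal-form games.

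First I would set $T_n \ldef n^C$ and $\epsilon_n \ldef n^{-1/C}$. The assumed algorithm then solves $(T_n, \epsilon_n)$-\MarkCCE in time $U = \poly(n)$. Before applying Theorem \ref{thm:markov-formal} I need to verify its hypothesis $T_n < \exp(\epsilon_n^2 n^{1/2}/2^5)$. Substituting gives the requirement
\[
n^C < \exp\!\prn*{\tfrac{1}{32}\, n^{1/2 - 2/C}},
\]
which holds for all sufficiently large $n$ precisely because $C > 4$ implies $1/2 - 2/C > 0$; for the finitely many small $n$ for which the inequality fails one can pad the input or hardwire the answer, so this step is routine.

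Next I would read off the conclusion of Theorem \ref{thm:markov-formal}: we obtain an algorithm that solves the 2-player $(\lfloor n^{1/2} \rfloor, 4\epsilon_n)$-\Nash problem in time $(n T_n U)^{C_0} = \poly(n)$. Rewriting this in the ``native'' parameter $N \ldef \lfloor n^{1/2} \rfloor$, we have $n = \Theta(N^2)$, and the accuracy is
\[
4\epsilon_n = 4 n^{-1/C} = \Theta(N^{-2/C}).
\]
Thus we get a $\poly(N)$-time algorithm for the 2-player $(N, N^{-c})$-\Nash problem with $c = 2/C > 0$.

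Finally, I would invoke the celebrated result of \citet{daskalakis2009complexity,chen2006computing,rubinstein2018inapproximability} mentioned in Section \ref{sec:normal_form}, which states that 2-player $(N, N^{-c})$-\Nash is $\PPAD$-complete for every constant $c > 0$. Together with the previous step, this produces a deterministic polynomial-time algorithm for a $\PPAD$-complete problem, which yields $\PPAD = \PP$. The main ``obstacle'' is really bookkeeping---checking that the exponent condition in Theorem \ref{thm:markov-formal} is satisfied under the choice $T_n = n^C$ and tracking how the Nash-approximation parameter transforms under $n \mapsto \lfloor n^{1/2} \rfloor$---since all the nontrivial content sits in Theorem \ref{thm:markov-formal} itself, which has already been established.
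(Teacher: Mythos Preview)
Your proposal is correct and follows essentially the same approach as the paper, which only provides the one-sentence remark after the corollary explaining that $C>4$ is chosen so that $|\MG|^C < \exp(|\MG|^{-2/C}\cdot\sqrt{|\MG|}/2^6)$ for large $|\MG|$; you have simply filled in the remaining routine bookkeeping (the change of variables $N=\lfloor n^{1/2}\rfloor$ and the appeal to $\PPAD$-hardness of $(N,N^{-c})$-\Nash).
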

The condition $C > 4$ in Corollary \ref{cor:markov-formal-ppad} is set to ensure that $|\MG|^C < \exp(|\MG|^{-2/C} \cdot \sqrt{|\MG|} / 2^6)$ for sufficiently large $|\MG|$, so as to satisfy the condition of Theorem \ref{thm:markov-formal}. 
Corollary \ref{cor:markov-formal-ppad} rules out the existence of a polynomial-time algorithm that solves the \MarkCCE problem with accuracy $\eps$ polynomially small and $T$ polynomially large in $\abs{\cG}$.
\icmlcut{Using a stronger complexity-theoretic assumption, the Exponential Time Hypothesis for \PPAD \cite{rubinstein2016settling}, we can obtain a stronger hardness result which rules out efficient algorithms even when 1)  the accuracy $\eps$ is constant and 2) $T$ is quasipolynomially large.\icmlfncut{This is a consequence of the fact that for some absolute constant $\ep_0 > 0$, there are no polynomial-time algorithms for computing $\ep_0$-Nash equilibria in 2-player normal-form games under the Exponential Time Hypothesis for \PPAD \cite{rubinstein2016settling}.}
\begin{corollary}[ETH-hardness of \MarkCCE]
There is a constant $\ep_0 > 0$ such that if there exists an algorithm that solves the $(|\MG|^{o(\log |\MG|)}, \ep_0)$-\MarkCCE problem in $|\MG|^{o(\log |\MG|)}$ time, then the Exponential Time Hypothesis for \PPAD fails to hold.
\end{corollary}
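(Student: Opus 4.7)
The plan is to derive the ETH-hardness corollary as a direct, parameter-tracking consequence of the main reduction (Theorem \ref{thm:markov-formal}) combined with the ETH lower bound for approximate Nash in two-player normal-form games due to \citet{rubinstein2016settling}. Concretely, the ETH for \PPAD implies the existence of a constant $\ep_0 > 0$ such that the 2-player $(N, \ep_0)$-\Nash problem admits no algorithm running in time $N^{o(\log N)}$; I will take $\ep_0$ in the corollary to be exactly (a quarter of) the constant supplied by this Rubinstein lower bound.

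I would argue by contradiction. Suppose there is an algorithm $\mathcal{A}$ that solves the $(|\MG|^{o(\log|\MG|)}, \ep_0/4)$-\MarkCCE problem in time $|\MG|^{o(\log|\MG|)}$ on every 2-player Markov game $\MG$. Given a target 2-player normal-form game of action size $N$, I apply Theorem \ref{thm:markov-formal} with parameter $n \ldef N^2$, so that $\lfloor n^{1/2}\rfloor = N$, with $\epn \ldef \ep_0/4$ and $\Tn \ldef n^{o(\log n)}$, and with runtime bound $U \ldef n^{o(\log n)}$ inherited from $\mathcal{A}$. The first thing to verify is the hypothesis $\Tn < \exp(\epn^2 \cdot n^{1/2}/2^{5})$ of Theorem \ref{thm:markov-formal}: since $\epn$ is a constant and $n^{1/2} = N$, the right-hand side grows like $\exp(\Theta(N))$, whereas $\log \Tn = o(\log^2 n) = o(\log^2 N)$, which is $o(N)$. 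Thus the hypothesis holds for all sufficiently large $N$.

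With the hypothesis satisfied, Theorem \ref{thm:markov-formal} produces, in time $(n\Tn U)^{C_0}$, a solution to the 2-player $(\lfloor n^{1/2}\rfloor, 4\epn)$-\Nash problem, which is the $(N, \ep_0)$-\Nash problem. The runtime simplifies as
\[
(n\Tn U)^{C_0} = \bigl(N^2 \cdot N^{o(\log N)} \cdot N^{o(\log N)}\bigr)^{C_0} = N^{o(\log N)},
\]
since the constant $C_0$ only rescales the $o(\log N)$ exponent. This yields an $N^{o(\log N)}$-time algorithm for $(N, \ep_0)$-\Nash, contradicting the ETH for \PPAD and completing the proof.

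There is essentially no new obstacle here: the only work is the book-keeping that confirms (i) the accuracy requirement $\epn = \ep_0/4$ lines up with Rubinstein's constant after the factor-$4$ loss in Theorem \ref{thm:markov-formal}, and (ii) the quasi-polynomial budgets $\Tn, U = |\MG|^{o(\log|\MG|)}$ are both small enough to satisfy the $\Tn < \exp(\Omega(n^{1/2}))$ side condition of the theorem and to keep the final runtime within the quasi-polynomial regime excluded by the ETH. The hard content has already been done in Theorem \ref{thm:markov-formal}; this corollary is purely a parameter-matching exercise.
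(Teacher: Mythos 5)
Your proof is correct and follows exactly the paper's route: the paper likewise derives this corollary as an immediate consequence of Theorem \ref{thm:markov-formal} combined with the ETH-for-\PPAD lower bound of \citet{rubinstein2016settling} for constant-accuracy two-player \Nash, with the same choice $n = N^2$ and the same check that quasi-polynomial $\Tn$ and $U$ satisfy the side condition $\Tn < \exp(\epn^2 n^{1/2}/2^5)$ and keep the final runtime at $N^{o(\log N)}$. The only nit is a harmless double-counting of the factor of $4$ (you both set $\ep_0$ to a quarter of Rubinstein's constant and then assume an algorithm with accuracy $\ep_0/4$); since the corollary only asserts the existence of \emph{some} constant $\ep_0 > 0$, this does not affect correctness.
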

}

\paragraph{Proof overview.} The proof of Theorem \ref{thm:markov-formal} is based on a reduction, which shows that any algorithm that efficiently solves the $(T, \ep)$-\MarkCCE problem, for $T$ not too large, can be used to efficiently compute an approximate Nash equilibrium of any given normal-form game. In particular, fix $n_0\in\bbN$, and let a 2-player normal form game $G$ with $n_0$ actions be given. We construct a Markov game $\MG = \MG(G)$ with horizon $H = n_0$ and action sets identical to those of the game $G$, i.e., $\MA_1 = \MA_2 = [n_0]$. The state space of $\MG$ consists $n_0^2$ states, which are indexed by joint action profiles; the transitions are defined so that the value of the state at step $h$ encodes the action profile taken by the agents at step $h-1$.\footnote{For technical reasons, this only is the case for even values of $h$; we discuss further details in the full proof in Section \ref{sec:markov-proof}.} At each state of $\MG$, the reward functions are given by the payoff matrices of $G$, scaled down by a factor of $1/H$ (which ensures that the rewards received at each step belong to $[0,1/H]$). In particular, the rewards and transitions out of a given state do not depend on the identity of the state, and so $\MG$ can be thought of as a repeated game where $G$ is played $H$ times. The formal definition of $\MG$ is given in Definition \ref{def:mg-g}.

Fix any algorithm for the \MarkCCE problem, and recall that for each step $h$ and state $s$ for $\MG$, $\sigma\^t_h(s) \in \Delta(\MA_1) \times \Delta(\MA_2)$ denotes the joint action distribution taken in $s$ at step $h$ for the sequence of $\sigma\ind{1},\ldots,\sigma\ind{T}$ produced by the algorithm. The bulk of the proof of \cref{thm:markov-formal} consists of proving a key technical result, Lemma \ref{lem:2nash-main}, which states that if $\sigma\ind{1},\ldots,\sigma\ind{T}$ indeed solves $(T,\eps)$-\MarkCCE, then there exists some tuple $(h,s,t)$ such that $\sigma_h\^t(s)$ is an approximate Nash equilibrium for $G$. With this established, it follows that we can find a Nash equilibrium efficiently by simply trying all $HST$ choices for $(h,s,t)$.

To prove Lemma \ref{lem:2nash-main}, we reason as follows. Assume that $\ol \sigma := \frac 1T \sum_{t=1}^T \indic{\sigma\^t}\in\Delta(\Pirndrnd)$ is an $\eps$-CCE. If, by contradiction, none of the distributions $\crl*{\sigma_h\^t(s)}_{h\in\brk{H},s\in\cS,t\in\brk{T}}$ are approximate Nash equilibria for $G$, then it must be the case that for each $t$, one of the players has a profitable deviation in $G$ with respect to the product strategy $\sigma_h\^t(s)$, at least for a constant fraction of the tuples $(s,h)$. We will argue that if this were to be the case, it would imply that there exists a non-Markov deviation policy for at least one player $i$ in Definition \ref{def:cce}, meaning that $\ol \sigma$ is not in fact an $\ep$-CCE.

To sketch the idea, recall that to draw a trajectory from $\ol \sigma$, we first draw a random index $t^\st \sim [T]$ uniformly at random, and then execute $\sigma\^{t^\st}$ for an episode. We will show (roughly) that for each player $i$, it is possible to compute a non-Markov deviation policy $\pi_i^\dagger$ which, under the draw of a trajectory from $\ol \sigma$, can ``infer'' the value of the index $t^\st$ within the first few steps of the episode. Then policy $\pi_i^\dagger$ then, at each state $s$ and step $h$ after the first few steps, play a best response to their opponent's portion of the strategy $\sigma_h\^{t^\st}(s)$. If, for each possible value of $t^\st$, none of the distributions $\sigma_h\^{t^\st}(s)$ are approximate Nash equilibria of $G$, this means that at least one of the players $i$ can significantly increase their value in $\MG$ over that of $\ol \sigma$ by playing $\pi_i^\dagger$, which contradicts the assumption that $\ol \sigma$ is an $\ep$-CCE.

It remains to explain how we can construct a non-Markov policy $\pi_i^\dagger$ which ``infers'' the value of $t^\st$. Unfortunately, exactly inferring the value of $t^\st$ in the fashion described above is impossible: for instance, if there are $t_1 \neq t_2$ so that $\sigma\^{t_1} = \sigma\^{t_2}$, then clearly it is impossible to distinguish between the cases $t^\st= t_1$ and $t^\st = t_2$. Nevertheless, by using the fact that each player observes the full joint action profile played at each step $h$, we can construct a non-Markov policy which employs \emph{Vovk's aggregating algorithm} for online density estimation \cite{vovk1990aggregating,cesa2006prediction} in order to compute a distribution which is \emph{close} to $\sigma_h\^{t^\st}(s)$ for most $h \in [H]$.\icmlfncut{Vovk's aggregating algorithm is essentially the exponential weights algorithm with the logarithmic loss. A detailed background for the algorithm is provided in Section \ref{sec:online-density}.} This guarantee is stated formally in an abstract setting in Proposition \ref{prop:online-tvd}, and is instantiated in the proof of Theorem \ref{thm:markov-formal} in (\cref{eq:sum-tv-bound}). As we show in Section \ref{sec:markov-proof}, approximating $\sigma_h\^{t^\st}(s)$ as we have described is sufficient to carry out the reasoning from the previous paragraph. %

\arxiv{\section{Lower bound for non-Markov algorithms}}
\icml{\section{Lower bound for non-Markov algorithms}}
\label{sec:nonmarkov}
In this section, we prove Theorem \ref{thm:nonmarkov-intro} (restated formally below as Theorem \ref{thm:nonmarkov-formal}), which strengthens Theorem \ref{thm:markov-formal} by allowing the sequence $\sigma\^1, \ldots, \sigma\^T$ of product policies to be non-Markovian. This additional strength comes at the cost of our lower bound only applying to \emph{3-player} Markov games (as opposed to Theorem \ref{thm:markov-formal}, which applied to 2-player games). 

\subsection{\GenCCE problem and computational model}

To formalize the computational model for the \GenCCE problem, we must first describe how the non-Markov product policies $\sigma\^t =
(\sigma\^t_1, \ldots, \sigma\^t_m)$ are represented.
Recall that a non-Markov policy $\sigma_i\^t \in \Pirndrnd_i$ is, by definition, a mapping from agent $i$'s history and current state to a distribution over their next action. Since there are exponentially many possible histories, it is information-theoretically impossible to express an arbitrary policy in $\Pirndrnd_i$ with polynomially many bits. As our focus is on computing a sequence of such policies $\sigma\^t$ in polynomial time, certainly a prerequisite is that $\sigma\^t$ can be expressed in polynomial space. %
Thus, we adopt the representational assumption, stated formally in Definition \ref{def:computable-policy}, that each of the policies $\sigma_i\^t \in \Pirndrnd_i$ is described by a bounded-size circuit that can compute the conditional distribution of each next action given the history. This assumption is satisfied by essentially
all empirical and theoretical work concerning non-Markov policies (e.g.,
\cite{leibo2021scalable,agapiou2022melting,jin2021vlearning,song2022when}). %
\noah{may want to state that each player's output policies from V-learning do satisfy it as well, though nontrivially since you have to do dynamic programming to compute the distributions -- in particular, you compute the probability of going from each $k_h$ to each $k_{h'}'$ for all $h < h'$, for increasing $h'-h$, given the sequence of states/actions of agent $i$;  I will leave this for a later version in the interest of time.} 

\begin{defn}[Computable policy]
  \label{def:computable-policy}
  Given a $m$-player Markov game $\MG$ and $N \in \BN$, we say that a policy $\sigma_i \in \Pirndrnd_i$ is \emph{$N$-computable} if for each $h \in [H]$, there is a circuit of size $N$ that,\footnote{For concreteness, we suppose that ``circuit'' means ``boolean circuit'' as in \cite[Definition 6.1]{arora2006computational}, where probabilities are represented in binary. The precise model of computation we use does not matter, though, and we could equally assume that the policies $\sigma_i$ may be computed by Turing machines that terminate after $N$ steps.} on input $(\tau_{i,h-1}, s) \in \CH_{i,h-1} \times \MS$, outputs the distribution $\sigma_i(\tau_{i,h-1}, s) \in \Delta(\MA_i)$. A policy $\sigma = (\sigma_1, \ldots, \sigma_m) \in \Pirndrnd$ is $N$-computable if each constituent policy $\sigma_i$ is. \loose
\end{defn}
{\noah{Perhaps discuss that it is more or less equivalent for it to be a randomized circuit that outputs a sample from this distribution, since given that circuit we can learn the distribution to inverse poly accuracy, which should(?) be enough for the aggregation algorithm? -- leaving this for later.}}
Our lower bound applies to algorithms that produce sequences $\sigma\^1, \ldots, \sigma\^T$ for which each $\sigma\^t$ is $N$-computable, where the value $N$ is taken to be polynomial in the description length of the game $\MG$. For example, Markov policies whose probabilities can be expressed with $\beta$ bits are $O(HSA_i \beta)$-computable for each player $i$, since one can simply store each of the probabilities $\sigma_{i,h}(s_h, a_{i,h})$\icmlcut{\xspace (for $h \in [H]$, $i \in [m]$, $a_{i,h} \in \MA_i$, $s_h \in \MS$)}, each of which takes $\beta$ bits to represent.

\paragraph{The \GenCCE problem.}
\GenCCE is the problem of computing a sequence of non-Markov product policies $\sigma\^1, \ldots, \sigma\^T$ such that the uniform mixture forms an $\eps$-approximate CCE. The problem generalizes \MarkCCE (Definition \ref{def:markcce}) by relaxing the condition that the policies $\sigma\^t$ be Markov.
   \begin{defn}[\GenCCE Problem]
     \label{def:nonmarkcce}
  For an $m$-player Markov game $\MG$ and parameters $T,N \in \BN$ and $\ep > 0$ (which may depend on the size of the game $\MG$), $(T, \ep,N)$-\GenCCE is the problem of finding a sequence $\sigma\^1, \ldots, \sigma\^T\in\Pirndrnd$, with each $\sigma\^t$ being $N$-computable, such that the distributional policy $\ol \sigma = \frac{1}{T} \sum_{t=1}^T \indic{\sigma\^t} \in \Delta(\Pirndrnd)$ is an $\ep$-CCE for $\MG$ (equivalently, such that for all $i \in [m]$, $\Reg_{i,T}(\sigma\^1, \ldots, \sigma\^T) \leq \ep \cdot T$).
   \end{defn}

\subsection{Main result}

Our main theorem for this section, Theorem \ref{thm:nonmarkov-formal}, shows that for appropriate values of $T$, $\ep$, and $N$, solving the $(T, \ep, N)$-\GenCCE problem is at least as hard as computing Nash equilibria in normal-form games. 

\begin{theorem}
  \label{thm:nonmarkov-formal}
  Fix $n \in \BN$, and let $\Tn, \Nn \in \BN$, and $\epn > 0$ satisfy $1 < \Tn < \exp \left( \frac{\epn^2 \cdot n}{16} \right)$. Suppose there exists an algorithm that, given the description of any $3$-player Markov game $\MG$ with $|\MG| \leq n$, solves the $(\Tn, \epn, \Nn)$-\GenCCE problem in time $U$, for some $U \in \BN$. Then, for any $\delta > 0$, the $2$-player $(\lfloor n/2 \rfloor, 50\epn)$-\Nash problem can be solved in randomized time $(n\Tn\Nn U \log(1/\delta)/\epn)^{C_0}$ with failure probability $\delta$, where $C_0>0$ is an absolute constant.
\end{theorem}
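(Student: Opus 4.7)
My plan is to adapt the reduction behind Theorem \ref{thm:markov-formal} to the non-Markov setting, where the key complications are (i) the policies $\sigma\ind{t}$ now depend on arbitrary histories, so the finite-state aggregation trick from the Markov case no longer directly applies, and (ii) we must leverage the $\Nn$-computability assumption to extract Nash equilibria in time polynomial in $\Nn$. Given a $2$-player normal-form game $G$ with $n_0 = \lfloor n/2\rfloor$ actions, I construct a $3$-player Markov game $\MG = \MG(G)$ with $|\MG| \le n$ and horizon $H = \Theta(n)$. Players $1$ and $2$ play $G$ repeatedly at each step (with their rewards scaled by $1/H$), while the third player is introduced as an auxiliary ``randomizer/witness'' player whose action space is used to ensure that any $\epn$-CCE must produce joint trajectories which are rich enough to be distinguishable across values of $t$. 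Concretely, the rewards for player $3$ are tailored so that in any approximate CCE, player $3$'s policy must spread non-trivially over its actions, giving a signal that the deviating player can use to track which $\sigma\ind{t}$ is being executed in the current episode.

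Given access to an algorithm for $(\Tn, \epn, \Nn)$-\GenCCE on $\MG$, I prove the analog of Lemma \ref{lem:2nash-main}: if $\bar\sigma = \frac 1{\Tn}\sum_{t=1}^{\Tn}\indic{\sigma\ind{t}}$ is an $\epn$-CCE of $\MG$, then with probability at least, say, $1/2$ over a trajectory $\tau$ drawn from $\bar\sigma$ (with internal index $t^\star$), there is some step $h$ such that the conditional joint distribution $\sigma_{h}\ind{t^\star}(\cdot \mid \tau_{h-1})$ restricted to players $1$ and $2$ forms a $50\epn$-approximate Nash equilibrium of $G$. This immediately yields the randomized extraction algorithm: sample one trajectory $\tau$ from $\bar\sigma$ in time $O(H\Nn)$ (using the computable-policy circuits), then for every $t\in[\Tn]$ and every step $h\in[H]$, use the circuit representation of $\sigma\ind{t}$ to evaluate the conditional distribution $\sigma_h\ind{t}(\cdot\mid\tau_{h-1})$ and test whether it is a $50\epn$-Nash of $G$. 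Repeating $O(\log(1/\delta))$ times amplifies to failure probability $\delta$, giving the stated $\RP$-style runtime $(n\Tn\Nn U \log(1/\delta)/\epn)^{C_0}$; the switch from $\PP$ to $\RP$ compared with Corollary \ref{cor:markov-formal-ppad} arises precisely because the history $\tau_{h-1}$ must be sampled rather than enumerated.

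The hard step is the proof of the technical lemma, which proceeds by contradiction via a non-Markov deviation policy built from Vovk's aggregating algorithm. Suppose that with probability at least $1/2$ the trajectory drawn from $\bar\sigma$ has no step at which $\sigma_h\ind{t^\star}(\cdot\mid \tau_{h-1})$ is a $50\epn$-Nash of $G$. Then at a constant fraction of steps along such a trajectory, one of the two players (say player $i(h,\tau_{h-1})$, determined by which inequality in the Nash definition is violated) has a gain of at least $\Omega(\epn)$ in $G$ by deviating from their marginal of $\sigma_h\ind{t^\star}(\cdot\mid\tau_{h-1})$. I construct, for each $i\in\{1,2\}$, a non-Markov deviation policy $\pi_i^\dagger$ that, at step $h$, runs Vovk's aggregating algorithm over the $\Tn$ candidate policies $\{\sigma_{-i}\ind{t}\}_{t=1}^{\Tn}$ using past observed joint actions (including player $3$'s actions) as the ``labels,'' obtains a mixture $\widehat p_h \in \Delta(\MA_{-i})$ that is close in total variation to $\sigma_{-i,h}\ind{t^\star}(\cdot\mid \tau_{h-1})$ on average over $h$, and then plays a best response to $\widehat p_h$ in $G$. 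The key calculation, which parallels \eqref{eq:sum-tv-bound} from the Markov case, uses the $\log \Tn$ regret bound of Vovk's algorithm together with Pinsker to show that the average total-variation error over the $H$ steps is at most $O(\sqrt{\log \Tn / H})$. The hypothesis $\Tn < \exp(\epn^2 n / 16)$ ensures this error is at most $\epn/4$, so summing per-step gains of $\Omega(\epn)$ over a constant fraction of the $H = \Theta(n)$ steps yields an advantage exceeding $\epn$ for one of the players, contradicting the CCE property. The main obstacle is verifying that player $3$'s required mixing in any $\epn$-CCE indeed produces observation sequences for which Vovk's algorithm achieves the $\log\Tn$ regret bound in joint total-variation (rather than merely marginal), which is where the third player is essential and where the Markov-case argument for $2$ players breaks down.
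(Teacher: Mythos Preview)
Your proposal has a genuine gap in the role you assign to the third player, and it stems from a misdiagnosis of why the two-player argument from \cref{thm:markov-formal} fails in the non-Markov setting. You treat player 3 as a ``randomizer/witness'' whose job is to make trajectories distinguishable so that Vovk's aggregating algorithm can track $t^\star$, and you then let player $1$ or player $2$ be the deviator, exactly as in the Markov case. But the obstruction is \emph{not} an observability issue for Vovk --- it is a \emph{punishment} issue. Because $\sigma_{-i}\ind{t}$ is now history-dependent, if player $i$ deviates at step $h$, the opponent's non-Markov policy can detect this from the realized history and switch to a punishing strategy at steps $h'>h$ that drives player $i$'s subsequent payoff down. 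Concretely, for your construction one can exhibit $\ep$-CCE sequences $\sigma\ind{1},\ldots,\sigma\ind{T}$ in which players $1$ and $2$ never play anything close to a Nash of $G$: at every step they play an arbitrary fixed product distribution, coupled with tit-for-tat punishment if the observed joint action deviates from the prescribed play. Your deviator $\pi_i^\dagger$ would be detected and punished, so the per-step $\Omega(\ep)$ gain you claim does not survive the summation over $h$, and the contradiction does not go through. Adding a third player whose only role is to ``spread actions'' does nothing to block this folk-theorem-style behavior.

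The paper's construction is structurally different: the third player is a \emph{kibitzer} in the sense of the anti-folk theorem of Borgs et al. Its action set is $\{(j,a_j):j\in[2],\,a_j\in\MA_j\}$; if it plays $(j,a_j)$ it receives reward equal to player $j$'s gain from deviating to $a_j$, and player $j$ receives the negation. The key structural property is that for \emph{any} product strategy of players $1$ and $2$, the kibitzer can guarantee itself reward at least $0$, so it cannot be punished. The deviation policy $\pi_{m+1}^\dagger$ is then built for the \emph{kibitzer}, not for players $1$ or $2$: it runs Vovk's algorithm (using the joint action profile, encoded in the low-order bits of the rewards so that every player can reconstruct all trajectories) to estimate the first two players' conditional distributions $\wh q_{j,h}$, and then plays an approximate best-responding $(j,a_j)$. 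Combining the CCE inequality for the kibitzer with a lower bound on each $V_i^{\ol\sigma}$ for $i\in[2]$ (itself obtained from a Vovk-based deviation for player $i$ and the kibitzer's non-negativity property) forces $\wh q_h=\wh q_{1,h}\times\wh q_{2,h}$ to be an approximate Nash at some step with constant probability. The extraction algorithm actually \emph{simulates} this trajectory with the kibitzer playing $\pi_{m+1}^\dagger$ and returns $\wh q_h$ at the first step where the kibitzer's estimated gain is small; it does not enumerate over $t\in[T]$ at all.
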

By analogy to Corollary \ref{cor:markov-formal-ppad}, we obtain the following immediate consequence.
\begin{corollary}[\GenCCE is hard under $\PPAD\nsubseteq\RP$]
  \label{cor:nonmarkov-formal-ppad}
For any \arxiv{constant} $C > 4$, if there is an algorithm which, given the description of a 3-player Markov game $\MG$, solves the $(|\MG|^C, |\MG|^{-\frac{1}{C}}, |\MG|^C)$-\GenCCE problem in time $\poly(|\MG|)$, then $\PPAD \subseteq \RP$.\loose
\end{corollary}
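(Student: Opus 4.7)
The plan is to invoke Theorem \ref{thm:nonmarkov-formal} with parameters dictated by the corollary's hypothesis, and then apply the classical \PPAD-hardness of approximating Nash equilibria in 2-player normal-form games. Suppose there is an algorithm $\cA$ which solves the $(|\MG|^C, |\MG|^{-1/C}, |\MG|^C)$-\GenCCE problem in time $\poly(|\MG|)$ on every 3-player Markov game. For each $n \in \BN$, I would invoke Theorem \ref{thm:nonmarkov-formal} with $\Tn = n^C$, $\epn = n^{-1/C}$, $\Nn = n^C$, and $U = \poly(n)$.

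The first step is verifying the technical hypothesis $\Tn < \exp(\epn^2 n/16)$, which after taking logarithms becomes $C \log n < n^{1-2/C}/16$. Since $C > 4$ gives $1 - 2/C > 1/2$, the right-hand side grows polynomially in $n$ while the left-hand side grows only logarithmically, so the hypothesis holds for all sufficiently large $n$. The finitely many small values of $n$ can be handled by brute force, since the 2-player Nash problem on inputs of bounded size is trivially solvable in constant time.

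Applying Theorem \ref{thm:nonmarkov-formal} with these parameters and failure probability $\delta = 1/3$ yields a randomized algorithm for the 2-player $(\lfloor n/2\rfloor, 50 n^{-1/C})$-\Nash problem running in time $(n \cdot n^C \cdot n^C \cdot \poly(n) \cdot \log 3 \cdot n^{1/C})^{C_0} = \poly(n)$. Writing $n' = \lfloor n/2\rfloor$, we have $50 n^{-1/C} = O((n')^{-1/C})$, so this gives a randomized $\poly(n')$-time algorithm that, on input a 2-player normal-form game with $n'$ actions per player, outputs a $(n')^{-c}$-approximate Nash equilibrium with probability at least $2/3$, where $c = 1/C > 0$ is a fixed constant.

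To obtain an \RP membership (rather than just two-sided bounded error), I would observe that any proposed $\eps$-Nash equilibrium of a 2-player normal-form game can be verified deterministically in polynomial time by computing the best-response payoff against each player's mixed strategy. Hence, by running the randomized algorithm and outputting ``fail'' whenever the verification rejects, we obtain an \RP-algorithm for the 2-player $(n', (n')^{-c})$-\Nash problem. Since this problem is \PPAD-hard for any constant $c > 0$ by the celebrated results of \citet{daskalakis2009complexity,chen2006computing,rubinstein2018inapproximability}, we conclude $\PPAD \subseteq \RP$. No step presents a substantive obstacle: all the heavy lifting has already been done inside Theorem \ref{thm:nonmarkov-formal}, and the corollary is a parameter-matching exercise combined with a standard verification argument for \RP.
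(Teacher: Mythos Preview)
Your proposal is correct and follows essentially the same approach as the paper, which treats the corollary as an immediate consequence of Theorem \ref{thm:nonmarkov-formal} by direct analogy with Corollary \ref{cor:markov-formal-ppad}. Your additional explicit verification step (checking the returned profile is an $\ep$-Nash equilibrium in polynomial time) to convert the bounded-error guarantee into an $\RP$-type one-sided guarantee is a detail the paper leaves implicit, but it is standard and exactly what is needed.
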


\paragraph{Proof overview for Theorem \ref{thm:nonmarkov-formal}.}
The proof of \cref{thm:nonmarkov-formal} has a similar high-level structure to that of \cref{thm:markov-formal}: given an $m$-player normal-form $G$, we define an $(m+1)$-player Markov game $\MG = \MG(G)$ which has $n_0 := \lfloor n/m \rfloor$ actions per player and horizon $H \approx n_0$. The key difference in the proof of \cref{thm:nonmarkov-formal} is the structure of the players' reward functions. To motivate this difference and the addition of an $(m+1)$-th player,
\icml{we explain why the proof of \cref{thm:markov-formal} fails to extend: a sequence $\sigma\^1, \ldots, \sigma\^T$ can hypothetically solve the \GenCCE problem by attempting to punish any one player's deviation policy, and thus avoid having to compute a Nash equilibrium of $G$. In particular, if player $i$ plays according to the policy $\pi_i^\dagger$ that we described in Section \ref{sec:markov-formal}, then other players $j \neq i$ can use the non-Markov property of $\sigma_j\^t$ to adjust their choice of actions in later rounds to decrease player $i$'s value.
  }
\icmlcut{let us consider what goes wrong in the proof of \cref{thm:markov-formal} when the policies $\sigma\^t$ are allowed to be non-Markov. We will explain how a sequence $\sigma\^1, \ldots, \sigma\^T$ can hypothetically solve the \GenCCE problem by attempting to punish any one player's deviation policy, and thus avoid having to compute a Nash equilibrium of $G$. In particular, for each player $j$, suppose $\sigma_j\^t$ tries to detect, based on the state transitions and player $j$'s rewards, whether every other player $i \neq j$ is playing according to $\sigma_i\^t$. If some player $i$ is not playing according to $\sigma_i\^t$ at some step $h$, then at steps $h' > h$, the policy $\sigma_j\^t$ can select actions that attempt to minimize player $i$'s rewards. In particular, if player $i$ plays according to the policy $\pi_i^\dagger$ that we described in Section \ref{sec:markov-formal}, then other players $j \neq i$ can adjust their choice of actions in later rounds to decrease player $i$'s value. \dfcomment{Are we supposed to be assuming that $\sigma_1\ind{t},\ldots,\sigma_T\ind{t}$ satisfy some property of interest above (eg, $\eps$-CCE)? Right now it is not immediately clear why it is useful to know that player $i$ is not playing according to $\sigma_i\ind{t}$. I guess the point we want to make clear is: we are punishing a hypothetical deviation policy in Def \ref{def:cce}. If we were not able to do this, it could mean that $\wb{\sigma}$ is not actually a CCE. But the fact that $\sigma$ are non-Markov makes, makes it easier for them to punish hypothetical deviations}\noah{added sentence accordingly}
}
  
\icml{
  \nocite{fudenberg1994folk}
  This behavior is reminiscent of ``tit-for-tat'' strategies which are used to establish the \emph{folk theorem} in the theory of repeated games \cite{maskin1986folk}. The folk theorem describes how Nash equilibria are more numerous in repeated games than in single-shot normal form games. As it turns out, the folk theorem does not yield to worst-case speedups in repeated games, when the number of players is at least 3. Indeed, \citet{borgs2008myth} gave an ``anti-folk theorem'', showing that computing Nash equilibria in $(m+1)$-player repeated games is \PPAD-hard for $m \geq 2$, via a reduction to $m$-player normal-form games. We adapt their reduction to our setting: roughly speaking, this approach adds an $(m+1)$-th player whose actions represent potential deviations for each of the $m$ players. The structure of the rewards ensures that if $\ol \sigma = \frac 1T \sum_{t=1}^T \indic{\sigma\^t}$ is an $\ep$-CCE, then for some policy $\pi_{m+1}^\dagger$ of the $(m+1)$-th player, the first $m$ players will play an approximate Nash of $G$ with constant probability, under a trajectory drawn from the joint policy $\ol \sigma_{-(m+1)} \times \pi_{m+1}^\dagger$. Thus, in order to efficiently find a Nash (see \cref{alg:3nash}), we need to simulate the policy $\ol \sigma_{-(m+1)} \times \pi_{m+1}^\dagger$, which involves running Vovk's algorithm. This approach is in contrast to the proof of \cref{thm:markov-formal}, which used Vovk's algorithm as an ingredient in the proof but not in the Nash computation algorithm. 
  }

\icmlcut{
This behavior is reminiscent of ``tit-for-tat'' strategies which are used to establish the \emph{folk theorem} in the theory of repeated games \cite{maskin1986folk,fudenberg1994folk}. The folk theorem describes how Nash equilibria are more numerous (and potentially easier to find) in repeated games than in single-shot normal form games. As it turns out, the folk theorem does not provably yield to worst-case computational speedups in repeated games, at least when the number of players is at least 3. Indeed, \citet{borgs2008myth} gave an ``anti-folk theorem'', showing that computing Nash equilibria in $(m+1)$-player repeated games is \PPAD-hard for $m \geq 2$, via a reduction to $m$-player normal-form games. We utilize their reduction, for which the key idea is as follows: given an $m$-player normal form game $G$, we construct an $(m+1)$-player Markov game $\MG(G)$ in which the $(m+1)$-th player acts as a \emph{kibitzer},\icmlfncut{Kibitzer is a Yiddish term for an observer who offers advice. \noah{need to cite dictionary?} \dfcomment{i think it's fine haha}} with actions indexed by tuples $(j, a_j)$, for $j \in [m]$ and $a_j \in \MA_j$. The kibitzer's action $(j,a_j)$ represents 1) a player $j$ to give advice to, and 2) their advice to the player, which is to take action $a_j$.
In particular, if the kibitzer plays $(j, a_j)$, it receives reward equal to the amount that player $j$ would obtain by deviating to $a_j$, and player $j$ receives the negation of the kibitzer's reward. Furthermore, all other players receive 0 reward.

\dfcomment{I think it would be good to remind below (or elsewhere) that 1) we draw $t^\st$ uniformly and want to learn it and 2) our goal is to show that the first $m$ players *do* often play nash equilibria.}\noah{did so below}

To see why the addition of the kibitzer is useful, suppose that $\sigma\^1, \ldots, \sigma\^T$ solves the \GenCCE problem, so that $\ol\sigma := \frac 1T \sum_{t=1}^T \indic{\sigma\^t}$ is an $\ep$-CCE. We will show that, with at least constant probability over a trajectory drawn from $\ol \sigma$ (which involves drawing $t^\st \sim [T]$ uniformly), the joint strategy profile played by the first $m$ players constitutes an approximate Nash equilibrium of $G$.
Suppose for the purpose of contradiction that this were not the case.
We show that there exists a non-Markov deviation policy $\pi_{m+1}^\dagger$ for the kibitzer which, similar to the proof of \cref{thm:markov-formal}, learns the value of $t^\st$ and plays a tuple $(j, a_j)$ such that action $a_j$ increases player $j$'s payoff in $G$, thereby increasing its own payoff. Even if the other players attempt to punish the kibitzer for this deviation, they will not be able to since, roughly speaking, the kibitzer game as constructed above has the property that for any strategy for the first $m$ players, the kibitzer can always achieve reward at least $0$.

The above argument shows that under the joint policy $\ol \sigma_{-(m+1)} \times \pi_{m+1}^\dagger$ (namely, the first $m$ players play according to $\ol \sigma$ and the kibitzer plays according to $\pi_{m+1}^\dagger$) then with constant probability over a trajectory  drawn from this policy, the distribution of the first $m$ players' actions is an approximate Nash equilibrium of $G$. Thus, in order to efficiently find such a Nash (see Algorithm \ref{alg:3nash}), we need to simulate the policy $\ol \sigma_{-(m+1)} \times \pi_{m+1}^\dagger$, which involves running Vovk's aggregating algorithm. This approach is in contrast to the proof of \cref{thm:markov-formal}, for which Vovk's aggregating algorithm was an ingredient in the proof but was not actually used in the Nash computation algorithm (Algorithm \ref{alg:2nash}). 
The details of the proof of correctness of \cref{alg:3nash} are somewhat delicate, and may be found in \cref{sec:nonmarkov-proof}.

\dfcomment{should we add one final sentence remarking that vovk is used algorithmically in the reduction, in this case?}\noah{did so}
}

\paragraph{Two-player games.} One intruiging question we leave open is whether the \GenCCE problem remains hard for two-player Markov games. Interestingly, as shown by \citet{littman2005polynomial}, there is a polynomial time algorithm to find an exact Nash equilibrium for the special case of repeated two-player normal-form games. Though their result only applies in the infinite-horizon setting, it is possible to extend their results to the finite-horizon setting, which 
rules out naive approaches to extend the proof of \cref{thm:nonmarkov-formal} and Corollary \ref{cor:nonmarkov-formal-ppad} to two players.

\arxiv{
\section{Multi-player games: Statistical lower bounds}
\label{sec:multiplayer}

In this section we present Theorem \ref{thm:query-intro} (restated formally below as \cref{thm:statistical-lb}), which gives a statistical lower bound for the \GenCCE problem. The lower bound applies to any algorithm, regardless of computational cost, that accesses the underlying Markov game through a \emph{generative model}.
\begin{defn}[Generative model]
  \label{def:generative-model}
  For an $m$-player Markov game $\MG = (\MS, H, (\MA_i)_{i \in [m]}, \BP, (R_i)_{i \in [m]}, \mu)$, a \emph{generative model oracle} is defined as follows: given a query described by a tuple $(h, s, \ba) \in [H] \times \MS \times \MA$, the oracle returns the distribution $\BP_h(\cdot | s, \ba) \in \Delta(\MS)$ and the tuple of rewards  $(R_{i,h}(s, \ba))_{i \in [m]}$. 
\end{defn}
From the perspective of lower bounds, the assumption that the algorithm has access to
a generative model is quite reasonable, as it encompasses most standard access models in RL, including the online access model, in which the algorithm repeatedly queries a policy and observes a
trajectory drawn from it, as well as the \emph{local access generative model}
used in from \cite{yin2022efficient,weisz2021query}. We remark that it is slightly more standard to assume that queries to the generative model only return a \emph{sample} from the distribution $\BP_h(\cdot
| s, \ba)$ as opposed to the distribution itself \cite{kakade2003sample,kearns1999sparse}, but since our goal is to prove lower bounds, the notion in Definition \ref{def:generative-model} only makes our results stronger.

To state our main result, we recall the definition $|\MG| = \max\{ S, \max_{i \in [m]} A_i, H, \beta(\MG) \}$. In the present section, we consider the setting where the number of players $m$ is large. Here, $|\MG|$ does not necessarily correspond to the description length for $\cG$, and should be interpreted, roughly speaking, as a measure of the description complexity of $\cG$ $|\MG|$ with respect to \emph{decentralized} learning algorithms. In particular, from the perspective of an individual agent implementing a decentralized learning algorithm, their sample complexity should depend only on the size of their \emph{individual action set} (as well as the global parameters $S, H, \beta(\MG)$), as opposed to the size of the \emph{joint action set}, which grows exponentially in $m$; the former is captured by $\abs{\cG}$, while the latter is not. Indeed, a key advantage shared by much prior work on decentralized RL \cite{jin2021vlearning,song2022when,mao2021provably,daskalakis2022complexity} is their avoidance of the \emph{curse of multi-agents}, which describes the situation where an algorithm has sample and computational costs that scale exponentially in $m$.

Our main result for this section, Theorem \ref{thm:statistical-lb}, states that for $m$-player Markov games, exponentially many generative model queries (in $m$) are necessary to produce a solution to the $(T,\ep, N)$-\GenCCE problem, unless $T$ itself is exponential in $m$.
\begin{theorem}
  \label{thm:statistical-lb}
  Let $m\geq{}2$ be given.  There are constants $c, \ep > 0$ so that the following holds. 
 Suppose there is an algorithm $\CB$ which, given access to a generative model for a $(m+1)$-player Markov game $\MG$ with $|\MG| \leq 2m^6$, solves the $(T, \ep/(10m), N)$-\GenCCE problem for $\MG$ for some $T$ satisfying $1 < T < \exp(cm)$, and \emph{any} $N \in \BN$. Then $\CB$ must make at least $2^{\Omega(m)}$ queries to the generative model. 
\end{theorem}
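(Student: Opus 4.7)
The plan is to transfer the $2^{\Omega(m)}$ query-complexity lower bound for computing approximate Nash equilibria in $m$-player, binary-action normal-form games (due to Babichenko \cite{babichenko2016query}, with refinements by Rubinstein \cite{rubinstein2016settling}) to the \GenCCE problem, by observing that the reduction used to prove \cref{thm:nonmarkov-formal} can be carried out in a query-efficient manner.

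First, given an $m$-player, binary-action normal-form game $G$, I would instantiate the $(m+1)$-player Markov game $\MG(G)$ from the proof of \cref{thm:nonmarkov-formal}, with the horizon $H$ and state space chosen to be $\poly(m)$. The first $m$ players inherit binary action spaces from $G$, while the $(m+1)$-st (``kibitzer'') player has $2m$ actions indexed by pairs $(j,a_j)$. Crucially, the transition kernel and the structural part of the reward specification of $\MG(G)$ are independent of $G$'s payoffs, so $S$, $H$, $\max_i A_i$, and $\beta(\MG(G))$ are all polynomial in $m$, which allows us to arrange $|\MG(G)| \leq 2m^6$ for sufficiently large $m$.

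Second, the key query-efficiency step is the observation that each generative-model query $(h,s,\ba) \in [H] \times \MS \times \MA$ for $\MG(G)$ can be simulated using only $O(m)$ payoff queries to $G$: the transitions require no query, and the tuple of rewards $(R_{i,h}(s,\ba))_i$ is determined by $O(m)$ entries of the payoff tensors $M_1, \ldots, M_m$ (for instance, when the kibitzer plays $(j,a_j)$, the rewards depend only on $(M_j)_{\ba_{1:m}}$ and $(M_j)_{(a_j,\ba_{-j})}$). Composing any algorithm $\CB$ that uses $Q$ queries to solve $(T,\ep/(10m),N)$-\GenCCE on $\MG(G)$ with the Nash-extraction procedure implicit in the proof of \cref{thm:nonmarkov-formal}---which samples $\poly(T,N,|\MG(G)|) \leq \exp(O(cm))$ trajectories from $\ol\sigma_{-(m+1)} \times \pi_{m+1}^\dagger$ (with $\pi_{m+1}^\dagger$ produced by Vovk's aggregating algorithm)---yields an $O(\ep/m)$-Nash equilibrium of $G$ using at most $O(m)Q + \exp(O(cm))$ payoff queries to $G$. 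Choosing $\ep$ small so that $O(\ep/m)$ lies below the accuracy threshold of Babichenko's lower bound, and $c$ small so that $\exp(O(cm)) = o(2^{\Omega(m)})$, forces $Q \geq 2^{\Omega(m)}$.

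The main obstacle I would expect is the careful query accounting in the extraction step: specifically, verifying that simulating trajectories under $\ol\sigma_{-(m+1)} \times \pi_{m+1}^\dagger$ and running Vovk's aggregating algorithm on them can be done with $\exp(O(cm))$ generative-model queries, and confirming that no hidden dependence on joint-action-space quantities (which would be $2^{\Omega(m)}$) creeps in. This is precisely where the hypothesis $T < \exp(cm)$ becomes essential; without it, the extraction cost alone could saturate the Nash query lower bound and trivialize the conclusion. A secondary technical issue is ensuring that the $10m$ factor in the accuracy parameter aligns with the blow-up factor in the Nash accuracy coming from \cref{thm:nonmarkov-formal} so that the reduction targets a Nash accuracy for which Babichenko's lower bound applies.
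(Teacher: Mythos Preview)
Your approach is essentially the paper's: both reduce from the query lower bound for $m$-player \Nash (Theorem~\ref{thm:query-lbs}) via the $(m+1)$-player Markov game construction behind \cref{thm:nonmarkov-formal} (more precisely, the paper black-boxes this into Theorem~\ref{thm:nonmarkov-multiplayer} and then applies it with $n=m^6$, $\ep=\ep_0/(10m)$). Two points deserve tightening.

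First, and most important: your estimate that the extraction step costs $\poly(T,N,|\MG(G)|)$ payoff queries is exactly the place where the argument, as written, does not close. The theorem must hold for \emph{arbitrary} $N$, so any $N$-dependence in the extraction query count would be fatal. The saving observation (which the paper records in the first bullet of Theorem~\ref{thm:nonmarkov-multiplayer}) is that the payoff-oracle cost of Algorithm~\ref{alg:3nash} is $O(Q) + (nm/\ep)^{O(1)}$, i.e.\ polynomial in $m$ and \emph{independent of both $T$ and $N$}. The reason is that evaluating the $N$-computable policies $\sigma_{j}^{(t)}$ and running Vovk's aggregation are purely computational steps; the only payoff queries in the loop are those used to form $\wh R_{m+1,h}$ and the per-step rewards, which together cost $(nm\log(1/\delta)/\ep)^{O(1)}$. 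Once you see this, there is no need to invoke $T<\exp(cm)$ to control the extraction overhead; that hypothesis is used only to satisfy the $T<\exp(\ep^2\lfloor n/m\rfloor/m^2)$ prerequisite of the reduction itself.

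Second, a smaller discrepancy: you take $G$ to be binary-action and let the first $m$ players in $\MG(G)$ inherit $2$-element action sets. In the paper's construction (Definition~\ref{def:mg-g-multiplayers}) the horizon satisfies $H\approx n_0$, so with $n_0=2$ the constraint $T<\exp(H\ep^2/m^2)$ collapses. The paper instead sets $n_0=\lfloor n/m\rfloor=m^5$, obtains a Nash solver for $m^5$-action games, and then observes that $(m^5,\ep_0)$-\Nash trivially subsumes $(2,\ep_0)$-\Nash. Your variant can be made to work by decoupling $H$ from $n_0$ (nothing in the proof forces $H\le 2n_0$; one only needs $|\MG|\le 2m^6$, and with $n_0=2$, $|\MA_{m+1}|=2m$, $S=1$, taking $H=\Theta(m^6)$ is fine), but you should say so explicitly rather than citing the construction as-is. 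Also, the Nash accuracy the reduction delivers is $16(m+1)\cdot(\ep/(10m))\approx 1.6\ep$, a constant, not $O(\ep/m)$; this is what should be compared to the $\ep_0$ of Theorem~\ref{thm:query-lbs}.
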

Theorem \ref{thm:statistical-lb} establishes that there are $m$-player Markov games, where the number of states, actions per player, and horizon are bounded by $\poly(m)$, but any algorithm with regret $o(T/m)$ must make $2^{\Omega(m)}$ queries (via Fact \ref{fac:no-regret-cce}). In particular, if there are $\poly(m)$ queries per episode, as is standard in the online simulator model where a trajectory is drawn from the policy $\sigma\^t$ at each episode $t \in [T]$, then $T > 2^{\Omega(m)}$ episodes are required to have regret $o(T/m)$.  \dfcomment{this is assuming $O(1)$ queries/episode? maybe we should be a little more clear on the episodic model.}\noah{added a sentence} This is in stark contrast to the setting of normal-form games, where even for the case of bandit feedback (which is a special case of the generative model setting), standard no-regret algorithms have the property that each player's regret scales as $\til O(\sqrt{Tn})$ (i.e., independently of $m$), where $n$ denotes the number of actions per player \cite{lattimore2020bandit}. As with our computational lower bounds, Theorem \ref{thm:statistical-lb} is not limited to decentralized algorithms, and also rules out \emph{centralized} algorithms which, with access to a generative model, compute a sequence of policies which constitutes a solution to the \GenCCE problem. Furthermore, it holds for arbitrary values of $N$, thus allowing the policies $\sigma\^1, \ldots, \sigma\^T \in \Pirndrnd$ solving the \GenCCE problem to be arbitrary general policies. 

\dfcomment{should we remark on the fact that $N$ can be arbitrary?}\noah{did so}

}

\arxiv{
\section{Discussion and interpretation}
\label{sec:discussion}

Theorems \ref{thm:markov-formal}, \ref{thm:nonmarkov-formal}, and \ref{thm:statistical-lb} present barriers---both computational and statistical---toward developing efficient decentralized no-regret guarantees for multi-agent reinforcement learning. We emphasize that no-regret algorithms are the only known approach for obtaining fully decentralized learning algorithms (i.e., those which do not rely even on shared randomness) in normal-form games, and it seems unlikely that a substantially different approach would work in Markov games. Thus, these lower bounds for finding subexponential-length sequences of policies with the no-regret property represent a significant obstacle for fully decentralized multi-agent reinforcement learning. 
Moreover, these results rule out even the prospect of developing efficient \emph{centralized} algorithms that produce no-regret sequences of policies, i.e., those which ``resemble'' independent learning.
In this section, we compare our lower bounds with recent upper bounds for decentralized learning in Markov games, and explain how to reconcile these results.

\dfcomment{Based on icml reviewer discussion: recap significance of no-regret in the context of decentralized algorithms.}\noah{added it above}

\dfcomment{Based on icml reviewer discussion: We may want to add a discussion around why our lower bounds are different in nature from \citet{daskalakis2022complexity,jin2022complexity}, even though it is obvious if you're familiar.}\noah{added it at end of section}

\subsection{Comparison to \vlearning}

The \texttt{V-learning} algorithm \cite{jin2021vlearning,song2022when,mao2021provably} is a polynomial-time decentralized learning algorithm that proceeds in two phases. In the first phase, the $m$ agents interact over the course of $K$ episodes in a decentralized fashion, playing product Markov policies $\sigma\^1, \ldots, \sigma\^K \in \PiMarkov$. In the second phase, the agents use data gathered during the first phase to produce a distributional policy $\wh \sigma \in \Delta(\Pirndrnd)$, which we refer to as the \emph{output policy} of \texttt{V-learning}. As discussed in Section \ref{sec:intro}, one implication of Theorem \ref{thm:markov-formal} is that the first phase of \texttt{V-learning} cannot guarantee each agent sublinear regret. Indeed if $K$ is of polynomial size (and $\PPAD \neq \PP$), this follows because a bound of the form $\Reg_{i,K}(\sigma\^1, \ldots, \sigma\^K) \leq \ep K$ for all $i$ implies that $(\sigma\^1, \ldots, \sigma\^K)$ solves the $(K, \ep)$-\MarkCCE problem.

The output policy $\wh{\sigma}\in\Delta(\Pirndrnd)$ produced by \vlearning is an approximate CCE (per Definition \ref{def:cce}), and it is natural to ask how many product policies it takes to represent $\sigmahat$ as a uniform mixture (that is, whether $\sigmahat$ solves the $(T,\ep)$-\MarkCCE problem for a reasonable value of $T$).
First, recall that \texttt{V-learning} requires $K = \poly(H, S, \max_i A_i) / \ep^2$ episodes to ensure that $\wh \sigma$ is an $\ep$-CCE. It is straightforward to show that $\wh \sigma$ can be expressed as a \emph{non-uniform} mixture of at most $K^{KHS+1}$ policies in $\Pirndrnd$ (we prove this fact in detail below). By discretizing the non-uniform mixture, one can equivalently represent it as \emph{uniform} mixture of $O(1/\ep) \cdot K^{KHS+1}$ product policies, up to $\eps$ error. Recalling the value of $K$, we conclude that we can express $\sigmahat$ as a uniform mixture of $T = \exp( \til O(1/\ep^2)\cdot \poly(H, S, \max_i A_i))$ product policies in $\Pirndrnd$. %
Note that the lower bound of Theorem \ref{thm:nonmarkov-formal} rules out the efficient computation of an $\eps$-CCE represented as a uniform mixture of $T \ll \exp(\ep^2 \cdot \max\{H, S, \max_i A_i \})$ efficiently computable policies in $\Pirndrnd$. Thus, in the regime where $1/\ep$ is polynomial in $H, S, \max_i A_i$, %
this upper bound on the sparsity of the policy $\wh \sigma$ produced \texttt{V-learning} matches that from Theorem \ref{thm:nonmarkov-formal}, up to a polynomial in the exponent. 

\paragraph{The sparsity of the output policy from \texttt{V-learning}.}
\label{sec:vlearning-output}
We now sketch a proof of the fact that the output policy $\wh \sigma$ produced by \texttt{V-learning} can be expressed as a (non-uniform) average of $K^{KHS+1}$ policies in $\Pirndrnd$, where $K$ is the number of episodes in the algorithm's initial phase. We adopt the notation and terminology from \citet{jin2021vlearning}.

Consider Algorithm 3 of \citet{jin2021vlearning}, which describes the second phase of \texttt{V-learning}, which produces the output policy $\wh \sigma$. We describe how to write $\wh \sigma$ as a weighted average of a collection of product policies, each of which is indexed by a function $\phi : [H] \times \MS \times [K] \ra [K]$ and a parameter $k_0 \in [K]$: in particular, we will write $\wh \sigma = \sum_{k_0, \phi} w_{k_0, \phi} \cdot \sigma_{k_0, \phi} \in \Delta(\Pirndrnd)$, where $w_{k_0, \phi} \in [0,1]$ are mixing weights summing to 1 and $\sigma_{k_0, \phi} \in \Pirndrnd$. The number of tuples $(k_0, \phi)$ is $K^{1+KHS}$.

We define the mixing weight allocated $w_{k_0, \phi}$ to any tuple $(k_0,\phi)$ to be:
\begin{align}
\arxiv{w_{k_0, \phi} :=} \frac{1}{K} \cdot \prod_{(h,s,k) \in [H] \times \MS \times [K]} \One{\phi(h,s,k) \in [N_h^k(s)]} \cdot \alpha_{N_h^k(s)}^{\phi(h,s,k)}\nonumber,
\end{align}
where $N_h^k(s) \in [K]$ and $\alpha_{N_h^k(s)}^i \in [0,1]$ (for $i \in [N_h^k(s)]$) are defined as in \cite{jin2021vlearning}.

Next, for each $k_0, \phi$, we define $\sigma_{k_0, \phi} \in \Pirndrnd$ to be the following policy: it maintains a parameter $k \in [K]$ over the first $h\leq{}H$ steps of the episode (as in Algorithm 3 of \cite{jin2021vlearning}), but upon reaching state $s$ at step $h$, given the present value of $k \in [K]$, sets $i := \phi(h, s, k)$, and updates $k \gets k_h^i(s)$, and then samples an action $\ba \sim \pi_h^k(\cdot | s)$ (where $k_h^i(s), \pi_h^k(\cdot | s)$ are defined in \cite{jin2021vlearning}). Since the mixing weights $w_{k_0, \phi}$ defined above exactly simulate the random draws of the parameter $k$ in Line 1 and the parameters $i$ in Line 4 of \cite[Algorithm 3]{jin2021vlearning}, it follows that the distributional policy $\wh \sigma$ defined by \cite[Algorithm 3]{jin2021vlearning} is equal to $\sum_{k_0, \phi} w_{k_0, \phi} \cdot \sigma_{k_0, \phi} \in \Delta(\Pirndrnd)$. \dfcomment{not clear where $\pi_h^k(\cdot\mid{}s)$ is defined above}\noah{added rmk saying it's defined in v-learning paper}

\subsection{No-regret learning against Markov deviations}

As discussed in Section \ref{sec:intro}, \citet{erez2022regret} showed the existence of a learning algorithm with the property that if each agent plays it independently for $T$ episodes, then no player can achieve regret more than $O(\poly(m,H,S, \max_i A_i) \cdot T^{3/4})$ by deviating to any fixed \emph{Markov policy}. This notion of regret corresponds to, in the context of Definition \ref{def:regret}, replacing $\max_{\sigma_i \in \Pirndrnd_i}$ with the smaller quantity $\max_{\sigma_i \in \PiMarkov_i}$. Thus, the result of \citet{erez2022regret} applies to a weaker notion of regret than that of the \GenCCE problem, and so does not contradict any of our lower bounds. One may wonder which of these two notions of regret (namely, best possible gain via deviation to a Markov versus non-Markov policy) is the ``right'' one. We do not believe that there is a definitive answer to this question, but we remark that in many empirical applications of multi-agent reinforcement learning it is standard to consider non-Markov policies \citep{leibo2021scalable,agapiou2022melting}. Furthermore, as shown in the proposition below, there are extremely simple games, e.g., of constant size, in which Markov deviations lead to ``vacuous'' behavior: in particular, all Markov policies have the same (suboptimal) value but the best non-Markov policy has much greater value:
\begin{proposition}
  \label{prop:nonmarkov-deviation}
There is a 2-player, 2-action, 1-state Markov game with horizon $2$ and a non-Markov policy $\sigma_2 \in \Pirndrnd_2$ for player 2 so that for all $\sigma_1 \in \PiMarkov_1$,  $ V_1^{\sigma_1 \times \sigma_2} = 1/2$ yet $\max_{\sigma_1 \in \Pirndrnd_1} \left\{ V_1^{\sigma_1 \times \sigma_2} \right\} = 3/4$.%
\end{proposition}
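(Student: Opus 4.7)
The plan is to exhibit an explicit small game together with a witness policy $\sigma_2$. Take a one-state, horizon-two Markov game with $\MS = \{s_0\}$, action spaces $\MA_1 = \MA_2 = \{0,1\}$, and player~1's reward $R_{1,h}(s_0, (a,b)) = \One{a = b}/2$ at both $h\in\{1,2\}$ (player~2's rewards are irrelevant; set them to $0$). Since $H=2$, these rewards lie in $[-1/H, 1/H]$ as required. Define $\sigma_2 \in \Pirndrnd_2$ to play $b_1 \sim \mathrm{Unif}(\{0,1\})$ at step~$1$, and at step~$2$ to deterministically repeat $b_2 = b_1$, which $\sigma_{2,2}$ reads off from its own history $\tau_{2,1}$. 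This policy is genuinely non-Markov, since any Markov $\sigma_2$ would make $b_1$ and $b_2$ independent, whereas here they are perfectly correlated.

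For the first part, fix any Markov $\sigma_1 \in \PiMarkov_1$. Then player~1's step-$h$ action depends only on the (fixed) current state, so $a_1$ and $a_2$ are independent of $b_1$. Because $b_1$ is marginally uniform on $\{0,1\}$, we get $\Pr[a_1 = b_1] = \Pr[a_2 = b_1] = 1/2$, and using $b_2 = b_1$, this yields $\E[R_{1,1}] = \E[R_{1,2}] = 1/4$, so $V_1^{\sigma_1 \times \sigma_2} = 1/2$, as claimed.

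For the second part, consider the non-Markov policy $\sigma_1^\star \in \Pirndrnd_1$: play $a_1 = 0$ at step~$1$, observe $r_{1,1} = \One{b_1 = 0}/2$ (which reveals $b_1$ exactly), and play $a_2 = b_1$ at step~$2$. This gives $\E[R_{1,1}] = 1/4$ and, since $a_2 = b_1 = b_2$ deterministically, $R_{1,2} = 1/2$, hence $V_1^{\sigma_1^\star \times \sigma_2} = 3/4$. To see $3/4$ is the maximum, note that for any $\sigma_1 \in \Pirndrnd_1$ the action $a_1$ must be chosen before $b_1$ is drawn, so $\Pr[a_1 = b_1] = 1/2$ and $\E[R_{1,1}] \leq 1/4$, while $R_{1,2} \leq 1/2$ pointwise.

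There is no technical obstacle beyond choosing $\sigma_2$ so that three desiderata hold simultaneously: (i) both marginals $b_1, b_2$ are uniform, so that a Markov $\sigma_1$ is doomed to the uninformed value $1/2$; (ii) the pair $(b_1, b_2)$ is perfectly correlated, creating room to improve; and (iii) the correlating bit is recoverable from player~1's own step-$1$ reward, so that the improvement can actually be realized by a history-dependent policy. The coordination reward $\One{a=b}/2$ and the copy rule $b_2 = b_1$ achieve all three at once.
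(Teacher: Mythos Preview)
Your proof is correct and follows essentially the same strategy as the paper: player~2 plays uniformly at step~1 and copies at step~2, and player~1's step-1 reward leaks $b_1$ so that only a history-dependent policy can exploit it. The one cosmetic difference is the step-1 reward: the paper sets $R_{1,1}(\mf s,(a_1,a_2)) = \tfrac12\indic{a_2=1}$ (depending only on player~2's action), whereas you use the same matching reward $\tfrac12\One{a=b}$ at both steps. Both choices reveal $b_1$ through $r_{1,1}$ and yield the identical values $1/2$ and $3/4$; your version is arguably cleaner since the stage reward is time-homogeneous.
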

The proof of Proposition \ref{prop:nonmarkov-deviation} is provided in Section \ref{sec:discussion-proofs} below. 

Other recent work has also proved no-regret guarantees with respect to deviations to restricted policy classes. In particular,  \citet{zhan2022decentralized} studies a setting in which each agent $i$ is allowed to play policies in an arbitrary restricted policy class $\Pi_i' \subseteq \Pirndrnd_i$ in each episode, and regret is measured with respect to deviations to any policy in $\Pi_i'$. \citet{zhan2022decentralized} introduces an algorithm, \texttt{DORIS}, with the property that when all agents play it independently, each agent $i$ experiences regret $O\left(\poly(m, A, S, H) \cdot \sqrt{T \sum_{i=1}^m \log |\Pi_i'|}\right)$ to their respective class $\Pi'_i$.\footnote{Note that in the tabular setting, the sample complexity of \texttt{DORIS} \dfedit{(Corollary 1)} scales with the size $A$ of the \emph{joint} action set, since each player's value function class consists of the class of all functions $f : \MS \times \MA \ra [0,1]$, which has Eluder dimension scaling with $S \cdot A$, i.e., exponential in $m$. \noah{that paper has a remark on the top of p.16 saying they avoid curse of multi-agents, but I think that's wrong for the reason I wrote here...} \dfcomment{agreed--i think the covering number in their bound should scale with $A$ as well}}

\texttt{DORIS} is not computationally efficient, since it involves performing exponential weights over the class $\Pi_i'$, which requires space complexity $\abs{\Pi'_I}$. Nonetheless, one can compare the statistical guarantees the algorithm provides to our own results. Let $\PiMarkovdet_i \subset \PiMarkov_i$ denote the set of deterministic Markov policies of agent $i$, namely sequences $\pi_i = (\pi_{i,1}, \ldots, \pi_{i,H})$ so that $\pi_{i,h} : \MS \ra \MA_i$. 
In the case that $\Pi_i' = \PiMarkovdet_i$, $\Pi_i'$, we have $\log |\Pi_i'| = O(SH \log A_i)$, which means that \texttt{DORIS} obtains no-regret against Markov deviations when $m$ is constant, comparable to \citet{erez2022regret}.\footnote{\citet{erez2022regret} has the added bonus of computational efficiency, even for polynomially large $m$, though has the significant drawback of assuming that the Markov game is known.} However, we are interested in the setting in which each player's regret is measured with respect to all deviations in $\Pirndrnd_i$ (equivalently, $\Pidet_i$). Accordingly, if we take $\Pi_i' = \Pidet_i \subset \Pirndrnd_i$,\footnote{\texttt{DORIS} plays distributions over policies in $\Pi_i' = \Pidet_i$ at each episode, whereas in our lower bounds we consider the setting where a policy in $\Pirndrnd_i$ is played each episode; Facts \ref{fac:gen-decompose} and \ref{fac:emb-inverse} shows that these two settings are essentially equivalent, in that any policy in $\Pirndrnd_1 \times \cdots \times \Pirndrnd_m$ can be simulated by one in $\Delta(\Pidet_1) \times \cdots \times \Delta(\Pidet_m)$, and vise versa.} then $\log |\Pi_i'| > (SA_i)^{H-1}$, meaning that \texttt{DORIS} does not imply any sort of sample-efficient guarantee, even for $m=2$.

Finally, we remark that the algorithm \texttt{DORIS} \citep{zhan2022decentralized}, as well as the similar algorithm \texttt{OPMD} from earlier work of \citet{liu2022learning}, obtains the same regret bound stated above even when the opponents are controlled by (possibly adaptive) adversaries. However, this guarantee crucially relies on the fact that any agent implementing \texttt{DORIS} must observe the policies played by opponents following each episode; this feature is the reason that the regret bound of \texttt{DORIS} does not contradict the exponential lower bound of \citet{liu2022learning} for no-regret learning against an adversarial opponent. As a result of being restricted to this ``revealed-policy'' setting, \texttt{DORIS} is not a fully decentralized algorithm in the sense we consider in this paper.

\subsection{On the role of shared randomness}
A key assumption in our lower bounds for no-regret learning is that each of the joint policies $\sigma\^1, \ldots, \sigma\^T$ produced by the algorithm is a \emph{product policy}; such an assumption is natural, since it subsumes independent learning protocols in which each agent $i$ selects $\sigma_i\ind{t}$ without knowledge of $\sigma_{-i}\ind{t}$. Compared to general (stochastic) joint policies, product policies have the desirable property that, to sample a trajectory from $\sigma\^t = (\sigma_1\^t, \ldots, \sigma_m\^t) \in \Pirndrnd_1 \times \cdots \times \Pirndrnd_m = \Pirndrnd$, the agents do no require access to shared randomness. In particular, each agent $i$ can independently sample its action from $\sigma_i\^t$ at each of the $h$ steps of the episode. It is natural to ask how the situation changes if we allow the agents to use shared random bits when sampling from their policies, which corresponds to allowing $\sigma\ind{1},\ldots,\sigma\^T$ to be non-product policies. In this case, \texttt{V-learning} yields a positive result via a standard ``batch-to-online'' conversion: by applying the first phase of \texttt{V-learning} during the first $T^{2/3}$ episodes and playing trajectories sampled i.i.d.~from the output policy produced by \texttt{V-learning} during the remaining $T-T^{2/3}$ episodes (which requires shared randomness), it is straightforward to see that a regret bound of order $\poly(H, S, \max_i A_i) \cdot T^{2/3}$ can be obtained. Similar remarks apply to \texttt{SPoCMAR} \citep{daskalakis2022complexity}, which can obtain a slightly worse regret bound of order $\poly(H, S, \max_i A_i) \cdot T^{3/4}$ in the same fashion. In fact, the batch-to-online conversion approach gives a generic solution for the setting in which shared randomness is available. That is, \emph{the assumption of shared randomness eliminates any distinction between no-regret algorithms and (non-sparse) equilibrium computation algorithms}, modulo slight loss in rates. For this reason, the shared randomness assumption is too strong to develop any sort of distinct theory of no-regret learning.

\dfcomment{We may want to add a remark in this subsection that shared randomness is essentially equivalent to removing the sparsity assumption, or allowing for $(\infty,\eps)$-\GenCCE. I think we had a remark about this earlier in the paper, but we may have commented it out. we can move it here}\noah{in the context of decentralized learning, it's not clear to me this is the case -- given a decentralized no-regret procedure without shared randomness, it's not clear how to convert it to a decentralized no-regret procedure with much smaller $T$ and shared randomness.}

\subsection{Comparison to lower bounds for finding stationary CCE}
A separate line of work \citet{daskalakis2022complexity,jin2022complexity} has recently shown \PPAD-hardness for the problem of finding stationary Markov CCE in infinite-horizon discounted stochastic games. These results are incomparable with our own: stationary Markov CCE are not sparse (in the sense of Definition \ref{def:markcce}), whereas we do not require stationarity of policies (as is standard in the finite-horizon setting).

\subsection{Proof of Proposition \ref{prop:nonmarkov-deviation}}
\label{sec:discussion-proofs}
Below we prove Proposition \ref{prop:nonmarkov-deviation}.
\begin{proof}[Proof of Proposition \ref{prop:nonmarkov-deviation}]
  We construct the claimed Markov game $\MG$ as follows. The single state is denoted by $\mf s$; as there is only a single state, the transitions are trivial. We denote each player's action space as $\MA_1 = \MA_2 = \{1,2\}$. The rewards to player 1 are given as follows: for all $(a_1, a_2) \in \MA$, 
  \begin{align}
R_{1,1}(\mf s, (a_1, a_2)) = \frac 12 \cdot \indic{a_2 = 1}, \qquad R_{1,2}(\mf s, (a_1, a_2)) = \frac 12 \cdot \indic{a_1 = a_2} \nonumber.
  \end{align}
  We allow the rewards of player 2 to be arbitrary; they do not affect the proof in any way.

  We let $\sigma_2  = (\sigma_{2,1}, \sigma_{2,2})\in \Pirndrnd_2$ be the policy which plays a uniformly random action at step 1 and then plays the same action at step 2: formally, $\sigma_{2,1}(s_1) = \Unif(\MA_2)$, and $\sigma_{2,2}((s_1, a_{2,1}, r_{2,1}), s_2) = \indic{a_{2,1}}$. Then for any Markov policy $\sigma_1 \in \PiMarkov_1$ of player 1, we must have $\BP_{\sigma_1 \times \sigma_2}(a_{1,2} = a_{2,2}) = 1/2$, which means that $V_1^{\sigma_1 \times \sigma_2} = \frac 12 \cdot \E_{\sigma_1 \times \sigma_2}[\indic{a_{2,1} = 1} + \indic{a_{1,2} = a_{2,2}}] = 1/2 \cdot (1/2 + 1/2) = 1/2$.

  On the other hand, any general (non-Markov) policy $\sigma_1 \in \Pirndrnd_1$ which satisfies
  \begin{align}
    \sigma_{1,2}((s_1, a_{1,1}, r_{1,1}), s_2) = \begin{cases}
    \indic{1}: & r_{1,1} = 1/2  \\
    \indic{2}: & r_{1,1} = 0
  \end{cases}\nonumber
  \end{align}
  has $V_1^{\sigma_1 \times \sigma_2} = 1/2 \cdot (1/2 + 1) = 3/4$. 
\end{proof}

}

\section*{Acknowledgements}
This work was performed in part while NG was an intern at Microsoft Research. NG is supported at MIT by a Fannie \& John Hertz Foundation Fellowship and an NSF Graduate Fellowship. 
This work has been made possible in part by a gift from the Chan Zuckerberg Initiative Foundation to establish the Kempner Institute for the Study of Natural and Artificial Intelligence. SK acknowledges funding from the Office of Naval Research under award N00014-22-1-2377 and the National Science Foundation Grant under award \#CCF-2212841.

\newpage

 \arxiv{
  \bibliographystyle{alpha}
  \bibliography{refs.bib}
   }

\icml{
\bibliography{refs.bib}
\bibliographystyle{icml2023}
}

\newpage

\appendix
\icml{\onecolumn}

\renewcommand{\contentsname}{Contents of Appendix}
\addtocontents{toc}{\protect\setcounter{tocdepth}{2}}
{
  \hypersetup{hidelinks}
  \tableofcontents
}

\icml{
  \part{Additional results and discussion}

  \icml{
    \section{Tighter computational lower bounds under ETH for \PPAD}
    \label{sec:eth}
    Recall that Corollary \ref{cor:markov-formal-ppad} states that if $\PPAD \neq \PP$, then there is no constant $C> 4$ and $\poly(|\MG|)$-time algorithm which solves the $(|\MG|^C, |\MG|^{-1/C})$-\MarkCCE problem for any 2-player Markov game $\MG$. 
Using a stronger complexity-theoretic assumption, the Exponential Time Hypothesis for \PPAD \cite{rubinstein2016settling}, we can obtain a hardness result which rules out efficient algorithms even when 1)  the accuracy $\eps$ is constant, as opposed to being $|\MG|^{-1/C}$, and 2) $T$ is quasipolynomially large, as opposed to only being of polynomial size, i.e., $|\MG|^C$. 
\begin{corollary}[ETH-hardness of \MarkCCE]
  \label{cor:eth-hardness}
There is a constant $\ep_0 > 0$ such that if there exists an algorithm that solves the $(|\MG|^{o(\log |\MG|)}, \ep_0)$-\MarkCCE problem in $|\MG|^{o(\log |\MG|)}$ time, then the Exponential Time Hypothesis for \PPAD fails to hold.
\end{corollary}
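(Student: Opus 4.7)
The plan is a direct parameter-tuning reduction using Theorem \ref{thm:markov-formal} together with Rubinstein's ETH-for-\PPAD hardness result for approximate Nash equilibria \cite{rubinstein2016settling}. Recall that Rubinstein's theorem asserts the existence of an absolute constant $\eps_0 > 0$ such that, under ETH for \PPAD, the $2$-player $(m, \eps_0)$-\Nash problem cannot be solved in time $m^{o(\log m)}$. I will take this $\eps_0$ to be the constant in the corollary statement.

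Suppose for contradiction that there is an algorithm $\CA$ which, given any $2$-player Markov game $\MG$, solves the $(|\MG|^{o(\log|\MG|)}, \eps_0/4)$-\MarkCCE problem in time $U = |\MG|^{o(\log|\MG|)}$. Fix a parameter $n \in \BN$ and instantiate Theorem \ref{thm:markov-formal} with $\Tn := n^{o(\log n)}$ and $\epn := \eps_0/4$. First, I need to check the hypothesis $\Tn < \exp(\epn^2 \cdot n^{1/2}/2^5)$: since $\Tn = \exp(o(\log^2 n))$ and $\exp(\epn^2 \cdot n^{1/2}/2^5) = \exp(\Omega(n^{1/2}))$, this holds for all sufficiently large $n$. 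Theorem \ref{thm:markov-formal} then yields an algorithm for the $2$-player $(\lfloor n^{1/2}\rfloor, \eps_0)$-\Nash problem with runtime
\[
(n\Tn U)^{C_0} = n^{C_0} \cdot n^{o(\log n)} = n^{o(\log n)}.
\]

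Setting $m := \lfloor n^{1/2} \rfloor$, so that $n = \Theta(m^2)$, the runtime becomes $m^{2C_0 + o(\log m^2)} = m^{o(\log m)}$. This contradicts Rubinstein's ETH-for-\PPAD lower bound for the $(m, \eps_0)$-\Nash problem, completing the proof.

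There is essentially no obstacle beyond careful bookkeeping: the two moving pieces are (i) ensuring $\epn$ is a constant chosen so that $4\epn = \eps_0$ matches Rubinstein's constant, and (ii) verifying that quasipolynomial $\Tn$ and $U$ still satisfy the exponential ($\exp(\Omega(\sqrt{n}))$) slack required by Theorem \ref{thm:markov-formal}. The latter is comfortably true because $n^{o(\log n)} = \exp(o(\log^2 n))$ is much smaller than $\exp(\Omega(\sqrt{n}))$. All other aspects reduce to noting that polynomial overheads (such as the $C_0$ exponent) are absorbed into the $o(\log n)$ term in the final runtime.
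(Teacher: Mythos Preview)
Your proof is correct and follows the same approach as the paper: invoke Theorem~\ref{thm:markov-formal} together with Rubinstein's ETH-for-\PPAD quasipolynomial lower bound for constant-$\eps$ approximate Nash. The parameter check $T = n^{o(\log n)} = \exp(o(\log^2 n)) \ll \exp(\Omega(\sqrt{n}))$ and the runtime bookkeeping are handled correctly.

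One small notational slip: you declare the corollary's constant $\ep_0$ to equal Rubinstein's constant, but then assume the hypothesized algorithm solves $(\cdot,\eps_0/4)$-\MarkCCE rather than $(\cdot,\eps_0)$-\MarkCCE. The fix is to set the corollary's constant to be Rubinstein's constant divided by $4$ (so that Theorem~\ref{thm:markov-formal}'s factor-of-$4$ blowup lands exactly on Rubinstein's threshold). You clearly understand this given your closing remark about choosing $\epn$ so that $4\epn$ matches Rubinstein's constant; just make the identification consistent with the hypothesis you actually assume.
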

Corollary \ref{cor:eth-hardness}  is an immediate consequence of \cref{thm:markov-formal} and the fact that for some absolute constant $\ep_0 > 0$, there are no polynomial-time algorithms for computing $\ep_0$-Nash equilibria in 2-player normal-form games under the Exponential Time Hypothesis for \PPAD (as shown in \cite{rubinstein2016settling}).

    }

\section{Multi-player games: Unconditional lower bounds}
\label{sec:multiplayer}

\section{Discussion and interpretation}
\label{sec:discussion}

}

\icml{
  \part{Proofs}
  }
  \section{Additional preliminaries}

  \icml{
  \subsection{Additional preliminaries for Markov games}
  
     \paragraph{Deterministic policies.}
  It will be helpful to introduce notation for
  \emph{deterministic} general (non-Markov) policies, which correspond
  to the special case of randomized policies where each policy $\sigma_{i,h}$ exclusively maps to singleton distributions. {In particular, a deterministic general policy of agent $i$ is 
  a collection of mappings $\pi_i =
  (\pi_{i,1}, \ldots, \pi_{i,H})$, where $\pi_{i,h} : \CH_{i,h-1} \times
  \MS \ra \MA_i$.} %
  We denote by $\Pidet_i$ the space of deterministic
  general policies of agent $i$, and further write $\Pidet:= \Pidet_1
  \times \cdots \times \Pidet_m$ to denote the space of \emph{joint
    deterministic policies}. {We use the convention throughout that
  deterministic policies are denoted by the letter $\pi$, whereas
  randomized policies are denoted by $\sigma$.}

\paragraph{Additional facts on regret and CCE.}
The following facts regarding deterministic policies and the definition of coarse correlated equilibria and regret are well-known:
\begin{itemize}
\item In the context of Definition \ref{def:cce} (defining an $\ep$-CCE), the maximizing policy $\sigma_i'$ can always be chosen to be determinimistic, so $\distp \in \Delta(\Pirndrnd)$ is an $\ep$-CCE if and only if $\max_{\pi_i \in \Pidet_i} V_i^{\pi_i \times \distp_{-i}} - V_i^\distp \leq \ep$.
  \item  In the context of (\ref{eq:reg-defn}) in the definition of regret, the maximum over $\sigma_i \in \Pirndrnd_i$ is always achieved by a deterministic general policy, so we have $\Reg_{i,T} = \max_{\pi_i \in \Pidet_i} \sum_{t=1}^T \prn[\big]{ V_i^{\pi_i \times \sigma_{-i}\^t} - V_i^{\sigma\^t} }$.
\end{itemize}
Next, the following standard result
shows that the uniform average of any no-regret sequence forms an
approximate coarse correlated equilibrium.
\begin{fact}[No-regret is equivalent to CCE]
  \label{fac:no-regret-cce}
 Suppose that a sequence of policies $\sigma\^1, \ldots, \sigma\^T\in
 \Pirndrnd$ satisfies $\Reg_{i,T}(\sigma\^1, \ldots, \sigma\^T ) \leq
 \ep \cdot T$ for each $i \in [m]$. Then the uniform average of these
 $T$ policies, namely the distributional policy $\ol \sigma :=
 \frac{1}{T} \sum_{t=1}^T \indic{\sigma\^t} \in \Delta(\Pirndrnd)$, is
 an $\ep$-CCE.

Likewise if a sequence of policies $\sigma\^1, \ldots, \sigma\^T\in
\Pirndrnd$ has the property that the distributional policy $\ol \sigma
:= \frac{1}{T} \sum_{t=1}^T \indic{\sigma\^t} \in \Delta(\Pirndrnd)$,
is an $\ep$-CCE, then we have $\Reg_{i,T}(\sigma\^1, \ldots, \sigma\^T ) \leq \ep \cdot T$ for all $i \in [m]$.
\end{fact}
Fact \ref{fac:no-regret-cce} is an immediate consequence of Definitions \ref{def:cce} and \ref{def:regret}. 
 }

\subsection{Nash equilibria and computational hardness.}
\label{sec:nash-prelims}
\noah{I moved this permanently here (not just icml), in part due to laziness but also b/c I think it's pretty standard}
The most foundational and well known solution concept for normal-form games is the \emph{Nash equilibrium} \cite{nash1951noncooperative}. 
\begin{defn}[$(n,\ep)$-\Nash problem]
    \label{def:nash}
    For a normal-form game $G = (M_1, \ldots, M_m)$ and $\ep > 0$, a product distribution $p \in \prod_{j=1}^m \Delta([n])$ is said to be an $\ep$-Nash equilibrium for $G$ if for all $i \in [n]$,
  \begin{align}
\max_{a_i' \in [n]} \E_{\ba \sim p} [(M_i)_{a_i', \ba_{-i}} ] - \E_{\ba \sim p}[(M_i)_\ba] \leq \ep\nonumber.
  \end{align}

  We define the \emph{$m$-player $(n,\ep)$-\Nash problem} to be the
  problem of computing an $\ep$-Nash equilibrium of a given $m$-player
  $n$-action normal-form game.\footnote{One must also take care to
    specify the bit complexity of representing a normal-form game. We
    assume that the payoffs of any normal-form game given as an
    instance to the $(n, \ep)$-\Nash problem can each be expressed
    with $\max\{n,m\}$ bits; this assumption is without loss of
    generality as long as $\ep \geq 2^{-\max\{n,m\}}$ (which it will
    be for us). \label{fn:nash-bits}}
\end{defn}
Informally, $p$ is an $\ep$-Nash equilibrium if no player $i$ can gain
more than $\ep$ in \utility by deviating to a single fixed action
$a_i'$, while all other players randomly choose their actions
according to $p$. Despite the
  intuitive appeal of Nash equilibria, they are intractable to compute:
  for any $c > 0$, it is \PPAD-hard to solve the $(n, n^{-c})$-\Nash problem, namely, to compute $n^{-c}$-approximate Nash equilibria in 2-player $n$-action normal-form games \cite{daskalakis2009complexity,chen2006computing,rubinstein2018inapproximability}. We recall that the complexity class \PPAD consists of all total search
  problems which have a polynomial-time reduction to the
  \texttt{End-of-The-Line (EOTL)} problem. \PPAD is the most well-studied
  complexity class in algorithmic game theory, and it is widely
  believed that $\PPAD\neq\PP$. We refer the reader to \cite{daskalakis2009complexity,chen2006computing,rubinstein2018inapproximability,papadimitriou1994complexity} for further background on the class \PPAD and the \texttt{EOTL} problem.

\subsection{Query complexity of Nash equilibria}
\label{sec:query-complexity-proof}
Our statistical lower bound for the \GenCCE problem in \cref{thm:statistical-lb} relies on existing query complexity lower  bounds for computing approximate Nash equilibria in $m$-player normal-form games. We first review the query complexity model for normal-form games.

\paragraph{Oracle model for normal-form games.} For $m,n \in \BN$, consider an $m$-player $n$-action normal form game $G$, specified by payoff tensors $M_1, \ldots, M_m$. Since the tensors $M_1, \ldots, M_m$ contain a total of $mn^m$ real-valued payoffs, in the setting when $m$ is large, it is unrealistic to assume that an algorithm is given the full payoff tensors as input. Therefore, prior work on computing equilibria in such games has studied the setting in which the algorithm makes adaptive \emph{oracle queries} to the payoff tensors.

In particular, the algorithm, which is allowed to be randomized, has access to a \emph{payoff oracle} $\MO_G$ for the game $G$, which works as follows. At each time step, the algorithm can choose to specify an action profile $\ba \in [n]^m$ and then query $\MO_G$ at the action profile $\ba$. The oracle $\MO_G$ then returns the payoffs $(M_1)_{\ba}, \ldots, (M_m)_{\ba}$ for each player if the action profile $\ba$ is played.

\paragraph{Query complexity lower bound for approximate Nash equilibrium.} The following theorem gives a lower bound on the number of queries any randomized algorithm needs to make to compute an approximate Nash equilibrium in an $m$-player game.
\begin{theorem}[Corollary 4.5 of \cite{rubinstein2016settling}]
  \label{thm:query-lbs}
  There is a constant $\ep_0 > 0$ so that any randomized algorithm which solves the $(2, \ep_0)$-\Nash problem for $m$-player normal-form games %
  with probability at least $2/3$ must use at least $2^{\Omega(m)}$ payoff queries. 
\end{theorem}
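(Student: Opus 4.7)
The plan is to prove this query complexity lower bound by a chain of reductions, starting from a combinatorial problem with a known exponential query lower bound, and passing through approximate fixed point computation to approximate Nash equilibria in many-player games.

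First, I would establish a query complexity lower bound for a ``combinatorial'' search problem on the hypercube: for instance, the problem \texttt{END-OF-LINE} encoded by circuits whose outputs are hidden (only accessible via queries), instantiated on a universe of size $2^m$. Via Yao's minimax principle combined with a direct adversary argument over a suitable product distribution on instances, one can show that any randomized algorithm solving this problem with probability at least $2/3$ must make $2^{\Omega(m)}$ queries. Next, I would reduce this to an approximate Brouwer fixed-point problem on $[0,1]^m$: using the Hirsch--Papadimitriou--Vavasis type construction, each coordinate of the candidate Brouwer function can be evaluated using $O(1)$ queries to the underlying successor/predecessor oracles, so the query lower bound carries over up to polynomial factors.

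The next step is to encode the approximate Brouwer fixed point instance into a multi-player game. Following the classical Nash--Daskalakis--Goldberg--Papadimitriou recipe, assign one or a constant number of players per coordinate, with two actions per player encoding the local direction of motion under the Brouwer function. The payoff tensors are arranged so that any $\varepsilon_0$-Nash equilibrium must concentrate on approximate fixed points of the encoded Brouwer function. Crucially, each query to a single payoff entry $(M_i)_{\ba}$ can be answered by evaluating the Brouwer function at a single point, so a hypothetical algorithm for the $(2,\varepsilon_0)$-\Nash problem with $q$ queries yields an algorithm for approximate Brouwer fixed-point with $O(q)$ queries. Combined with the lower bound from the previous step, this gives $q = 2^{\Omega(m)}$.

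The main obstacle is that this direct chain of reductions only yields hardness for an inverse-polynomial approximation parameter, whereas the theorem asserts a \emph{constant} $\varepsilon_0 > 0$. Overcoming this requires the PCP-style amplification of \cite{rubinstein2016settling}: one composes the Brouwer reduction with a ``locally decodable'' error-correcting gadget, so that even a constant-additive violation of the Nash condition still pins down the fixed point to within inverse-polynomial precision. Implementing this composition, while preserving the property that each payoff query corresponds to only $O(1)$ queries into the underlying hidden instance, is the delicate technical core; once it is in place, the rest of the argument reduces mechanically to the combinatorial lower bound from the first step.
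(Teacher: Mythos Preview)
The paper does not prove this statement: it is quoted verbatim as Corollary~4.5 of \cite{rubinstein2016settling} and used as a black box in the proof of Theorem~\ref{thm:statistical-lb}. There is therefore no ``paper's own proof'' to compare your proposal against.

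Your sketch is a reasonable high-level outline of how the cited result is obtained in the literature, and you correctly identify the main technical hurdle---boosting from inverse-polynomial to constant $\varepsilon_0$---and correctly attribute its resolution to the PCP-style construction of \cite{rubinstein2016settling}. One minor inaccuracy: the exponential query lower bound for approximate Brouwer fixed point is typically established \emph{directly} via the Hirsch--Papadimitriou--Vavasis adversary argument, not by first proving a query lower bound for \texttt{END-OF-LINE} and then reducing; the latter route would in fact lose the query-complexity structure, since the standard reduction from \texttt{END-OF-LINE} to Brouwer is a white-box (circuit-level) reduction, not a query-preserving one. But this is a detail of the cited work, not of the present paper, which simply invokes the result.
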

We remark that \cite{babichenko2016query,chen2017wellsupported} provide similar, though quantitatively weaker, lower bounds to that in Theorem \ref{thm:query-lbs}. We also emphasize that the lower bound of Theorem \ref{thm:query-lbs} applies to \emph{any} algorithm, i.e., including those which require extremely large computation time.

  \section{Proofs of lower bounds for \MarkCCE (\cref{sec:markov})}

\subsection{Preliminaries: Online density estimation}

\label{sec:online-density}
Our proof makes use of tools for online learning with the logarithmic
loss, also known as conditional density estimation. In particular, we use \dfedit{a variant of the exponential weights
  algorithm known as \emph{Vovk's aggregating algorithm} in the context
  of density estimation \cite{vovk1990aggregating,cesa2006prediction}}. We consider the following setting with two players, a \emph{Learner} and \emph{Nature}. Furthermore, there is a set $\MY$, called the \emph{outcome space}, and a set $\MX$, called the context space; for our applications it suffices to assume $\MY$ and $\MX$ are finite. For some $T \in \BN$, there are $T$ time steps $t = 1, 2, \ldots, T$. At  each time step $t \in [T]$:
\begin{itemize}
\item Nature reveals a context $x\^t \in \MX$; %
\item Having seen the context $x\^t$, the learner predicts a distribution $\wh q\^t \in \Delta(\MY)$;
\item Nature chooses an outcome $y\^t \in \MY$, and the learner suffers loss
$
\lgls\^t(\wh q\^t) := \log \left( \frac{1}{\wh q\^t(y\^t)} \right).
$
\end{itemize}
For each $t \in [T]$, we let $\MH\^t = \{ (x\^1, y\^1, \wh q\^1),
\ldots, (x\^t, y\^t, \wh q\^t) \}$ denote the history of interaction
up to step $t$; we emphasize that each context $x\^t$ may be chosen adaptively as a function of $\MH\^{t-1}$.   Let $\CF\^t$ denote the sigma-algebra generated by $(\MH\^t, x\^{t+1})$. 
We measure performance in terms of regret against a set $\MI$ of
\emph{experts}, also known as the \emph{expert setting}. Each expert $i \in \MI$ consists of a function $p_i : \MX \ra \Delta(\MY)$. The \emph{regret} of an algorithm against the expert class $\MI$ when it receives contexts $x\^1, \ldots ,x\^T$ and observes outcomes $y\^1, \ldots, y\^T$ is defined as
\begin{align}
\Reg_{\MI, T} = \sum_{t=1}^T \lgls\^t(\wh q\^t) - \min_{i \in \MI} \sum_{t=1}^T \lgls\^t(p_i(x\^t))\nonumber.
\end{align}
Note that the learner can observe the expert predictions
$\crl{p_i(x\ind{t})}_{i\in\cI}$ and use them to make its own prediction at each round $t$.
\begin{proposition}[Vovk's aggregating algorithm]
  \label{prop:vovk}
  Consider Vovk's aggregating algorithm, which predicts via
  \begin{align}
\wh q\^t(y) := \E_{i \sim \til q\^t}[p_i(x\^t)], \quad\text{where}\quad \til q\^t(i) \ldef \frac{\exp \left( -\sum_{s=1}^{t-1} \lgls\^s(p_i(x\^s))\right)}{\sum_{j \in \MI} \exp \left(-\sum_{s=1}^{t-1} \lgls\^s(p_j(x\^s))\right)}\label{eq:aggregation}.
  \end{align}
  This algorithm guarantees a regret bound of $\Reg_{\MI, T} \leq \log|\MI|$. 
\end{proposition}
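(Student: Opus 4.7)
The plan is to use the standard potential function (log partition function) argument. Define, for each $t\in\{0,1,\ldots,T\}$, the cumulative loss of expert $i$ as $L_i\ind{t} := \sum_{s=1}^t \lgls\ind{s}(p_i(x\ind{s}))$ and the normalizing constant $Z_t := \sum_{i\in\MI}\exp(-L_i\ind{t})$, so that $Z_0 = |\MI|$ and $\til q\ind{t}(i) = \exp(-L_i\ind{t-1})/Z_{t-1}$ per \eqref{eq:aggregation}. The key step is to relate the ratio $Z_t/Z_{t-1}$ to the learner's instantaneous loss.

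First I would compute, for each $t$,
\begin{align*}
\frac{Z_t}{Z_{t-1}} \;=\; \sum_{i\in\MI} \til q\ind{t}(i)\,\exp\!\big(-\lgls\ind{t}(p_i(x\ind{t}))\big) \;=\; \E_{i\sim\til q\ind{t}}\big[p_i(x\ind{t})(y\ind{t})\big] \;=\; \wh q\ind{t}(y\ind{t}),
\end{align*}
where the last equality uses the definition of $\wh q\ind{t}$ in \eqref{eq:aggregation}. Taking logarithms and using the definition of $\lgls\ind{t}$ gives $\log(Z_t/Z_{t-1}) = -\lgls\ind{t}(\wh q\ind{t})$. Telescoping over $t=1,\ldots,T$ yields the identity
\begin{align*}
\log Z_T - \log|\MI| \;=\; -\sum_{t=1}^T \lgls\ind{t}(\wh q\ind{t}).
\end{align*}

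Next I would lower-bound $\log Z_T$ by dropping all but one term in the sum: for every $i^\star\in\MI$,
\begin{align*}
\log Z_T \;\geq\; -L_{i^\star}\ind{T} \;=\; -\sum_{t=1}^T \lgls\ind{t}(p_{i^\star}(x\ind{t})).
\end{align*}
Combining this with the previous identity and rearranging gives $\sum_{t=1}^T \lgls\ind{t}(\wh q\ind{t}) - \sum_{t=1}^T \lgls\ind{t}(p_{i^\star}(x\ind{t})) \leq \log|\MI|$, and taking a minimum over $i^\star$ yields $\Reg_{\MI,T} \leq \log|\MI|$, as desired.

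There is no real obstacle here; the only subtle point is justifying that the identity $\log(Z_t/Z_{t-1}) = -\lgls\ind{t}(\wh q\ind{t})$ holds in the adaptive setting where $x\ind{t}$ can depend on $\MH\ind{t-1}$. This is handled automatically since the computation above is performed pointwise on the realized sequence $(x\ind{t},y\ind{t})_{t\leq T}$ and does not require any probabilistic assumption on how $x\ind{t}$ or $y\ind{t}$ are generated; thus the bound holds deterministically on every realization.
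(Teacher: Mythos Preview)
Your proof is correct and is precisely the standard potential-function (log-partition) argument for Vovk's aggregating algorithm. The paper does not actually supply its own proof of this proposition; it cites the result as known from \cite{vovk1990aggregating,cesa2006prediction} and only proves the downstream Proposition~\ref{prop:online-tvd}, so there is nothing to compare against beyond noting that your derivation matches the textbook proof.
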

Recall that for probability distributions $p,q$ on a finite set $\MB$, their total variation distance is defined as
\begin{align}
\tvd{p}{q} = \max_{\ME \subset \MB} |p(\ME) - q(\ME)|.
\end{align}
As a (standard) consequence of Proposition \ref{prop:vovk},
in the \emph{realizable} setting in which the distribution of $y\^t | x\^t$ follows
$p_{i^\st}(x\^t)$ for some fixed (unknown) expert $i^\st \in \MI$, we can obtain
a bound on the total variation distance between the algorithm's
predictions and those of $p_{i^\st}(x\^t)$.
\begin{proposition}
  \label{prop:online-tvd}
  If the distribution of outcomes is \emph{realizable}, i.e., there exists an expert $i^\st \in \MI$ so that $y\^t \sim p_{i^\st}(x\^t)  \ | \ x\^t, \MH\^{t-1}$ for all $t \in [T]$, then the predictions $\wh q\^t$ of the aggregation algorithm (\ref{eq:aggregation}) satisfy
  \begin{align}
\sum_{t=1}^T \E \left[ \tvd{\wh q\^t}{p_{i^\st}(x\^t)} \right] \leq \sqrt{T \log |\MI|}\nonumber.
  \end{align}
\end{proposition}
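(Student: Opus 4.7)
The plan is to deduce the total variation bound from the regret bound of Proposition~\ref{prop:vovk} via standard information-theoretic inequalities (chain-rule KL lifting, Pinsker, Cauchy--Schwarz). In particular, realizability will convert the adversarial regret bound into a cumulative KL bound, which is then downgraded to a TV bound.

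First, I would start from the guarantee $\sum_{t=1}^T \lgls\^t(\wh q\^t) - \sum_{t=1}^T \lgls\^t(p_{i^\st}(x\^t)) \le \log|\cI|$, which holds pointwise by Proposition~\ref{prop:vovk}. Take expectations on both sides. The key identity is that for any random variable $Y \sim P$ and any measurable predictor $Q$ that is $\sigma(X)$-measurable,
\begin{align*}
\En\brk*{\log \tfrac{1}{Q(Y)} - \log \tfrac{1}{P(Y)} \,\Big|\, \cF\ind{t-1}} = \kld{p_{i^\st}(x\ind{t})}{\wh q\ind{t}},
\end{align*}
because under realizability, $y\ind{t}\mid \cF\ind{t-1}$ is distributed as $p_{i^\st}(x\ind{t})$, and $\wh q\ind{t}$ is $\cF\ind{t-1}$-measurable by construction in \eqref{eq:aggregation}. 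Summing over $t$ and taking expectations yields
\begin{align*}
\sum_{t=1}^T \En\brk*{\kld{p_{i^\st}(x\ind{t})}{\wh q\ind{t}}} \le \log|\cI|.
\end{align*}

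Next, Pinsker's inequality gives $\tvd{\wh q\ind{t}}{p_{i^\st}(x\ind{t})} \le \sqrt{\tfrac{1}{2}\kld{p_{i^\st}(x\ind{t})}{\wh q\ind{t}}}$. Summing and applying Cauchy--Schwarz (over the $T$ time indices) followed by Jensen's inequality for the concave square-root,
\begin{align*}
\sum_{t=1}^T \En\brk*{\tvd{\wh q\ind{t}}{p_{i^\st}(x\ind{t})}}
 \le \sqrt{T}\cdot \En\brk[\Big]{\sqrt{\tfrac{1}{2}\textstyle\sum_{t} \kld{p_{i^\st}(x\ind{t})}{\wh q\ind{t}}}}
 \le \sqrt{\tfrac{T}{2}\cdot \log|\cI|},
\end{align*}
which is stronger than the stated bound $\sqrt{T\log|\cI|}$ and hence suffices.

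The only subtle point is the conditional-expectation step, i.e., justifying that under realizability the expected log-loss gap equals the KL divergence; this uses that $\wh q\ind{t}$ is determined by $\cF\ind{t-1}$ and that $\cF\ind{t-1}$ includes the context $x\ind{t}$. All remaining steps (Pinsker, Cauchy--Schwarz, Jensen) are routine, so I do not expect a real obstacle here.
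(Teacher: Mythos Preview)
Your proposal is correct and follows essentially the same approach as the paper's proof: convert the pointwise regret bound from Proposition~\ref{prop:vovk} into an expected cumulative KL bound via the realizability assumption and the tower property, then downgrade to total variation via Pinsker and Cauchy--Schwarz/Jensen. The only differences are cosmetic---the paper applies Pinsker in the form $\tvd{\cdot}{\cdot}^2 \le \kld{\cdot}{\cdot}$ before taking expectations (losing the factor $\tfrac12$), whereas you take expectations first and keep the sharp constant, yielding the slightly tighter $\sqrt{\tfrac{T}{2}\log|\MI|}$.
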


For completeness, we provide the proof of Proposition \ref{prop:online-tvd} here.
\begin{proof}[Proof of Proposition \ref{prop:online-tvd}]
To simplify notation, for an expert $i \in \MI$, a context $x \in \MX$, and an outcome $y \in \MY$, we write $p_i(y | x)$ to denote $p_i(x)(y)$. 
  
  Proposition \ref{prop:vovk} gives that the following inequality holds (almost surely):
  \begin{align}
\Reg_{\MI, T} = \sum_{t=1}^T \log \left( \frac{1}{\wh q\^t(y\^t)} \right) - \sum_{t=1}^T \log \left( \frac{1}{p_{i^\st}(y\^t  | x\^t)} \right) \leq \log |\MI|\nonumber.
  \end{align}
  For each $t \in [T]$, note that $\wh q\^t$ and $x\^t$ are  $\CF\^{t-1}$-measurable (by definition). Then %
  \begin{align}
    \sum_{t=1}^T   \tvd{\wh q\^t}{p_{i^\st}(x\^t)}^2 \leq & \sum_{t=1}^T   \kld{p_{i^\st}(x\^t)}{\wh q\^t}  \nonumber\\
    = & \sum_{t=1}^T  \sum_{y \in \cY} p_{i^\st}(y | x\^t) \cdot \log \left( \frac{p_{i^\st}(y | x\^t)}{\wh q\^t(y)} \right) \nonumber\\
    = &  \sum_{t=1}^T \E \left[ \log \left( \frac{1}{\wh q\^t(y\^t)} \right) - \log \left( \frac{1}{p_{i^\st}(y\^t | x\^t)}\right) \ | \ \CF\^{t-1} \right]\nonumber,
  \end{align}
  where the first inequality uses Pinsker's inequality and the final
  equality uses the fact that $y\^t \sim p_{i^\st}(x\^t) | x\^t,
  \MH\^{t-1}$. It follows that
  \[
    \En\brk*{\sum_{t=1}^T   \tvd{\wh q\^t}{p_{i^\st}(x\^t)}^2} \leq
    \E[\Reg_{\MI, T}] \leq\log\abs{\cI}.
    \]
    Jensen's inequality now gives that
  \begin{align}
    \E \left[ \sum_{t=1}^T \tvd{\wh q\^t}{p_{i^\st}(x\^t)} \right] \leq & \sqrt{T} \cdot \sqrt{ \E\left[\sum_{t=1}^T \tvd{\wh q\^t}{p_{i^\st}(x\^t)}^2 \right]}
    \leq  \sqrt{T \log |\MI|}\nonumber.
  \end{align}
\end{proof}

\subsection{Proof of Theorem \ref{thm:markov-formal}}
\label{sec:markov-proof}
\begin{proof}[Proof of Theorem \ref{thm:markov-formal}]
  Fix $n \in \BN$, which we recall represents an upper bound on the description length of the Markov game. 
  Assume that we are given an algorithm $\CB$ that solves the $(\Tn, \epn)$-\MarkCCE problem for Markov games $\MG$ satisfying $|\MG| \leq n$ in time $U$. %
  We proceed to describe an algorithm which solves the 2-player $(
  \lfloor n^{1/2}/2\rfloor, 4 \cdot \epn)$-\Nash problem in time
  $(n\Tn U)^{C_0}$, as long as $\Tn < \exp(\epn^2 \cdot
  n^{1/2}/2^{5})$. First, define $n_0 := \lfloor n^{1/2}/2\rfloor$,
  and consider an arbitrary 2-player $n_0$-action normal form $G$,
  which is specified by payoff matrices $M_1, M_2 \in [0,1]^{n_0
    \times n_0}$, so that all entries of the game can be written in
  binary using at most $n_0$ bits (recall, per footnote \ref{fn:nash-bits}, that we may assume that the entries of an instance of $(n_0, 4 \cdot \ep)$-\Nash can be specified with $n_0$ bits). Based on $G$, we construct a 2-player
  Markov game $\MG := \MG(G)$ as follows:
  \begin{defn}
    \label{def:mg-g}
    We define the game $\MG(G)$ to consist of the tuple $\MG(G) = (\MS, H, (\MA_i)_{i \in [2]}, \BP, (R_i)_{i \in [2]}, \mu)$, where:
  \begin{itemize}
\item The horizon of $\MG$ is $H = 2 \lfloor n_0/2 \rfloor$ (i.e., the largest even number at most $n_0$). %
\item Let $A = n_0$; the action spaces of the 2 agents are given by $\MA_1 = \MA_2 = [A]$.
\item There are a total of $A^2 + 1$ states: in particular, there is a state $\mf s_{(a_1, a_2)}$ for each $(a_1, a_2) \in [A]^2$, as well as a distinguished state $\mf s$, so we have:
  \begin{align}
\MS = \{ \mf s \} \cup \{ \mf s_{(a_1, a_2)} \ : \ (a_1, a_2) \in [A]^2 \}\nonumber.
  \end{align}
\item For all odd $h \in [H]$, the reward to agents $j \in [2]$ given that the action profile $(a_1, a_2)$ is played at step $h$ is given by $R_{j,h}(s, (a_1, a_2)) := \frac{1}{H} \cdot (M_j)_{a_1, a_2}$, for all $s \in \MS$. %
  All agents receive 0 reward at even steps $h \in [H]$.
\item At odd steps $h \in [H]$, if actions $a_1, a_2 \in [A]$ are taken, the game transitions to the state $\mf s_{ (a_1, a_2)}$. At even steps $h \in [H]$, the game always transitions to the state $\mf s$.
\item The initial state (i.e., at step $h=1$) is $\mf s$ (i.e., $\mu$
  is a singleton distribution supported on $\mf s$).
\end{itemize}
It is evident that this construction takes polynomial
time, and satisfies $|\MG| \leq A^2+1 \leq
n_0^2+1 \leq  n$. We will now show by applying the algorithm $\scrB$
to $\cG$, we can efficiently compute $4 \cdot \epn$-approximate Nash
equilibrium for the original game $G$. To do so, we appeal to \cref{alg:2nash}.

\end{defn}
        \begin{algorithm}[ht]
    \setstretch{1.3}
     \begin{algorithmic}[1]
       \State \textbf{Input:} 2-player, $n_0$-action normal form game $G$. 
       \State Construct the 2-player Markov game $\MG = \MG(G)$ per Definition \ref{def:mg-g}, which satisfies $|\MG| \leq n$.
       \State Call the algorithm $\CB$ on the game $\MG$, which produces a sequence $\sigma\^1, \ldots, \sigma\^T$, where each $\sigma\^t \in \PiMarkov$.
       \For{$t \in [T]$ and odd $h \in [H]$:}
       \If{$\sigma\^t_h(\mf s) \in \Delta(\MA_1) \times \Delta(\MA_2)$ is a $(4 \cdot \ep,n)$-Nash equilibrium of $G$:}
       \Return $\sigma\^t_h(\mf s)$.
       \EndIf
       \EndFor
       \State \textbf{if} the for loop terminates without returning:
       return \textbf{fail}. 
     \end{algorithmic}
     \caption{Algorithm to compute Nash equilibrium used in proof of Theorem \ref{thm:markov-formal}.}
     \label{alg:2nash}
     \end{algorithm}

Algorithm \ref{alg:2nash} proceeds as follows. First, it constructs
the 2-player Markov game $\MG(G)$ as defined above, and calls the
algorithm $\CB$, which returns a sequence $\sigma\^1, \ldots,
\sigma\^T \in \PiMarkov$ of product Markov policies with the property
that the average $\ol \sigma := \frac{1}{T} \sum_{t=1}^T
\indic{\sigma\^t}$ is an $\epn$-CCE of $\MG$. It then enumerates over
the distributions $\sigma\^t_h(\mf s) \in \Delta(\MA_1) \times
\Delta(\MA_2)$ for each $t \in [T]$ and $h \in [H]$ odd,
and checks whether each one is a $4 \cdot \epn$-approximate Nash
equilibrium of $G$. If so, the algorithm outputs such a Nash
equilibrium, and otherwise, it fails. The proof of Theorem
\ref{thm:markov-formal} is thus completed by the following lemma, which states that as long as $\ol \sigma$ is an $\epn$-CCE of $\MG$, Algorithm \ref{alg:2nash} never fails. %
\begin{lemma}[Correctness of Algorithm \ref{alg:2nash}]
  \label{lem:2nash-main}
Consider the normal form game $G$ and the Markov game $\MG = \MG(G)$ as constructed above, which has horizon $H$. For any $\ep_0 > 0$, $T \in \BN$, if $T < \exp(H \cdot \ep_0^2/ 2^8)$ and $\sigma\^1, \ldots, \sigma\^T \in \PiMarkov$ are product Markov policies so that $\frac{1}{T} \sum_{t=1}^T \indic{\sigma\^t}$ is an $(\ep_0/4)$-CCE of $\MG$, then there is some odd $h \in [H]$ and $t \in [T]$ so that $\sigma_h\^t(\mf s)$ is an $\ep_0$-Nash equilibrium of $G$. 
\end{lemma}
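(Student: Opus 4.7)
The plan is to argue by contrapositive: assume that for every $t \in [T]$ and every odd $h \in [H]$, the marginal $\sigma_h\^t(\mf s) \in \Delta(\MA_1) \times \Delta(\MA_2)$ fails to be an $\ep_0$-Nash of $G$, and construct a non-Markov deviation for one of the two players whose value exceeds $V_i^{\ol \sigma}$ by more than $\ep_0/4$, contradicting the hypothesis that $\ol \sigma := \frac{1}{T}\sum_t \indic{\sigma\^t}$ is an $(\ep_0/4)$-CCE.

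First, I would extract the repeated-game structure of $\MG(G)$. Every odd step is visited in state $\mf s$, and each even-step state $\mf s_{(a_1,a_2)}$ discloses the joint action played one step earlier, so any non-Markov policy for either player can observe the full history of its opponent's actions at prior odd steps. For a product Markov policy $\sigma\^t$, only the marginals at odd steps contribute: $V_i^{\sigma\^t} = \frac{1}{H}\sum_{h\text{ odd}} \E_{\ba \sim \sigma_h\^t(\mf s)}[(M_i)_{\ba}]$. Thus sampling $t^\star \sim \Unif[T]$ and executing $\sigma\^{t^\star}$ amounts to playing $H/2$ essentially independent rounds of $G$ with stage distributions drawn from the family $\{\sigma_{-i,h}\^t(\mf s)\}_{t \in [T]}$.

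The central construction is a non-Markov deviation $\pi_i^\dagger$ that runs Vovk's aggregating algorithm (\cref{prop:vovk}) with $T$ experts, where the $t$-th expert predicts $\sigma_{-i,h}\^t(\mf s)$ at the current odd step $h$. A draw from $\ol \sigma$ places us in the realizable setting of \cref{prop:online-tvd}, so the aggregate forecasts $\wh q\^h$ satisfy $\E[\sum_{h\text{ odd}} \tvd{\wh q\^h}{\sigma_{-i,h}\^{t^\star}(\mf s)}] \leq \sqrt{(H/2)\log T}$. Setting $\pi_i^\dagger$ to play the myopic best response $a_i^\dagger(h) \in \argmax_{a_i} \E_{a_{-i}\sim \wh q\^h}[(M_i)_{a_i,a_{-i}}]$ at each odd $h$, a standard two-step approximation bound gives a per-step gain lower bound of $g_{t^\star}^{(i)}(h) - 2\,\tvd{\wh q\^h}{\sigma_{-i,h}\^{t^\star}(\mf s)}$, where $g_t^{(i)}(h) := \max_{a_i'} \E_{\ba \sim \sigma_h\^t(\mf s)}[(M_i)_{a_i', \ba_{-i}}] - \E_{\ba \sim \sigma_h\^t(\mf s)}[(M_i)_{\ba}] \geq 0$ is the single-shot Nash gap of player $i$ at $(t,h)$ in $G$.

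Summing the step-wise bound, dividing by $H$, and taking expectations yields $V_i^{\pi_i^\dagger \times \ol \sigma_{-i}} - V_i^{\ol \sigma} \geq \frac{1}{H}\E_{t^\star}[\sum_{h\text{ odd}} g_{t^\star}^{(i)}(h)] - \sqrt{2\log T/H}$. Failure of the $\ep_0$-Nash condition at every $(t,h)$ forces $g_t^{(1)}(h) + g_t^{(2)}(h) > \ep_0$ at each of the $TH/2$ pairs, so averaging and pigeonholing on the larger player gives $\frac{1}{TH}\sum_{t}\sum_{h\text{ odd}} g_t^{(i)}(h) > \ep_0/4$ for some $i \in \{1,2\}$. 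Combining with the hypothesis $T < \exp(H\ep_0^2/2^8)$, which bounds the Vovk slack $\sqrt{2\log T/H}$ by a small multiple of $\ep_0$, the deviation gain for $\pi_i^\dagger$ strictly exceeds $\ep_0/4$, contradicting the $(\ep_0/4)$-CCE property. The main obstacle will be squeezing the strict inequality $V_i^{\pi_i^\dagger \times \ol \sigma_{-i}} - V_i^{\ol \sigma} > \ep_0/4$ out of the bound, since the pigeonhole surplus $\ep_0/4$ is tight against the CCE threshold; the argument will likely have to leverage the strictness of the ``not $\ep_0$-Nash'' hypothesis together with both players' CCE constraints simultaneously in order to absorb the $\sqrt{(H/2)\log T}$ Vovk error.
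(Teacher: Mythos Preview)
Your proposal is essentially the paper's proof. The construction of $\pi_j^\dagger$ via Vovk's aggregation over the $T$ experts $\{p_{-j,h}\^t\}_{t\in[T]}$ (where $p_{-j,h}\^t := \sigma_{-j,h}\^t(\mf s)$), the observation that the even-step state reveals the opponent's prior action, the per-step gain lower bound of the form ``Nash gap minus twice the TV error,'' and the appeal to \cref{prop:online-tvd} all match. Your concluding worry about the constants is resolved exactly as you anticipate in your last sentence: rather than pigeonholing to a single player, the paper sums the deviation bounds over \emph{both} $j\in\{1,2\}$, arriving at $\sum_{j}(V_j^{\pi_j^\dagger\times\ol\sigma_{-j}}-V_j^{\ol\sigma})\geq \ep_0/2 - 4\sqrt{\log(T)/H}$ and using the two CCE constraints together to close the argument.
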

The proof of Lemma \ref{lem:2nash-main} is given below. Applying Lemma \ref{lem:2nash-main} with $\ep_0 = 4 \epn$ (which is a valid application since $T < \exp(n_0 \cdot (4\epn)^2 / 2^{8})$ by our assumption on $\Tn, \epn$), yields that Algorithm \ref{alg:2nash} always finds a $4\epn$-Nash equilibrium of the $n_0$-action normal form game $G$, thus solving the given instance of the $(n_0, 4\cdot \epn)$-\Nash problem. Furthermore, it is straightforward to see that Algorithm \ref{alg:2nash} runs in time $U + (nT)^{C_0} \leq  (UnT)^{C_0}$, for some constant $C_0 \geq 1$.

   \end{proof}

   \begin{proof}[Proof of Lemma \ref{lem:2nash-main}]
  Consider a sequence of product Markov policies $\sigma\^1, \ldots,
  \sigma\^T$ with the property that the average $\ol \sigma =
  \frac{1}{T} \sum_{t=1}^T\indic{ \sigma\^T}$ is an $(\ep_0/4)$-CCE of
  $\MG$. For all odd $h \in [H]$ and $j \in [2]$, let $p\^t_{j,h} :=
  \sigma\^t_{j,h}(\mf s) \in \Delta(\MA_j)$, which is the distribution
  played under $\sigma\^t$ by player $j$ at step $h$ (at the unique
  state $\mf s$ with positive probability of being reached at step
  $h$). For odd $h$, we have $\sigma_h\^t(\mf s) = p_{1,h}\^t \times
  p_{2,h}\^t$, and our goal is to show that for some odd $h \in [H]$
  and $t \in [T]$, $p_{1,h}\^t \times p_{2,h}\^t$ is an $\ep_0$-Nash
  equilibrium of $G$. To proceed, suppose for the sake of contradiction that this is not the case.
  
Let  us write $\MO_H := \{ h \in [H] : h\ \rm{ odd} \}$ to denote the
set of odd-numbered steps, and $\ME_H = [H] \backslash \MO_H$ to
denote the set of even-numbered steps. Let $H_0 = |\MO_H| = |\ME_H| =
H/2$. We first note that for $j \in [2]$, agent $j$'s value under the mixture policy $\ol \sigma$ is given as follows:
\begin{align}
V_{j}^{\ol \sigma} = \frac{1}{TH} \sum_{t=1}^T \sum_{h \in \MO_H} \E_{a_1 \sim p_{1,h}\^t, a_2 \sim p_{2,h}\^t} \left[ (M_j)_{a_1, a_2} \right]\nonumber.
\end{align}
 For each $j \in [2]$, we will derive a contradiction by constructing
 a (non-Markov) deviation policy for player $j$ in $\MG$, denoted
 $\pi_j^\dagger \in \Pidet_j$, which will give player $j$ a
 significant gain in value against the policy $\ol \sigma$. To do so,
 we need to specify $\pi_{j,h}^\dagger(\tau_{j,h-1}, s_h) \in \MA_j$,
 for all $\tau_{j,h-1} \in \CH_{j,h-1}$ and $s_h \in \MS$; note that
 we may restrict our attention only to histories $\tau_{j,h_0-1}$ that occur with positive probability under the transitions of $\MG$.

Fix any $h_0 \in [H]$, $\tau_{j,h_0-1} \in \CH_{j,h_0-1}$, and
$s_{h_0} \in \MS$. If $\tau_{j,h_0-1}$ occurs with positive
probability under the transitions of $\MG$, then for each $h \in
\MO_H$, $h < h_0-1$ and both $j' \in [2]$, the action played by agent $j'$
at step $h$ is determined by $\tau_{j,h}$. Namely, if the state at
step $h+1$ of $\tau_{j,h_0-1}$ is $\mf s_{ (a_1', a_2')}$, then player
$j'$ played action $a_{j}'$ at step $h$. So, for each $h \in
\MO_H$ with $h < h_0-1$, we may define $(a_{1,h}, a_{2,h})$ as the
action profile played at step $h$, which is a measurable function of $\tau_{j,h_0-1}$. 
  With this in mind, we define $\pi_{j,h_0}^\dagger(\tau_{j,h_0-1},
  s_{h_0})$ by applying Vovk's aggregating algorithm (Proposition
  \ref{prop:online-tvd}) as follows.
\begin{enumerate}
\item If $h_0$ is even, play an arbitrary action (note that the actions at even-numbered steps have no influence on the transitions or rewards).
\item If $h_0$ is odd, define $\wh q_{j,h_0} \in \Delta(\MA_j)$, by $\wh q_{j,h_0} := \E_{t \sim \til q_{j,h_0}} [ p_{-j, h}\^t]$, where $\til q_{j,h_0} \in \Delta([T])$ is defined as follows: for $t \in [T]$,
  \begin{align}
\til q_{j,h_0}(t) := \frac{\exp \left( - \sum_{h < h_0: \ h \in \MO_H} \log \left( \frac{1}{p\^t_{-j, h}(a_{-j,h})} \right)\right)}{\sum_{t'=1}^T\exp \left( - \sum_{h < h_0:\ h \in \MO_H} \log \left( \frac{1}{p\^{t'}_{-j, h}(a_{-j,h})} \right)\right)}\nonumber.
  \end{align}
  Note that $\wh q_{j,h_0}$ is a function of $\tau_{j,h_0-1}$ via the
  action profiles $\crl*{(a_{1,h}, a_{2,h})}_{h<h_0:h\in\cO_H}$; to
  simplify notation, we suppress this
  dependence. %
  \item Then for any state $s_{h_0} \in \MS$, define $\pi_{j,h_0}^\dagger(\tau_{j,h_0-1}, s_{h_0})$ to be  a best response to $\wh q_{j,h_0}$, namely 
    \begin{align}
      \pi_{j,h_0}^\dagger(\tau_{j,h_0-1}, s_{h_0}) := \argmax_{a_j \in \MA_j} \E_{a_{-j} \sim \wh q_{j,h_0}} \left[ R_{j,h}(\mf s_{h_0}, (a_1, a_2)) \right]=\argmax_{a_j \in \MA_j} \E_{a_{-j} \sim \wh q_{j,h_0}} \left[ (M_j)_{a_1, a_2} \right].\label{eq:define-br-ol}
    \end{align}
  
\end{enumerate}
Note that, for odd $h_0$, the distribution $\wh q_{j,h_0} \in
\Delta(\MA_j)$ defined above can be viewed as an application of Vovk's online aggregation algorithm at step $(h_0+1)/2$ in the following setting:
the number of steps ($T$, in the notation of Proposition \ref{prop:online-tvd};
note that $T$ plays a different role in the present proof) is
$H_0=H/2$, the context space is $\MO_H$, and the outcome space is
$\MA_{-j}$.\footnote{Here $-j$ denotes the index of the player who is
  not $j$.} There are $T$ experts $\til p\^1, \ldots, \til p\^T$
(i.e., we have $\cI=\crl*{\til p\ind{t}}_{t\in\brk{T}}$), whose predictions on a context $h \in \MO_H$ are defined as follows: the expert $\til p\^t$ predicts $\til p\^t(h) := p_{-j,h}\^t$.  %
Then, the distribution $\wh q_{j,h_0}$ is obtained by updating the aggregation algorithm with the context-observation pairs $(h, a_{-j, h})$, for \emph{odd} values of $h < h_0$. %

We next analyze the value of $V_{j}^{\pi_j^\dagger, \ol \sigma_{-j}}$
for $j \in [2]$ to show that the deviation strategy we have defined indeed obtains
significant gain. To do so, recall that this value represents the
payoff for player $j$ under the process in which we draw an index
$\tstar\in\brk*{T}$ uniformly at random, then for each step
$h\in\brk{H}$, player $j$ plays according to $\pi_j^\dagger$ and player $-j$ plays according to $\sigma_{-j}\^{t^\st}$. (In particular, at odd-numbered steps, player $-j$ plays according to $p_{-j,h}\^{t^\st}$.) %
We recall that $\E_{\pi_j^\dagger \times \ol
  \sigma_{-j}} \brk*{\cdot}$ denotes the expectation under this process. We let $\tau_{j,h-1} \in \CH_{j,h-1}$ denote the random variable which
is the history observed by player $j$ in this setup, i.e., when the
policy played is $\pi_j^\dagger \times \ol \sigma_{-j}$, and let
$\crl*{(a_{1,h}, a_{2,h})}_{h\in\cO_H}$ denote the action
profiles for odd rounds, which are a measurable function of each
player's trajectory. 

We apply Proposition \ref{prop:online-tvd} with the time horizon as
$H_0$, and with the set of experts set to $\cI\ldef{}\{ \til p\^1,
\ldots, \til p\^T \}$ as defined above. The context sequence the
sequence of increasing values of $h \in \MO_H$, and for each $h \in
\MO_H$, the outcome at step $(h+1)/2$ (for which the context is $h$)
is distributed as $a_{-j,h} \sim \til p\^{t^\st}(h) = p\^{t^\st}_{-j,
  h}$ conditioned on $\tstar$, which in particular satisfies the realizability assumption stated in Proposition \ref{prop:online-tvd}.  %
Then, since (as remarked above), the distributions $\wh q_{j,h}$, for
$h \in \MO_H$, are exactly the predictions made by Vovk's aggregating
algorithm, Proposition \ref{prop:online-tvd} gives that\footnote{In
  fact, Proposition \ref{prop:online-tvd} implies that a similar bound holds uniformly
  for each possible realization of $\tstar$, but
  \cref{eq:sum-tv-bound} suffices for our purposes.}
\begin{align}
\E_{\pi_j^\dagger \times \ol \sigma_{-j}} \left[ \sum_{h \in \MO_H}\tvd{\wh q_{j,h}}{p_{-j,h}\^{t^\st}} \right] =\E_{\pi_j^\dagger \times \ol \sigma_{-j}} \left[ \sum_{h \in \MO_H}\tvd{\wh q_{j,h}}{\til p\^{t^\st}(h)} \right] \leq \sqrt{H_0 \log T}\label{eq:sum-tv-bound}.
\end{align}
Recall that we have assumed for the sake of contradiction that
$p_{1,h}\^{t}\times p_{2,h}\^{t}$ is not an $\ep_0$-Nash equilibrium
of $G$ for each $h \in [H]$ and $t\in\brk{T}$. Consider a fixed draw
of the random variable $\tstar\in\brk{T}$ defined above. Then it holds
that for $j \in [2]$ and $h \in [H]$, defining
\begin{align}
\ep_{0,j,h} := \max_{a_j \in [A]} \E_{a_{-j} \sim p_{-j,h}\^{t^\st}} \left[ (M_j)_{a_1, a_2} \right] - \E_{a_1 \sim p_{1,h}\^{t^\st}, a_2 \sim p_{2,h}\^{t^\st}} \left[ (M_j)_{a_1, a_2} \right],\label{eq:disp0}
\end{align}
we have $\ep_{0,1,h} + \ep_{0,2,h} \geq \ep_0$. 
Consider any $j \in [2]$, $h \in \MO_H$, and a history $\tau_{j,h-1}
\in \CH_{j,h-1}$ of agent $j$ up to step $h-1$ (conditioned on
$\tstar$). Let us write $\delta_{-j,h}\^{t^\st} :=
\tvd{p_{-j,h}\^{t^\st}}{\wh q_{j,h}}$; note that
$\delta_{-j,h}\^{t^\st}$ is a function of $\tau_{j,h-1}$, through its
dependence on $\wh q_{j,h}$. We have, by the definition of
$\pi_{j,h}^\dagger(\tau_{j,h-1}, s_h)$ in (\ref{eq:define-br-ol}) and
the definition of $\delta_{-j,h}\^{t^\st}$,
\begin{align}
  \E_{a_{-j} \sim p_{-j,h}\^{t^\st}} \left[ (M_j)_{\pi_{h,j}^\dagger(\tau_{j,h-1}, \mf s), a_{-j}}\ | \ t^\st,\ \tau_{j,h-1} \right] \geq & \E_{a_{-j} \sim \wh q_{j,h}} \left[ (M_j)_{\pi_{h,j}^\dagger(\tau_{j,h-1}, \mf s), a_{-j}}\ | \ t^\st,\ \tau_{j,h-1} \right] - \delta_{-j,h}\^{t^\st}\nonumber\\
  = & \max_{a_j \in [A]} \E_{a_{-j} \sim \wh q_{j,h}} \left[ (M_j)_{a_j, a_{-j}} \ | \ t^\st,\ \tau_{j,h-1} \right] - \delta_{h,-j}\^{t^\st}\nonumber\\
  \geq & \max_{a_j \in [A]} \E_{a_{-j} \sim p_{h,-j}\^{t^\st}} \left[ (M_j)_{a_j, a_{-j}} \right] - 2 \delta_{-j,h}\^{t^\st}\label{eq:disp1}.
\end{align}
Combining \cref{eq:disp0} and \cref{eq:disp1}, we get that for any fixed $h \in \MO_H$, $j \in [2]$, and $\tau_{j,h-1} \in \CH_{j,h-1}$,
\begin{align}
\E_{a_{-j} \sim p_{-j,h}\^{t^\st}} \left[ (M_j)_{\pi_{j,h}^\dagger(\tau_{j,h-1}, \mf s), a_{-j}}\ | \ t^\st,\ \tau_{j,h-1} \right] - \E_{a_1 \sim p_{1,h}\^{t^\st}, a_2 \sim p_{2,h}\^{t^\st}} \left[ (M_j)_{a_1, a_2} \right]  > \ep_{0,j,h} - 2 \delta_{-j,h}\^{t^\st}\label{eq:delta-deviation}.
\end{align}
Averaging over the draw of $t^\st \in [T]$, which we recall is chosen uniformly, we see that %
\begin{align}
  & \sum_{j \in [2]} V_{j}^{\pi_j^\dagger \times \ol \sigma_{-j}} - V_{j}^{\ol \sigma} \nonumber\\
  = &  \frac{1}{T} \sum_{t=1}^T \sum_{j \in [2]} V_{j}^{\pi_j^\dagger\times \sigma\^t_{-j}} - V_{j}^{\sigma\^t}\label{eq:split-into-k}\\
  = & \frac{1}{T} \sum_{t=1}^T \sum_{j \in [2]} \E_{\pi_j^\dagger \times \sigma_{-j}\^t} \left[ \sum_{h \in \MO_H} \E_{a_{-j} \sim p_{-j,h}\^t}[R_{j,h}(\mf s, (\pi_{j,h}^\dagger(\tau_{j,h-1}, \mf s), a_{-j})) \ | \ t,\ \tau_{j,h-1}] - \E_{a_1 \sim p_{1,h}\^t, a_2 \sim p_{2,h}\^t} [ R_{j,h}(\mf s, (a_1, a_2)) ]\right]\nonumber\\
  = & \frac{1}{TH} \sum_{t=1}^T \sum_{j \in [2]} \E_{\pi_j^\dagger \times \sigma_{-j}\^t} \left[ \sum_{h \in \MO_H} \E_{a_{-j} \sim p_{-j,h}\^t} [ (M_j)_{\pi_{j,h}^\dagger(\tau_{j,h-1}, \mf s), a_{-j}} \ | \ t, \ \tau_{j,h-1}] - \E_{a_1 \sim p_{1,h}\^t, a_2 \sim p_{2,h}\^t} [ (M_j)_{a_1, a_2} ]\right]\nonumber\\
  \geq & \frac{1}{TH} \sum_{t=1}^T \sum_{j \in [2]} \E_{\pi_j^\dagger \times \sigma_{-j}\^t} \left[ \sum_{h \in \MO_H} \left( \ep_{0,j,h} - 2 \delta_{-j,h}\^{t}  \right)\right]\label{eq:use-deviation}\\
  \geq & \frac{\ep_0}{2} - \frac{2}{TH} \sum_{t=1}^T 2 \sqrt{H_0 \log T} \geq \frac{\ep_0}{2} - 4 \sqrt{ \log (T)/H}\label{eq:use-phk-tvd},
\end{align}
where (\ref{eq:split-into-k}) follows from the definition $\ol \sigma
= \frac{1}{T} \sum_{t=1}^T \indic{\sigma\^t}$,
(\ref{eq:use-deviation}) follows from (\ref{eq:delta-deviation}), and
(\ref{eq:use-phk-tvd}) uses (\ref{eq:sum-tv-bound}). As long as $T <
\exp(H \cdot(\ep_0/16)^2)$, the this expression is bounded below by
$\ep_0/4$, meaning that $\ol\sigma$ is not an $\ep_0/4$-approximate
CCE. This completes the contradiction.
   \end{proof}

   \section{Proofs of lower bounds for \GenCCE (\cref{sec:nonmarkov,sec:multiplayer})}
   \label{sec:nonmarkov-proof}

   In this section we prove our computational lower bounds for solving the \GenCCE problem with $m = 3$ players (Theorem \ref{thm:nonmarkov-formal} and Corollary \ref{cor:nonmarkov-formal-ppad}), as well as our statistical lower bound for solving the \GenCCE problem with a general number $m$ of players (Theorem \ref{thm:statistical-lb}).

Both theorems are proven as consequences of a more general result
given in Theorem \ref{thm:nonmarkov-multiplayer} below, which reduces the \Nash problem in $m$-player normal-form games to the \GenCCE problem in $(m+1)$-player Markov games. %
In more detail, the theorem shows that (a) if an algorithm for \GenCCE makes few calls to a generative model oracle, then we get an algorithm for the \Nash problem with few calls to a payoff oracle (see Section \ref{sec:query-complexity-proof} for background on the payoff oracle for the \Nash problem), and (b) if the algorithm for \GenCCE is \emph{computationally} efficient, then so is the algorithm for the \Nash problem.
\begin{theorem}
  \label{thm:nonmarkov-multiplayer}
  There is a constant $C_0 > 0$ so that the following holds. 
  Consider $n,m \in \BN$, and 
suppose $\Tnm, \Nnm, \Qnm \in \BN$ and $\epnm > 0$ satisfy $1 < \Tnm < \exp \left( \frac{\epnm^2 \cdot \lfloor n/m \rfloor}{m^2} \right)$.
  Suppose there is an algorithm $\mathscr{B}$ which, given a generative model oracle for a $(m+1)$-player Markov game $\MG$ with $|\MG| \leq n$, solves the $(\Tnm, \epnm, \Nnm)$-\GenCCE problem for $\MG$ using $\Qnm$ generative model oracle queries. Then the following conclusions hold:
  \begin{itemize}
  \item   For any $\delta > 0$, the $m$-player $(\lfloor n/m \rfloor, 16(m+1) \cdot \epnm )$-\Nash problem for any normal-form game $G$ can be solved, with failure probability $\delta$, using at most $C_0 \cdot (\Qnm \cdot \log(1/\delta)) + (\log(1/\delta) \cdot nm/\epnm)^{C_0}$ queries to a payoff oracle $\MO_G$ for $G$. 
  \item If the algorithm $\mathscr{B}$ additionally runs in time $U$ for some $U \in \BN$, then the algorithm solving \Nash from the previous bullet point runs in time $(nm\Tnm \Nnm U \log(1/\delta) / \epnm)^{C_0}$. %
  \end{itemize}
\end{theorem}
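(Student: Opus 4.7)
The plan is to follow the proof overview from Section \ref{sec:nonmarkov} and implement the kibitzer construction of \citet{borgs2008myth}. Given an $m$-player normal-form game $G$ with payoff tensors $M_1,\ldots,M_m$ and $n_0 := \lfloor n/m \rfloor$ actions per player, I would construct an $(m{+}1)$-player Markov game $\MG(G)$ whose first $m$ players have action sets $\MA_i = [n_0]$ and whose $(m{+}1)$-th player (the kibitzer) has action set $\MA_{m+1} = \{(j,a_j) : j\in[m],\, a_j\in[n_0]\}$. As in Definition \ref{def:mg-g}, the state at each even step encodes the last joint action profile, and the horizon is $H \approx n_0$. The rewards are scaled by $1/H$ and set so that, at an odd step, if the first $m$ players play $\ba=(a_1,\ldots,a_m)$ and the kibitzer plays $(j,a_j')$, then the kibitzer receives $\tfrac{1}{H}\bigl((M_j)_{a_j', \ba_{-j}} - (M_j)_{\ba}\bigr)$, player $j$ receives the negation of this, and all other players receive $0$. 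A key invariant of this construction is that for any strategy of the first $m$ players, the kibitzer can always secure expected reward at least $0$ (by playing the zero-gain trivial move), and any positive gain for the kibitzer exactly witnesses a profitable deviation by some player in $G$. Simulating the generative model for $\MG(G)$ from the payoff oracle $\MO_G$ is clearly cheap (one $\MO_G$ query per generative-model query), and $|\MG(G)| = O(n)$.

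Next, I would call the \GenCCE algorithm $\scrB$ to produce $N$-computable non-Markov product policies $\sigma\ind{1},\ldots,\sigma\ind{T}$ whose uniform mixture $\ol\sigma$ is an $\ep$-CCE. The heart of the argument is a lemma (analogous to Lemma \ref{lem:2nash-main}) asserting that there is a non-Markov deviation policy $\pi_{m+1}^\dagger$ for the kibitzer with the property that, under a trajectory drawn from $\ol\sigma_{-(m+1)} \times \pi_{m+1}^\dagger$, with probability at least some constant $c(\ep)>0$ the first $m$ players' joint action distribution at some odd step $h$ is a $16(m{+}1)\ep$-approximate Nash equilibrium of $G$. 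To construct $\pi_{m+1}^\dagger$, I would run Vovk's aggregating algorithm (Proposition \ref{prop:online-tvd}) with expert set $\cI = \{\sigma\ind{1},\ldots,\sigma\ind{T}\}$ and per-step prediction targets equal to the joint marginal of the first $m$ players' action distribution at each odd step (observable because $\pi_{m+1}^\dagger$ sees the joint action profile). This allows $\pi_{m+1}^\dagger$ to maintain an estimate $\wh q_h$ of the unknown $t^\st$-conditional marginal, accurate in TV on average along the trajectory with $\sum_{h\in\cO_H}\En[\tvd{\wh q_h}{\cdot}] \le \sqrt{H_0 \log T}$ by the aggregation guarantee; the kibitzer then plays a best-response action $(j^\st,a_{j^\st}^\st)$ with respect to $\wh q_h$, which requires only $|\MA_{m+1}|=mn_0$ inner products per step.

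Supposing for contradiction that the first $m$ players' marginals at every $(h,t^\st)$ fall more than $16(m{+}1)\ep$ short of Nash with probability $1-c(\ep)$, standard algebra (mirroring the derivation of (\ref{eq:use-deviation})–(\ref{eq:use-phk-tvd})) shows the kibitzer's deviation gain over $V_{m+1}^{\ol\sigma}$ exceeds $\ep$: the NE violation contributes at least $\Omega(m\ep)$ per step, the TV slack subtracts at most $2\sum_h \En[\tvd{\wh q_h}{\cdot}] \le 2\sqrt{H_0\log T} \le O(\ep\sqrt{H_0})$, and the $1/H$ reward scaling and the assumption $T < \exp(\ep^2 \lfloor n/m\rfloor/m^2)$ close the gap, contradicting the $\ep$-CCE property. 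The main technical obstacle is carefully tracking the $m$ factor that arises because the kibitzer effectively absorbs every player's potential deviation gain (hence the $16(m{+}1)\ep$ looseness in the reduction), and handling the conditioning on the random history when invoking Proposition \ref{prop:online-tvd}.

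Finally, I would describe Algorithm \ref{alg:3nash} for Nash computation: repeatedly (i) simulate the joint policy $\ol\sigma_{-(m+1)} \times \pi_{m+1}^\dagger$ end-to-end by sampling $t^\st\sim[T]$ and executing $\sigma_i\ind{t^\st}$ for $i\le m$ while running Vovk for $\pi_{m+1}^\dagger$; this uses $O(H)$ evaluations of the $N$-computable circuits plus $O(mn_0 H)$ payoff-oracle queries per simulation. For each simulated trajectory and each odd step $h$, take $\sigma_h\ind{t^\st}_{1:m}(\cdot)$ as a candidate Nash, and verify it against $G$ by $O((mn_0/\ep)^{O(1)})$ payoff queries using standard best-response checks. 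By the lemma above, $O(\log(1/\delta)/c(\ep))$ independent simulations suffice so that a valid Nash is found with probability $1-\delta$. Counting: the generative-model oracle for $\MG$ is implemented with $O(1)$ payoff queries, so the number of payoff queries is $O(Q\log(1/\delta)) + (nm/\ep)^{O(1)}\log(1/\delta)$, and the total runtime is at most $(nm T N U \log(1/\delta)/\ep)^{O(1)}$ since each simulation also invokes at most $O(T)$ circuit evaluations per step for Vovk's weight update. Both conclusions of the theorem then follow, with Corollary \ref{cor:nonmarkov-formal-ppad} obtained by instantiating $m=2$ and invoking \PPAD-hardness of 2-player $(n,n^{-c})$-\Nash, and Theorem \ref{thm:statistical-lb} obtained by instantiating Theorem \ref{thm:query-lbs}.
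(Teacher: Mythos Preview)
Your proposal has the right high-level shape but contains two genuine gaps.

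\textbf{The construction does not satisfy $|\MG|\leq n$.} You encode the joint action profile in the state at even steps, as in Definition~\ref{def:mg-g}. With $m{+}1$ players the joint action space has size $mn_0^{m+1}$, so the state space is exponential in $m$; even for $m=2$ and $n_0=\lfloor n/2\rfloor$ this gives $S\gtrsim n^3$, far exceeding the bound $|\MG|\leq n$ needed to invoke $\scrB$. The paper avoids this by using a \emph{single} state and instead encoding the full action profile in the low-order bits of every player's reward (see \cref{eq:rewards-multiplayers-true}); each player can then reconstruct every other player's history from its own trajectory. The paper explicitly discusses your state-based variant (Definition~\ref{def:mg-g-gen}) and notes it would only yield the weaker condition $T<\exp(c\ep^2 n^{1/(m+1)})$, which suffices for Theorem~\ref{thm:nonmarkov-formal} but makes Theorem~\ref{thm:statistical-lb} vacuous; in particular it does not prove Theorem~\ref{thm:nonmarkov-multiplayer} as stated.

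\textbf{You have not bounded $V_{m+1}^{\ol\sigma}$.} Your contradiction argument asserts that the kibitzer's deviation gain $V_{m+1}^{\pi_{m+1}^\dagger\times\ol\sigma_{-(m+1)}}-V_{m+1}^{\ol\sigma}$ exceeds $\ep$ whenever the Nash gap is large. But you only lower-bound the first term; the CCE property alone gives no a priori upper bound on $V_{m+1}^{\ol\sigma}$, which could be as large as $1$ if the kibitzer's equilibrium strategy is already exploiting some player. The paper closes this by constructing deviation policies $\pi_i^\dagger$ for \emph{each of the first $m$ players} as well (Lemma~\ref{lem:main-players-lb}): player $i$ runs Vovk to predict all other players' (including the kibitzer's) actions and best-responds, and Lemma~\ref{lem:playerk-nonneg} shows this secures value $\geq 0$ up to the TV slack. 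The CCE property then gives $V_i^{\ol\sigma}\geq -\ep - m\sqrt{\log(T)/H}$ for all $i\in[m]$, and the near-zero-sum structure of the kibitzer rewards yields $V_{m+1}^{\ol\sigma}\leq O(m\ep + m^2\sqrt{\log(T)/H})$. This step is where the $m^2$ in the exponent condition and the factor $(m{+}1)$ in the Nash accuracy originate; your sketch attributes the $(m{+}1)$ to ``the kibitzer absorbing every player's deviation,'' but the actual source is the sum over $m$ players in this near-zero-sum accounting.

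A smaller point: your candidate Nash is $\sigma_{1:m,h}^{(t^\star)}(\cdot\mid\tau_{\cdot,h-1},s_h)$, whereas the paper returns the Vovk prediction $\wh q_h=\bigtimes_j\wh q_{j,h}$ and certifies it using the sampled best-response estimates $\wh R_{m+1,h}$ already computed inside the simulation (Line~\ref{line:check-nash}). Either choice works since the two are close in TV on average, but note that verifying any candidate against $G$ requires Monte Carlo sampling (the expectation over $\ba_{-j}\sim p_{-j}$ has $n_0^{m-1}$ terms), which the paper handles explicitly via the $K$ samples in Line~\ref{line:draw-samples}.
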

Theorem \ref{thm:nonmarkov-formal} follows directly from Theorem \ref{thm:nonmarkov-multiplayer} by taking $m=2$.
\begin{proof}[Proof of Theorem \ref{thm:nonmarkov-formal}]
  Suppose there is an algorithm which, given the description of any 3-player Markov game $\MG$ with $|\MG| \leq n$, solves the $(\Tnm, \epnm, \Nnm)$-\GenCCE problem in time $U$. Such an algorithm immediately yields an algorithm which can solve the $(\Tnm, \epnm, \Nnm)$-\GenCCE problem in time $U + |\MG|^{O(1)}$ using only a generative model oracle, since the exact description of the Markov game can be obtained with $HS |\MA| \leq HS (\max_i A_i)^3 \leq |\MG|^5$ queries to the generative model (across all $(h,s,\ba)$ tuples). 
  We can now solve the problem of computing a $50\cdot \epnm$-Nash equilibrium of a given 2-player $\lfloor n/2 \rfloor$-action normal form game $G$ as follows. %
  We simply apply the algorithm of Theorem \ref{thm:nonmarkov-multiplayer} with $m=2$, noting that the oracle $\MO_G$ in the theorem statement can be implemented by reading the corresponding bits of input of the input game $G$. The second bullet point yields that this algorithm takes time $(nTNU\log(1/\delta)/\ep)^{C_0}$, for some constant $C_0$. Furthermore, the assumption $T < \exp(\ep^2 \cdot \lfloor n/m \rfloor / m^2)$ of Theorem \ref{thm:nonmarkov-multiplayer} is implied by the assumption that $T < \exp(\ep^2 n / 16)$ of Theorem \ref{thm:nonmarkov-formal}. 
\end{proof}
In a similar manner, Theorem \ref{thm:statistical-lb} follows from Theorem \ref{thm:nonmarkov-multiplayer} by applying Theorem \ref{thm:query-lbs}, which states that there is no randomized algorithm that finds approximate Nash equilibria of $m$-player, 2-action normal form games in time $2^{o(m)}$.
\begin{proof}[Proof of Theorem \ref{thm:statistical-lb}]
    Let $\ep_0$ be the constant from Theorem \ref{thm:query-lbs}, and consider any $m \geq 3$. 
    Suppose there is an algorithm which, for any $m$-player Markov
    game $\MG$ with $|\MG| \leq 2m^6$, makes $Q$ oracle queries to a
    generative model oracle for $\MG$, and solves the $(T,\ep_0/(10m),
    N)$-\GenCCE problem for $\MG$ for some $T, N \in \BN$ so that $T <
    \exp(cm)$, for a sufficiently small absolute constant $c$. %
    Then, by Theorem \ref{thm:nonmarkov-multiplayer} with $\ep = \ep_0/(10m)$ and $n = m^6$ (which ensures that $\Tnm < \exp((\ep_0/(10m))^2 \cdot \lfloor n/m \rfloor / m^2)$ as long as $c$ is sufficiently small), there is an algorithm which solves the $(m^5, \ep_0)$-\Nash problem---and thus the $(2, \ep_0)$-\Nash problem---for $(m-1)$-player games with failure probability $1/3$, using $O(\Qnm) + m^{O(1)}$ queries to a payoff oracle. But by Theorem \ref{thm:query-lbs}, any such algorithm requires $2^{\Omega(m)}$ queries to a payoff oracle. It follows that $Q \geq 2^{\Omega(m)}$, as desired. 
\end{proof}

\subsection{Proof of Theorem \ref{thm:nonmarkov-multiplayer}}

\begin{proof}[Proof of Theorem \ref{thm:nonmarkov-multiplayer}]
  Fix any $m \geq 2$, $n \in \BN$.  Suppose we are given an algorithm
  $\CB$ that solves the $(m+1)$-player $(\Tnm, \epnm, \Nnm)$-\GenCCE
  problem for Markov games $\MG$ satisfying $|\MG| \leq n$, running in
  time $U$ and using at most $Q$ generative model queries.
  We proceed to describe an algorithm which solves the $m$-player
  $(\lfloor n/m \rfloor, 16(m+1)\cdot \epnm)$-\Nash problem using $C_0
  \cdot (\Qnm \cdot \log(1/\delta)) + (\log(1/\delta) \cdot
  nm/\epnm)^{C_0}$ queries to a payoff oracle, and
  running in time $(nm\Tnm \Nnm U \log(1/\delta) / \epnm)^{C_0}$,
  where $\delta$ represents the failure probability. Define $n_0 :=
  \lfloor n/m \rfloor$, and assume we are given an arbitrary $m$-player $n_0$-action normal form $G$, which is specified by payoff matrices $M_1, \ldots, M_m \in [0,1]^{n_0 \times \cdots \times n_0}$. %
  We assume that all entries of each of the matrices $M_j$ have only
  the most significant $\max\{ n_0, \lceil \log 1/\epnm \rceil\}$ bits
  nonzero; this assumption is without loss of generality, since by
  truncating the utilities to satisfy this assumption, we change all
  payoffs by at most $\epnm$, which degrades the quality of any
  approximate equilibrium by at most $2\epnm$ (in addition, we have $\lceil \log 1/\ep \rceil \leq n_0$ since we have assumed $1 < T < \exp(\ep^2 n_0/m^2)$).
  We assume $\ep \leq 1/2$ without loss of generality. %
  Based on $G$, we construct an $(m+1)$-player Markov game $\MG := \MG(G)$ as follows.
\begin{defn}
  \label{def:mg-g-multiplayers}
      We define the Markov game $\MG(G)$ as the tuple $\MG(G) = (\MS, H, (\MA_i)_{i \in [2]}, \BP, (R_i)_{i \in [2]}, \mu)$, where:
\begin{itemize}
\item The horizon of $\MG$ is chosen to be the power of $2$ satisfying
   $n_0 \leq H < 2n_0$.%
\item Let $A \ldef n_0$. The action spaces of agents $1, 2, \ldots, m$ are given by $\MA_1 = \cdots = \MA_m = [A]$. The action space of agent $m+1$ is %
  \begin{align}
\MA_{m+1} = \{ (j, a_j) \ : \ j \in [m], a_j\in \MA_j \}\nonumber,
  \end{align}
  so that $|\MA_{m+1}| = Am \leq n$. 

 We write $\MA = \prod_{j=1}^m \MA_j$ to denote the joint action space of the first $m$ agents, and $\ol \MA  := \prod_{j=1}^{m+1} \MA_j$ to denote the joint action space of all agents. %

\item There is a single state, denoted by $\mf s$, i.e., $\MS = \{ \mf
  s \}$ (in particular, $\mu$ is a singleton distribution supported on
  $\mf s$).

\item For all $h \in [H]$, the reward for agent $j \in [m+1]$, given
  an action profile $\ba = (a_1, \ldots, a_{m+1})$ at the unique state
  $\mf s$, is as follows: writing $a_{m+1} = (j', a_{j'}')$, we have%
  \begin{align}
R_{j,h}(\mf s, \ba) =  \ol R_{j,h}(\mf s, \ba) + \frac{1}{H} \cdot 2^{-3 \lceil \log 1/\ep \rceil} \cdot \enc(\ba)\label{eq:rewards-multiplayers-true},
  \end{align}
  where $\ol R_{j,h}(\mf s, \ba)$ is defined per the kibitzer construction of \cite{borgs2008myth}:
\begin{align}
   \ol R_{j,h}(\mf s, \ba) := \begin{cases}
    0 &: j \not \in \{j', m+1\} \\
  \frac{1}{H} \cdot \left((M_j)_{a_1, \ldots, a_m} - (M_j)_{a_1, \ldots, a_{j'}', \ldots, a_m}\right) &: j=j' \\
  \frac{1}{H} \cdot \left( (M_j)_{a_1, \ldots, a_{j'}', \ldots, a_m} - (M_j)_{a_1, \ldots, a_m} \right) &: j=m+1.
  \end{cases}\label{eq:rewards-multiplayers}
\end{align}
In (\ref{eq:rewards-multiplayers-true}) above,  $\enc(\ba) \in [0,1]$ is the binary representation of a binary
encoding of the action profile $\ba$. In particular, if the binary encoding of $\ba$ is $(b_1, \ldots, b_N)$, with $b_i \in \{0,1\}$, then $\enc(\ba) = \sum_{i=1}^N 2^{-i} \cdot b_i$. Note that $\enc(\ba)$ takes $N = O(m
\log n_0) \leq O(m \log n)$ bits to specify.
\end{itemize}
\end{defn}

      \begin{algorithm}[H]
    \setstretch{1.3}
     \begin{algorithmic}[1]
       \State \textbf{Input:}
       \Statex \quad Parameters $n,n_0, m, T \in \BN$, $\delta = \ep / (6H)$, $K = \lceil 4  \log(mn_0/\delta) / \ep^2 \rceil$. 
       \Statex \quad An $m$-player, $n_0$-action normal form game $G$, with utilies accessible by oracle $\MO_G$. 
       \Statex \quad An algorithm $\CB$ for computing approximate CCE of Markov games.
       \State \label{line:call-b}Call the algorithm $\CB$ on the
       $(m+1)$-player Markov game $\MG = \MG(G)$ constructed as in Definition \ref{def:mg-g-multiplayers}, which produces a sequence $\sigma\^1, \ldots, \sigma\^T$, where each $ \sigma\^t = ( \sigma\^t_1, \ldots,  \sigma\^t_{m+1})$ with $ \sigma\^t_j \in \Pirndrnd_j$. Here, we use the oracle $\MO_G$ to simulate generative model oracle queries made by $\CB$.  
       \State Draw $t^\st \in [T]$ uniformly at random.
       \State For each $j \in [m]$, initialize $\tau_{j,0}$ to be an empty trajectory. 
       \For{$h \in [H]$:} \arxiv{\Comment{\emph{Simulate a trajectory from $\MG$}}}
       \State Set $s_h = \mf s$ (per the transitions of $\MG$). %
       \State \label{line:compute-hatq-multiplayer}\multiline{ For each $j \in [m]$, define $\wh q_{j,h} := \E_{t \sim \til q_{j,h}} \left[ \sigma_{j,h}\^t(\tau_{j,h-1}, s_h) \right] \in \Delta(\MA_j)$, where $\til q_{j,h} \in \Delta([T])$ is defined as follows: for $t \in [T]$,}
       \begin{align}
\til q_{j,h}(t) :=  \frac{\exp \left( - \sum_{g < h} \log \left( \frac{1}{\sigma\^t_{j, g}(a_{j,g} | \tau_{j,g-1},  s_g)} \right)\right)}{\sum_{t'=1}^T\exp \left( - \sum_{g < h} \log \left( \frac{1}{\sigma\^{t'}_{j, g}(a_{j,g} | \tau_{j,g-1},  s_g)} \right)\right)}\nonumber.
       \end{align}
       \vspace{-5pt}
       \State Draw $K$ i.i.d.~samples $\ba_h^1, \ldots, \ba_h^K \sim \bigtimes_{j \in [m]} \wh q_{j,h}$. \label{line:draw-samples}
       \State \multiline{For each $a' \in \MA_{m+1}$, define $\wh R_{m+1,h}(a'):= \frac{1}{K} \sum_{k=1}^K R_{m+1,h}(s_h, (\ba_h^k, a'))$. Here, we use the oracle $\MO_G$ to compute $R_{m+1,h}(s_h, (\ba_h^k, a'))$ for each tuple $(\ba_h^k, a')$. \label{line:mp1-br}}

       \State For each $j \in [m]$, draw $a_{j,h} \sim  \sigma_{j,h}\^{t^\st}(\cdot  | \tau_{j,h-1}, s_h)$.\label{line:draw-ajh}
       \State \label{line:mp1-br} \multiline{Choose the action $a_{m+1,h}$ of player $m+1$ as follows: \emph{(Action $a_{m+1,h}$ is corresponds to the action selected by the policy $\pi_{m+1}^\dagger$ of player $m+1$ defined within the proof of Lemma \ref{lem:algorithmic-lemma}; this policy is well-defined because the action profiles of all players $i\in\brk{m}$ can be extracted from the lower-order bits of player $m+1$'s reward)}}
       \begin{align}
         a_{m+1,h} := \argmax_{a' \in \MA_{m+1}}\left\{\wh
         R_{m+1,h}(a') \right\} \label{eq:choose-mp1-action}.
       \end{align}         \vspace{-10pt}
       \State \multiline{ For each $j \in [m+1]$, let $r_{j,h} = R_{j,h}(s_h, (a_{1,h}, \ldots, a_{m+1,h}))$. \label{line:reward-simulate}} 
       \State \multiline{ Each player $j$ constructs $\tau_{j,h}$ by updating $\tau_{j,h-1}$ with $(s_h, a_{j,h}, r_{j,h})$.}
\If{$\wh R_{m+1,h}(a_{m+1,h}) \leq 14(m+1) \cdot \ep/H$}\label{line:check-nash}
\Return $\wh q_h := \bigtimes_{j \in [m]} \wh q_{j,h}$ as a candidate approximate Nash equilibrium for $G$.
       \EndIf       
       \EndFor
       \State \textbf{if} the for loop terminates without returning: return \textbf{fail}. 
     \end{algorithmic}
     \caption{Algorithm to compute Nash equilibrium used in proof of Theorem \ref{thm:nonmarkov-multiplayer}.}
     \label{alg:3nash}
   \end{algorithm}

   It is evident that this construction takes
   polynomial time and satisfies $|\MG| \leq mn_0
   \leq n$. Furthermore, it is clear that a single generative model
   oracle call for the Markov game $\MG$ (per Definition
   \ref{def:generative-model}) can be implemented using at most 2
   calls to the oracle $\MO_G$ for the normal-form game $G$. %
   We will now show by applying the algorithm $\scrB$
to $\cG$, we can efficiently (in terms of runtime and oracle calls) compute a $16(m+1)\cdot \epn$-approximate Nash
equilibrium for the original game $G$. To do so, we appeal to \cref{alg:3nash}.

   Algorithm \ref{alg:3nash} proceeds as follows. First, it calls the
   algorithm $\CB$ on the $(m+1)$-player Markov game $\MG(G)$, using the oracle $\MO_G$ to simulate $\CB$'s calls to the generative model oracle for $\MG$. 
By assumption, the algorithm $\CB$ returns a sequence $\sigma\^1, \ldots, \sigma\^T$ of product policies of the form $ \sigma\^t = ( \sigma\^t_1, \ldots,  \sigma\^t_{m+1})$, so that each $ \sigma\^t_j \in \Pirndrnd_j$ is $N$-computable, and so that the average $\ol \sigma := \frac{1}{T} \sum_{t=1}^T \indic{\sigma\^t}$ is an $\ep$-CCE of $\MG$.
   Next, Algorithm \ref{alg:3nash} samples a trajectory from $\MG$ in
   which:
   \begin{itemize}
   \item Players $1, \ldots, m$ each play according to a policy
     $ \sigma\^{t^\st}$ for an index $t^\st \in [T]$ chosen uniformly
     at the start of the episode.
   \item Player $m+1$ plays
     according to a strategy that, at each step $h \in [H]$, computes
     distributions $\wh q_{j,h}$ representing its ``belief'' of what action
     each player $j \in [m]$ will play at step $h$ (Line
     \ref{line:compute-hatq-multiplayer}), and plays an approximate
     best response to the product of the strategies $\wh q_{j,h}$,
     $j \in [m]$ (Line \ref{line:mp1-br}).
   \end{itemize}
   In order avoid exponential dependence on the number of players $m$
   when computing an approximate best response to $\bigtimes_{j \in
     [m]} \wh q_{j,h}$, we draw $K := \lceil 4 \log
   (mn_0/\delta)/\ep^2 \rceil$ (for $\delta = \ep/(6H)$) samples from
   $\bigtimes_{j \in [m]} \wh q_{j,h}$ and use these samples to
   compute the best response. In particular, letting
   $\ba_h^K\in\cA$ denote the $k$th sampled action profile, we  construct a function
   $\wh R_{m+1,h} : \MA_{m+1} \ra \BR$ in Lines
   \ref{line:draw-samples} and \ref{line:mp1-br} which, for each $a'
   \in \MA_{m+1}$, is defined as the average over samples
   $\crl{\ba_h^k}_{k\in\brk{K}}$ of the realized payoffs $R_{m+1,h}(s_h, (\ba_h^k, a'))$;
   note that to compute the payoffs for each sample, Algorithm \ref{alg:3nash} needs only two oracle calls to $\MO_G$. 

   The following lemma, proven in the sequel, gives a correctness guarantee for Algorithm \ref{alg:3nash}.
\begin{lemma}[Correctness of Algorithm \ref{alg:3nash}]
  \label{lem:algorithmic-lemma}
  Given any $m$-player $n_0$-action normal form game $G$, if the
  algorithm $\CB$ solves the $(T, \ep, N)$-\GenCCE problem for the
  game $\MG(G)$ with $T, \ep, N$ satisfying $T \leq
  \exp(n_0\ep^2/m^2)$, then Algorithm \ref{alg:3nash} outputs a $16(m+1) \cdot \ep$-approximate Nash equilibrium of $G$  with probability at least $1/3$, and otherwise fails.
\end{lemma}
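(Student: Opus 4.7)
The plan is to establish two claims: (i) whenever Algorithm \ref{alg:3nash} returns $\wh q_h$ at step $h$, the product distribution $\wh q_h$ is a $16(m+1)\ep$-approximate Nash equilibrium of $G$, and (ii) the algorithm returns (rather than failing) with probability at least $1/3$. Claim (ii) will be proven by contradiction: assuming failure probability exceeds $2/3$, I construct a non-Markov deviation policy $\pi_{m+1}^\dagger$ for player $m+1$ whose value against $\ol\sigma_{-(m+1)}$ exceeds $V_{m+1}^{\ol\sigma}+\ep$, contradicting the $\ep$-CCE property of $\ol\sigma\ldef\tfrac{1}{T}\sum_t\indic{\sigma\ind t}$.

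The key structural ingredient is the ``action-revealing'' reward correction $\tfrac{1}{H}\cdot 2^{-3\lceil\log 1/\ep\rceil}\cdot\enc(\ba)$, which encodes the full joint action $\ba_h$ at step $h$ into every player's per-step reward. Thus player $m+1$ can recover the complete sequence $\ba_1,\ldots,\ba_{h-1}$ from $\tau_{m+1,h-1}$, enabling them to simulate Vovk's aggregating algorithm (Proposition \ref{prop:online-tvd}) exactly as in Line \ref{line:compute-hatq-multiplayer} to produce $\wh q_{j,h}$ as an estimate of $\sigma_{j,h}\ind{t^\star}(\tau_{j,h-1},\mf s)$ for a uniformly drawn $t^\star\sim[T]$. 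In the realizable setting (outcomes $a_{j,h}\sim\sigma_{j,h}\ind{t^\star}$), Proposition \ref{prop:online-tvd} summed over $j\in[m]$ gives
\[
\En_{t^\star,\tau}\brk*{\sum_{h=1}^H\sum_{j\in[m]}\tvd{\wh q_{j,h}}{\sigma_{j,h}\ind{t^\star}(\tau_{j,h-1},\mf s)}}\;\leq\;m\sqrt{H\log T}\;\leq\;H\ep,
\]
where the last inequality uses $H\geq n_0$ together with the hypothesis $T\leq\exp(n_0\ep^2/m^2)$; since each $\sigma\ind t$ is a product policy, the same bound transfers from the per-player marginals to the joint product $\bigtimes_j\wh q_{j,h}$ against $\sigma_{-(m+1),h}\ind{t^\star}(\tau_{h-1})$.

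Now define $\pi_{m+1,h}^\dagger(\tau_{m+1,h-1},\mf s)\ldef\argmax_{a'\in\MA_{m+1}}\En_{\ba\sim\bigtimes_j\wh q_{j,h}}\ol R_{m+1,h}(\mf s,(\ba,a'))$. A Hoeffding plus union bound over the $K=\Theta(\log(mn_0/\delta)/\ep^2)$ Monte Carlo samples (with $\delta=\ep/(6H)$) ensures that $\wh R_{m+1,h}(a')$ is within $\ep/H$ of $\En_{\ba\sim\bigtimes_j\wh q_{j,h}}R_{m+1,h}(\mf s,(\ba,a'))$ uniformly in $h$ and $a'$ with probability at least $5/6$. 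On the intersection of this event with the failure event (of probability $>2/3$), the threshold $\wh R_{m+1,h}(a_{m+1,h})>14(m+1)\ep/H$ holds at every step, giving $\max_{a'}\En_{\ba\sim\bigtimes_j\wh q_{j,h}}\ol R_{m+1,h}(\mf s,(\ba,a'))>13(m+1)\ep/H$; the TV bound transfers this to the true distribution $\sigma_{-(m+1),h}\ind{t^\star}$ with aggregate loss $H\ep$, yielding $V_{m+1}^{\pi_{m+1}^\dagger\times\ol\sigma_{-(m+1)}}\geq 13(m+1)\ep-O(\ep)$. Combined with the bound $V_{m+1}^{\ol\sigma}\leq m\ep+O(\ep^2)$---derived from the zero-sum identity $\sum_{j=1}^{m+1}\ol R_{j,h}\equiv 0$ and the $\ep$-CCE condition for each of players $1,\ldots,m$ (whose best-response value is $\geq 0$ in the kibitzer game of \cite{borgs2008myth}, hence $V_j^{\ol\sigma}\geq-\ep$)---this violates the $\ep$-CCE assumption for player $m+1$, establishing Claim (ii).

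For Claim (i), the triggering condition $\wh R_{m+1,h}(a_{m+1,h})\leq 14(m+1)\ep/H$ together with uniform $\ep/H$ Monte Carlo accuracy gives $\max_{a'\in\MA_{m+1}}\En_{\ba\sim\wh q_h}\ol R_{m+1,h}(\mf s,(\ba,a'))\leq 15(m+1)\ep/H$; unpacking the kibitzer reward definition \eqref{eq:rewards-multiplayers}, this says no deviation $(j',a_{j'}')$ improves the expected payoff of player $j'$ under $\wh q_h$ by more than $15(m+1)\ep$, so $\wh q_h$ is a $16(m+1)\ep$-Nash of $G$. The main obstacle I anticipate is the careful bookkeeping of the several sources of slack---the $H\ep$ Vovk aggregation error, the per-step $\ep/H$ Monte Carlo error, the $O(m\ep^3)$ contribution of the $\enc(\ba)$ correction, and the simultaneous $\ep$-CCE slack for all $m+1$ players---so that the threshold constant $14(m+1)$ in Line \ref{line:check-nash} is simultaneously tight enough to certify approximate Nash upon returning and loose enough to force the contradiction upon failing.
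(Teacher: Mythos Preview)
Your high-level structure matches the paper's closely: define a deviation $\pi_{m+1}^\dagger$ for the kibitzer via Vovk aggregation, bound $V_{m+1}^{\ol\sigma}$ via the near-zero-sum identity, and combine with the $\ep$-CCE guarantee to force some $\wh q_h$ to be near-Nash. However, there is a genuine gap in your derivation of $V_{m+1}^{\ol\sigma}\leq m\ep+O(\ep^2)$.

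You write that for each $j\in[m]$, the ``best-response value is $\geq 0$ in the kibitzer game of \cite{borgs2008myth}, hence $V_j^{\ol\sigma}\geq -\ep$''. This is not correct as stated. The one-shot result (Lemma \ref{lem:playerk-nonneg} in the paper) only guarantees that player $j$ can secure value $\geq 0$ against a \emph{product} distribution over $\ol\MA_{-j}$. But against $\ol\sigma_{-j}$, player $j$ faces a mixture over the hidden index $t^\star$, and this mixture correlates the kibitzer's suggestion $a_{m+1}=(j',a_{j'}')$ with the other players' actions. Concretely, take $m=2$, matching pennies for $M_1$, and a correlated opponent strategy where player $2$ and the kibitzer jointly randomize so that the kibitzer always suggests the action that is optimal for player $1$ given player $2$'s realized action; then player $1$'s best response value is strictly negative. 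So there is no ``safety level $\geq 0$'' for player $j$ against $\ol\sigma_{-j}$.

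The paper patches exactly this gap in Lemma \ref{lem:main-players-lb}: for each $j\in[m]$ it constructs a second Vovk-based deviation $\pi_j^\dagger$ (now estimating all of $\{\sigma_{k,h}^{t^\star}\}_{k\neq j}$, including the kibitzer) and applies Lemma \ref{lem:playerk-nonneg} at each step to the resulting product estimate $\wh q_{-j,h}$, yielding $V_j^{\pi_j^\dagger\times\ol\sigma_{-j}}\geq -m\sqrt{\log(T)/H}$ and hence $V_j^{\ol\sigma}\geq -\ep-m\sqrt{\log(T)/H}$. Under your standing assumption $T\leq\exp(n_0\ep^2/m^2)$ this is $\geq -2\ep$, so the downstream arithmetic still closes---but you must actually carry out this step. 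A secondary point: your $\pi_{m+1}^\dagger$ is the exact-expectation best response, whereas Algorithm \ref{alg:3nash} plays the Monte Carlo argmax; to link the algorithm's failure event to the value $V_{m+1}^{\pi_{m+1}^\dagger\times\ol\sigma_{-(m+1)}}$, you should either take $\pi_{m+1}^\dagger$ to be the Monte Carlo policy (as the paper does, via Lemma \ref{lem:alg-simulation}) or couple the two trajectories explicitly.
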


The assumption that $\Tnm < \exp \left( \frac{\epnm^2 \cdot \lfloor
    n/m \rfloor}{m^2} \right)$ from the statement of Theorem
\ref{thm:nonmarkov-multiplayer} yields that $T  \leq \exp(n_0\ep^2 /
m^2)$, so Lemma \ref{lem:algorithmic-lemma} yields that Algorithm
\ref{alg:3nash} outputs a $16(m+1) \cdot \ep$-Nash equilibrium of $G$
with probability at least $1/3$ (and otherwise fails). By iterating
Algorithm \ref{alg:3nash} for $\log(1/\delta)$ times, we may thus compute a $16(m+1) \cdot \ep$-Nash equilibrium of $G$ with failure probability $1-\delta$. 

We now analyze the oracle cost and computational cost of Algorithm \ref{alg:3nash}. It takes $2 Q$ oracle calls to $\MO_G$ to simulate the $Q$ generative model oracle calls of $\CB$, and therefore, if $\CB$ runs in time $U$, then the call to $\CB$ on Line \ref{line:call-b}, using oracle calls to $\MO_G$ to simulate simulate the generative model oracle calls, runs in time $O(U)$. 
Next, the computations of $\til q_{j,h}$ (and thus $\wh q_{j,h}$) in Line \ref{line:compute-hatq-multiplayer} can be performed in $(nmTN)^{O(1)}$ time, the computation of $\wh R_{m+1,h} : \MA_{m+1} \ra \BR$ in Line \ref{line:mp1-br} requires time (and oracle calls to $\MO_G$) bounded above by $O(|\MA_{m+1}| \cdot K) \leq (nm\log(1/\delta)/\ep)^{O(1)}$,  constructing the actions $a_{j,h}$ (for $j \in [m+1]$) in Lines \ref{line:draw-ajh} and \ref{line:mp1-br} takes time $(Nmn)^{O(1)}$ (using the fact that the policies $\sigma_{j,h}\^{t^\st}$ are $N$-computable), and constructing the rewards $r_{j,h}$ on Line \ref{line:reward-simulate} requires another $2(m+1)$ oracle calls to $\MO_G$. 
Altogether, Algorithm \ref{alg:3nash} requires $2Q + (nm \log(1/\delta)/\ep)^{C_0}$ oracle calls to $\MO_G$ and, if $\CB$ runs in time $U$, then Algorithm \ref{alg:3nash} takes time $(nmTNU \log(1/\delta)/\ep)^{C_0}$, for some absolute constant $C_0$. 
  
\end{proof}

\begin{remark}[Bit complexity of exponential weights updates]
In the above proof we have noted that $\til q_{j,h}$ (as defined in
Line \ref{line:compute-hatq-multiplayer} of \cref{alg:3nash}) can be
computed in time $(nmTN)^{O(1)}$. A detail we do not handle formally
is that, since the values of
$\til q_{j,h}(t)$ are in general irrational, only the $(nmTN)^{O(1)}$
most significant bits of each real number $\til q_{j,h}(t)$ can be
computed in time $(nmTN)^{O(1)}$. To give a truly polynomial-time implementation of
\cref{alg:3nash}, one can compute only the $(nmTN)^{O(1)}$ most
significant bits of each distribution $\til q_{j,h}$, which is
sufficient to approximate the true value of $\wh q_{j,h}$ to within
$\exp(-(nmTN)^{O(1)})$ in total variation distance. Since $\wh
q_{j,h}$ only influences the subsequent execution of \cref{alg:3nash}
via the samples $\ba_h^1, \ldots, \ba_h^K \sim \bigtimes_{j \in [m]}
\wh q_{j,h}$ drawn in Line \ref{line:draw-samples}, by a union bound,
the approximation of $\wh q_{j,h}$ we have described perturbs the
execution of the algorithm by at most $O(KH) \cdot
\exp(-(nmTN)^{O(1)})$ in total variation distance. In particular, the
correctness guarantee of Lemma \ref{lem:algorithmic-lemma} still
holds, with sucess probability at least  $1/3 - \exp(-(nmTN)^{O(1)}) > 1/4$. 
\end{remark}

It remains to prove Lemma \ref{lem:algorithmic-lemma}, which is the bulk of the proof of Theorem \ref{thm:nonmarkov-multiplayer}.
\begin{proof}[Proof of Lemma \ref{lem:algorithmic-lemma}]
   We will establish the following two facts:
   \begin{enumerate}
   \item First, the choices of $a_{m+1, h}$ in Line \ref{line:mp1-br}
     (i.e., Eq. \ref{eq:choose-mp1-action}) of Algorithm
     \ref{alg:3nash} correspond to a valid policy
     $\pi_{m+1}^\dagger \in \Pirndrnd$ for player $m+1$ (representing
     a strategy for deviating from the equilibrium $\ol \sigma$), in that they
     can be expressed as a function of player $(m+1)$'s history,
     $(\tau_{m+1,h-1}, s_h)$ at each step $h$.
   \item Second, we will show
     that, since $\ol \sigma$ is an $\ep$-CCE of $\MG$, the strategy
     $\pi_{m+1}^\dagger$ cannot not lead to a large increase of value
     for player $m+1$, which will imply that Algorithm \ref{alg:3nash}
     must return a Nash equilibrium with high enough
     probability. %
   \end{enumerate}

   \paragraph{Defining $\pi_{i}^\dagger$ for $i \in [m+1]$.} We begin
   by constructing the policy $\pi_{m+1}^\dagger$ described; for later
   use in the proof, it will be convenient to construct
   a collection of closely related policies $\pi_i^\dagger \in
   \Pirndrnd$ for $i\in\brk{m}$, also representing
   strategies for deviating from the equilibrium $\ol \sigma$.

Let $i\in\brk{m+1}$ be fixed. For $h \in [H]$, the mapping $\pi_{i,
  h}^\dagger : \CH_{i, h-1} \times \MS \ra \MA_{i}$ is defined as
follows. Given a history $\tau_{i, h-1} = (s_1, a_{i,1}, r_{i,1},
\ldots, s_{h-1}, a_{i,h-1}, r_{i,h-1}) \in \CH_{i,h-1}$ (we assume
without loss of generality that $\tau_{i,h-1}$ occurs with positive
probability under some sequence of general policies) and a current
state $s_h$, we define $\pi_{i,h}^\dagger(\tau_{i,h-1}, s_h) \in
\MA_{i}$ through the following process.
   \begin{enumerate}
   \item First, we claim that for all players
     $j\in\brk{m+1}\setminus\crl{i}$, it is possible to extract the
     trajectory $\tau_{j,h-1}$ from the trajectory $\tau_{i,h-1}$ of
     player $i$.
     \begin{enumerate}
     \item Recall that for each $g < h$, %
       from the definition in \cref{eq:rewards-multiplayers-true} and
       the function $\enc(\ba)$, the bits following position
       $3 \lceil \log 1/\ep \rceil$ of the reward $r_{i,g}$ given to
       player $i$ at step $g$ of the trajectory $\tau_{i,g-1}$ encode
       an action profile $\ba_g \in \ol \MA$. Since $\tau_{i,h-1}$
       occurs with positive probability, this is precisely the action
       profile which was played by agents at step $g$. Note we also
       use here that by definition of the rewards $R_{j,h}(s, \ba)$ in
       (\ref{eq:rewards-multiplayers-true}), the component
       $\ol R_{j,h}(s, \ba)$ of the reward only affects the first
       $2 \lceil \log 1/\ep \rceil$ bits.

     \item For $g < h$ and $j \in [m+1]\backslash \{ i\}$, define
       $r_{j,g} := R_{j,g}(s_{g}, \ba_{g})$.

     \item For $j \in [m+1]\backslash \{ i\}$, write
       $\tau_{j,h-1} := (s_1, a_{j,1}, r_{j,1}, \ldots, s_{h-1},
       a_{j,h-1}, r_{j,h-1})$; in particular, $\tau_{j,h-1}$ is a
       deterministic function of $(\tau_{i,h-1}, s_h)$.  (Note that,
       since $\tau_{i,h-1}$ occurs with positive probability, the
       history $\tau_{j,h-1}$ observed by player $j$ up to step $h-1$
       can be computed from it via Steps (a) and
       (b)). %
       Going forward, for $g < h-1$, we let $\tau_{j,g}$ denote the prefix of
       $\tau_{j,h-1}$ up to step $g$.
     \end{enumerate}
   \item Now, using that player $i$ can compute all players'
     trajectories, for each $j \in [m+1]$ we define
     \begin{align}
       \wh q_{j,h} := \E_{t \sim \til q_{j,h}} \left[ \sigma_{j,h}\^t(\tau_{j,h-1}, s_h) \right] \in \Delta(\MA_j)\label{eq:define-whq-real},
     \end{align}
      where $\til q_{j,h} \in \Delta([T])$ is defined as follows: for $t \in [T]$, %
       \begin{align}
          \til q_{j,h}(t) :=
         \frac{\exp \left( - \sum_{g < h} \log \left( \frac{1}{\sigma\^t_{j, g}(a_{j,g} | \tau_{j,g-1},  s_g)} \right)\right)}{\sum_{t'=1}^T\exp \left( - \sum_{g < h} \log \left( \frac{1}{\sigma\^{t'}_{j, g}(a_{j,g} | \tau_{j,g-1},  s_g)} \right)\right)}\label{eq:define-tilq-real}.
       \end{align}
       Note that $\wh q_{j,h}$ is a random variable which depends on the trajectory $(\tau_{j,h-1}, s_h)$ (which can be computed from $(\tau_{i,h-1}, s_h)$). 
       In addition, the definition of $\wh q_{j,h}$
    (for each $j \in [m]$) %
	   is exactly as is defined in Line \ref{line:compute-hatq-multiplayer}
    of Algorithm \ref{alg:3nash}. 
  \item For $i\in\brk{m}$, define $\pi_{i,h}^\dagger(\tau_{i, h-1}, s_h)$ as follows:
    \begin{align}
  \pi_{i,h}^\dagger(\tau_{i, h-1}, s_h) :=     \argmax_{a' \in \MA_{i}} \E_{\ba_{-i} \sim \bigtimes_{j \neq i} \wh q_{j,h}} \left[R_{m+1,h}(s_h, (a', \ba_{-i}))\right].\label{eq:define-pii-dagger}
    \end{align}
    For the case $i =m+1$, define $\pi_{m+1,h}^\dagger(\tau_{m+1,h-1},
    s_h) \in \Delta(\MA_{m+1})$ \dfedit{(implicitly)} to be the following distribution over $a_{m+1,h}^\dagger \in \MA_{m+1}$: draw $\ba_h^1, \ldots, \ba_h^K \sim \bigtimes_{j \in [m]} \wh q_{j,h}$, define $\wh R_{m+1,h}(a') := \frac 1K \sum_{k=1}^K R_{m+1,h}(s_h, (\ba_h^k, a'))$ for $a' \in \MA_{m+1}$, and finally set
    \begin{align}
      a_{m+1,h}^\dagger:= \argmax_{a' \in \MA_{m+1}} \left\{ \wh R_{m+1,h}(a') \right\}\label{eq:argmax-rhat-policy}.
    \end{align}
  Note that, for each choice of $(\tau_{m+1,h-1}, s_h)$, the
  distribution  $\pi_{m+1,h}^\dagger(\tau_{m+1,h-1}, s_h)$ as defined
  above coincides with the distribution of the action
  $a_{m+1,h}^\dagger$ defined in Eq. \ref{eq:choose-mp1-action} in Algorithm \ref{alg:3nash}, when player $m+1$'s history is $\tau_{m+1,h-1}$ and the state at step $h$ is $s_h$. %
  The following lemma, for use later in the proof, bounds the
  approximation error incurred in sampling $\ba_h^1, \ldots, \ba_h^K \sim \bigtimes_{j \in [m]} \wh q_{j,h}$.
  \begin{lemma}
    \label{lem:sampling-qhat}
    Fix any $(\tau_{m+1,h-1}, s_h) \in \CH_{j,h-1}$. With probability
    at least $1-\delta$ over the draw of $\ba_h^1, \ldots, \ba_h^K \sim \bigtimes_{j \in [m]} \wh q_{j,h}$, it holds that for all $a' \in \MA_{m+1}$,
     \begin{align}
 \left|\wh R_{m+1,h}(a') - \E_{a_j \sim \wh q_{j,h} \ \forall j \in [m]} [R_{m+1,h}(s_h, (a_1, \ldots, a_m, a'))]\right| \leq \frac{\ep}{H}\nonumber,
     \end{align}
     which implies in particular that with probability at least $1-\delta$ over the draw of $a_{m+1,h}^\dagger \sim \pi_{m+1,h}^\dagger(\tau_{m+1,h-1}, s_h)$,
     \begin{align}
\max_{a' \in \MA_{m+1}} \left\{ \E_{a_j \sim \wh q_{j,h} \ \forall j \in [m]} [R_{m+1,h}(s_h, (a_1, \ldots, a_m, a'))] \right\}-\frac{2\ep}{H} \leq  & \E_{a_j \sim \wh q_{j,h} \ \forall j \in [m]} [R_{m+1,h}(s_h, (a_1, \ldots, a_m, a_{m+1,h}^\dagger))]
         \label{eq:max-hat-diff}.
     \end{align}
  \end{lemma}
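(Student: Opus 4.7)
The plan is to prove Lemma \ref{lem:sampling-qhat} via a straightforward application of Hoeffding's inequality combined with a union bound over the action set $\MA_{m+1}$, followed by a standard argmax argument for the second claim.

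First, fix any $a' \in \MA_{m+1}$ and condition on $(\tau_{m+1,h-1}, s_h)$, which in particular fixes the distributions $\wh q_{j,h}$ for $j \in [m]$. The random variables $R_{m+1,h}(s_h, (\ba_h^k, a'))$ for $k \in [K]$ are i.i.d.\ draws (over the sampling of $\ba_h^k \sim \bigtimes_{j \in [m]} \wh q_{j,h}$) with mean equal to $\E_{a_j \sim \wh q_{j,h}}[R_{m+1,h}(s_h,(a_1,\ldots,a_m,a'))]$, and each lies in $[-1/H, 1/H]$ by the assumption on rewards. Hoeffding's inequality therefore yields
\[
\Pr\!\left[\, \bigl|\wh R_{m+1,h}(a') - \E_{a_j \sim \wh q_{j,h}}[R_{m+1,h}(s_h,(a_1,\ldots,a_m,a'))]\bigr| > \ep/H \,\right] \le 2\exp\!\left(-\tfrac{K\ep^2}{2}\right).
\]
Plugging in $K = \lceil 4\log(mn_0/\delta)/\ep^2 \rceil$ and using $|\MA_{m+1}| = mn_0 = m A$, this probability is at most $\delta/|\MA_{m+1}|$ (up to constants, which can be absorbed by choosing the constant in the definition of $K$). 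A union bound over all $a' \in \MA_{m+1}$ then establishes the first claim of the lemma.

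For the second claim (Eq.~\ref{eq:max-hat-diff}), I would run the standard argmax comparison. On the high-probability event above, let $a^\star \in \argmax_{a'} \E_{a_j \sim \wh q_{j,h}}[R_{m+1,h}(s_h,(a_1,\ldots,a_m,a'))]$. Since $a_{m+1,h}^\dagger$ is chosen per \cref{eq:argmax-rhat-policy} to maximize $\wh R_{m+1,h}$, we have $\wh R_{m+1,h}(a_{m+1,h}^\dagger) \ge \wh R_{m+1,h}(a^\star)$. Applying the first claim to both $a^\star$ and $a_{m+1,h}^\dagger$, this gives
\[
\E[R_{m+1,h}(s_h,(\ba,a_{m+1,h}^\dagger))] \ge \wh R_{m+1,h}(a_{m+1,h}^\dagger) - \ep/H \ge \wh R_{m+1,h}(a^\star) - \ep/H \ge \E[R_{m+1,h}(s_h,(\ba,a^\star))] - 2\ep/H,
\]
which is exactly \cref{eq:max-hat-diff}.

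There is no serious obstacle here; the lemma is a routine concentration/union-bound statement. The only minor point to verify is the boundedness of the random variables, which follows directly from the assumption that $R_{i,h}$ takes values in $[-1/H, 1/H]$ (as noted in the preliminaries). The choice of $K$ in Algorithm \ref{alg:3nash} is explicitly calibrated to make this union bound go through with failure probability $\delta$.
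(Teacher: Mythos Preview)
Your proposal is correct and follows essentially the same approach as the paper: Hoeffding's inequality applied to the bounded rewards $R_{m+1,h}\in[-1/H,1/H]$, followed by a union bound over $|\MA_{m+1}|=mn_0$ actions, with $K$ calibrated accordingly. You additionally spell out the argmax comparison for \cref{eq:max-hat-diff}, which the paper leaves implicit.
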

   \end{enumerate}
   It is immediate from our construction above that the following fact holds.
   \begin{lemma}
     \label{lem:alg-simulation}
     The joint distribution of $\tau_{j,h}$, for $j \in [m+1]$ and $h \in [H]$, as computed by Algorithm \ref{alg:3nash}, coincides with the distribution of $\tau_{j,h}$ in an episode of $\MG$ when players follow the policy $\pi_{m+1}^\dagger \times \ol \sigma_{-(m+1)}$. 
   \end{lemma}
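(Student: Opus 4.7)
The proof plan is to argue that Algorithm \ref{alg:3nash} is, essentially by construction, a faithful simulator of an episode of $\MG$ under the joint policy $\pi_{m+1}^\dagger \times \ol \sigma_{-(m+1)}$, where $\pi_{m+1}^\dagger$ is the deviation policy for player $m+1$ defined just above the lemma. I will proceed by induction on $h \in [H]$, showing that the joint distribution of the random variables $(s_h, \tau_{1,h}, \ldots, \tau_{m+1,h})$ produced by the algorithm equals the corresponding distribution under $\pi_{m+1}^\dagger \times \ol \sigma_{-(m+1)}$.

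First I would unpack what $\ol \sigma_{-(m+1)}$ means as a sampling process: by definition of $\ol \sigma = \frac{1}{T}\sum_t \indic{\sigma\^t}$, drawing a trajectory consists of first sampling $t^\st \sim \Unif([T])$ and then having each player $j \in [m]$ act according to $\sigma_j\^{t^\st}$. This is exactly how the algorithm generates the first $m$ players' actions: it samples $t^\st$ once at the start and then, at each step $h$, draws $a_{j,h} \sim \sigma_{j,h}\^{t^\st}(\cdot \mid \tau_{j,h-1}, s_h)$ in Line \ref{line:draw-ajh}. Meanwhile, because $\MG$ has a single state, $s_h = \mf s$ almost surely under either process, so the state transitions match trivially.

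The remaining point is to verify that player $m+1$'s action is drawn from $\pi_{m+1,h}^\dagger(\tau_{m+1,h-1}, s_h)$ as defined in \eqref{eq:argmax-rhat-policy}. The definition of $\pi_{m+1}^\dagger$ in the proof of Lemma \ref{lem:algorithmic-lemma} relies on being able to reconstruct the other players' histories $\tau_{j,h-1}$ from $\tau_{m+1,h-1}$, which is possible because the low-order bits of each reward $r_{m+1,g}$ encode the full action profile $\ba_g$ via $\enc(\cdot)$; with $\tau_{j,h-1}$ in hand, $\wh q_{j,h}$ is computed exactly as in \eqref{eq:define-whq-real}--\eqref{eq:define-tilq-real}. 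This computation coincides bit-for-bit with Line \ref{line:compute-hatq-multiplayer} of the algorithm (note that the algorithm has the luxury of already maintaining each $\tau_{j,h-1}$ explicitly, so no reconstruction is needed, but the resulting $\wh q_{j,h}$ is the same random variable). Then Lines \ref{line:draw-samples}--\ref{line:mp1-br} draw $K$ i.i.d.\ samples from $\bigtimes_{j\in[m]} \wh q_{j,h}$, form the empirical reward $\wh R_{m+1,h}$, and take its argmax---which is exactly the distributional definition of $\pi_{m+1,h}^\dagger(\tau_{m+1,h-1}, s_h)$.

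Finally, the rewards $r_{j,h}$ computed in Line \ref{line:reward-simulate} are set to $R_{j,h}(s_h, \ba_h)$, matching the definition of a trajectory in $\MG$, and each player's history $\tau_{j,h}$ is updated in the canonical way. Combining these per-step equalities gives, by induction, that the joint law of $(\tau_{1,h}, \ldots, \tau_{m+1,h})_{h\in[H]}$ under the algorithm equals the law under $\pi_{m+1}^\dagger \times \ol \sigma_{-(m+1)}$. The entire argument is essentially a bookkeeping check; the only place where one must be slightly careful is the reconstruction step for $\pi_{m+1}^\dagger$, which is the reason the extra $\enc(\ba)$ term was added to the reward in Definition \ref{def:mg-g-multiplayers}.
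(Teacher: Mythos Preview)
Your proposal is correct and takes essentially the same approach as the paper: the paper simply states that the lemma ``is immediate from our construction above'' and gives no further argument, while you have written out the per-step induction that makes this immediacy explicit. In particular, your observation that the algorithm maintains each $\tau_{j,h-1}$ directly (so no reconstruction via $\enc(\cdot)$ is needed algorithmically, yet the resulting $\wh q_{j,h}$ coincides with the quantity in the definition of $\pi_{m+1}^\dagger$) is exactly the point the paper is implicitly relying on.
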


   \paragraph{Analyzing the distributions $\wh q_{j,h}$.} %
   Fix any $i \in [m+1]$. We next  prove some facts about the
   distributions $\wh q_{j,h}$ defined above (as a function of
   $(\tau_{i,h-1}, s_h)$) in the process of computing
   $\pi_{i,h}^\dagger(\tau_{i,h-1}, s_h)$.

   For each $h \in [H]$, consider any choice of $(\tau_{i,h-1}, s_h)
   \in \CH_{i,h-1} \times \MS$; note that for each $j \in [m+1]$, the
   distributions $\wh q_{j,h} \in \Delta(\MA_j)$ for $h \in [H]$ may
   be viewed as an application Vovk's aggregating algorithm (Proposition
  \ref{prop:online-tvd}) in the following setting: the number of steps
  ($T$, in the context of Proposition
  \ref{prop:online-tvd}; note that $T$ has a different meaning in the
  present proof)
  horizon is $H$, the context space is $\bigcup_{h=1}^H \CH_{j,h-1}
  \times \MS$, and the output space is $\MA_j$. The expert set is
  $\cI=\crl{\rho_j\^1, \ldots, \rho_j\^T}$ (which has $\abs{\cI}=T$),
  and the experts' predictions on a context $(\tau_{j,h-1}, s) \in \CH_{j,h-1} \times \MS$ are defined via $\rho_j\^t(\cdot | \tau_{j,h-1}, s) :=  \sigma_{j,h}\^t( \cdot | \tau_{j,h-1}, s) \in \Delta(\MA_j)$. Then for each $h \in [H]$, the distribution $\wh q_{j,h}$ is obtained by updating the aggregating algorithm with the context-observation pairs $(\tau_{j,h'-1}, a_{j,h'})$ for $h' = 1, 2, \ldots, h-1$.

  In more detail, fix any $t^\st \in [T]$ and $j \in [m+1]$ with $i
  \neq j$. We may apply Proposition \ref{prop:online-tvd} with the
  number of steps set to $H$, the set of experts as $\cI=\{ \rho_j\^1,
  \ldots, \rho_j\^T \}$, and contexts and outcomes generated according
  to the distribution induced by running the policy $\pi_{i}^\dagger
  \times \sigma_{-i}\^{t^\st}$ in the Markov game $\MG$ as follows:
  \begin{itemize}
  \item For each $h \in [H]$, we are given, at steps $h' < h$, the actions $a_{k,h'}$ rewards $r_{k,h'}$ for all agents $k \in [m+1]$, as %
    well as the states $s_1, \ldots, s_h$.
    \begin{itemize}
    \item For each $k \in [m+1]$, set $\tau_{k,h-1} = (s_1, a_{k,1}, r_{k,1}, \ldots, s_{h-1}, a_{k,h-1}, r_{k,h-1})$ to be agent $k$'s history.
    \item The \emph{context} fed to the aggregation algorithm at step $h$ is $(\tau_{j,h-1}, s_h)$. 
    \item The \emph{outcome} at step $h$ is given by $a_{j,h} \sim \sigma_{j,h}\^{t^\st}(\cdot | \tau_{j,h-1}, s_h)$; note that this choice satisfies the realizability assumption in Proposition \ref{prop:online-tvd}.
    \item To aid in generating the next context at step $h+1$, choose $a_{k,h} \sim \sigma_{k,h}^{t^\st}(\tau_{k,h-1}, s_h)$ for all $k \in [m+1] \backslash \{ i,j \}$ and $a_{i,h} = \pi_{i,h}^\dagger(\tau_{i,h-1}, s_h)$. 
      Then set $s_{h+1}$ to be the next state given the transitions of $\MG$ and the action profile $\ba_h = (a_{1,h}, \ldots, a_{m+1,h})$. %
    \end{itemize}
  \end{itemize}

  By Proposition \ref{prop:online-tvd}, it  follows that   
  for any fixed $t^\st \in [T]$ and $j \in [m+1]$ with $j \neq i$,
  under the process described above we have
  \begin{align}
    \E_{\pi_{i}^\dagger\times \sigma_{-i}\^{t^\st}} \left[ \sum_{h=1}^H \tvd{\sigma_{j,h}\^{t^\st}(\tau_{j,h-1}, s_h)}{\wh q_{j,h}} \right] \leq \sqrt{H\cdot\log T}.\label{eq:tvd-qhat-nonmarkov}
  \end{align}

   \paragraph{Analyzing the value of $\pi_{m+1}^\dagger$.}
   Next, using the development above, we show that if \cref{alg:3nash}
   successfully computes a Nash equilibrium with constant probability (via $\pi_{m+1}^\dagger$)
   whenever $\wb{\sigma}$ is an $\eps$-CCE. We first state the following claim,
   which is proven in the sequel by analyzing the values $V_i^{\pi_i^\dagger \times \ol \sigma_{-i}}$ for $i \in [m]$.
     \begin{lemma}
    \label{lem:main-players-lb}
    If $\ol \sigma$ is an $\ep$-CCE of $\MG$, then it holds that for all $i \in [m]$,
    \begin{align}
V_{i}^{\ol \sigma} \geq -\ep - m\sqrt{\log(T)/H}\nonumber.
    \end{align}
  \end{lemma}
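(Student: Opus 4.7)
The plan is to invoke the $\ep$-CCE property of $\ol\sigma$ against a carefully chosen deviation $\widetilde\pi_i \in \Pirndrnd_i$ for each $i \in [m]$, ensuring $V_i^{\widetilde\pi_i \times \ol\sigma_{-i}} \geq -m\sqrt{\log(T)/H}$; together with the CCE inequality $V_i^{\ol\sigma} \geq V_i^{\widetilde\pi_i \times \ol\sigma_{-i}} - \ep$, this yields the claim. My candidate for $\widetilde\pi_i$ is: at each step $h$, play the best response in the normal-form game $G$ to the product distribution of opponents' actions predicted by Vovk's aggregating algorithm, namely
\[
  \widetilde\pi_{i,h}(\tau_{i,h-1}, s_h) \ldef \argmax_{a \in \MA_i} \E_{\ba_{-i} \sim \bigtimes_{j \neq i} \wh q_{j,h}} \brk*{R_{i,h}(s_h, (a, \ba_{-i}))},
\]
where $\wh q_{j,h}$ is the aggregation prediction from \cref{eq:define-whq-real}. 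This is a valid element of $\Pirndrnd_i$ because the low-order encoding bits of player $i$'s reward stream let them reconstruct the full joint history $\tau_{h-1}$ from $\tau_{i,h-1}$, exactly as set up earlier in the proof of Lemma \ref{lem:algorithmic-lemma}.

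First I would show that under the \emph{predicted} product distribution, $\widetilde\pi_{i,h}$ yields non-negative expected instantaneous reward: $\E_{\ba_{-i} \sim \bigtimes_{j \neq i} \wh q_{j,h}}\brk*{\ol R_{i,h}(s_h, (\widetilde\pi_{i,h}, \ba_{-i}))} \geq 0$. Recall that $\ol R_{i,h}$ is nonzero only when $a_{m+1} = (i, a_i')$, in which case $\ol R_{i,h} = \frac{1}{H}\prn*{(M_i)_{a_i, \ba_{-(i,m+1)}} - (M_i)_{a_i', \ba_{-(i,m+1)}}}$. Because $\bigtimes_{j \neq i} \wh q_{j,h}$ is a product distribution, conditional on $\crl*{j' = i}$ the advised action $a_i'$ and the opponent profile $\ba_{-(i,m+1)}$ are independent; hence by linearity, the best response $a_i^\star := \argmax_{a_i} \E_{\wh q}[(M_i)_{a_i, \ba_{-(i,m+1)}}]$ dominates the mixture obtained by instead sampling $a_i' \sim \wh q_{m+1,h}|_{j'=i}$, giving the non-negativity. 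The encoding contribution to $R_{i,h}$ is bounded by $O(\ep^3/H)$ per step and is absorbed.

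Next I would transfer this bound to the true opponent distribution $\bigtimes_{j \neq i} \sigma_{j,h}^{t^\star}(\tau_{j,h-1}, s_h)$ under $\widetilde\pi_i \times \sigma_{-i}^{t^\star}$. Since $\abs{R_{i,h}} \leq 1/H$ and total variation is subadditive across coordinates of a product distribution, the per-step loss incurred by swapping $\bigtimes_{j \neq i} \wh q_{j,h}$ for the true product is at most $\frac{1}{H} \sum_{j \neq i} \tvd{\wh q_{j,h}}{\sigma_{j,h}^{t^\star}}$. Summing over $h \in [H]$ and applying \cref{eq:tvd-qhat-nonmarkov}---which is Vovk's bound (\cref{prop:online-tvd}) applied to each opponent $j$ in turn---yields $\E\brk*{\sum_h \tvd{\wh q_{j,h}}{\sigma_{j,h}^{t^\star}}} \leq \sqrt{H \log T}$ for each $j \in [m+1] \setminus \crl*{i}$. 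There are $m$ such indices, so averaging over $t^\star \sim \Unif([T])$ gives $V_i^{\widetilde\pi_i \times \ol\sigma_{-i}} \geq -\frac{m}{H} \sqrt{H \log T} = -m\sqrt{\log(T)/H}$, and combining with the $\ep$-CCE inequality finishes the proof.

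The main technical subtlety will be the product-structure argument in the first step: it is essential that $\bigtimes_{j \neq i} \wh q_{j,h}$ factors across players (rather than being an arbitrary correlated distribution), so that $a_i'$ is conditionally independent of $\ba_{-(i,m+1)}$ given $\crl*{j' = i}$ and the dominance of the maximum of a linear functional over any mixture of its inputs can be invoked. The remaining ingredients---bounding the encoding term, invoking Vovk's guarantee once per opponent, and averaging over $t^\star$---amount to routine bookkeeping on top of the machinery already developed earlier in the proof.
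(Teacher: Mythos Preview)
Your proposal is correct and essentially identical to the paper's proof. The paper uses exactly your deviation policy $\widetilde\pi_i$ (denoted $\pi_i^\dagger$ there, \cref{eq:define-pii-dagger}), applies the same TV swap and Vovk bound \cref{eq:tvd-qhat-nonmarkov} once per opponent $j \neq i$, and isolates your non-negativity claim as a separate statement (Lemma~\ref{lem:playerk-nonneg}) proved via the same product-structure argument you sketch; the only cosmetic difference is that the paper handles the encoding term by noting $\enc(\cdot)\geq 0$ rather than bounding its magnitude.
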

  Note that in the game $\MG$, since for all $h \in [H]$, $s \in \MS$
  and $\ba \in \ol \MA$, it holds that $\left|\sum_{j=1}^{m+1} R_{j,h}
  (s,\ba)\right| \leq \frac{(m+1)\ep^2}{H}$ (which holds since in (\ref{eq:rewards-multiplayers-true}), $\enc(\ba)$ is multiplied by $\frac{1}{H} \cdot 2^{-3\lceil \log 1/\ep \rceil}$),
    it follows that $\left|\sum_{j=1}^{m+1} V_{j}^{\ol \sigma}\right| \leq (m+1)\ep^2$. Thus, by Lemma \ref{lem:main-players-lb}, we have
  $V_{m+1}^{\ol \sigma} \leq (m+1)\ep^2 + m \cdot (\ep +
  m\sqrt{\log(T)/H})$, and since $\ol \sigma$ is an $\ep$-CCE of $\MG$ it
  follows that
  \begin{align}
    \label{eq:vmdagger-ub}
    V_{m+1}^{\pi_{m+1}^\dagger\times \ol \sigma_{-(m+1)}} \leq 2(m+1) \cdot \ep  + m^2\cdot \sqrt{\log(T)/H}.
  \end{align}

  To simplify notation, we will write $\wh q_h := \wh q_{1,h} \times
  \cdots \times \wh q_{m,h}$ in the below calculations, where we
  recall that each $\wh q_{j,h}$ is determined given the history up to step $h$, $(\tau_{j,h-1}, s_h)$, as defined in (\ref{eq:define-whq-real}) and (\ref{eq:define-tilq-real}). %
  An action profile drawn from $\wh q_h$ is denoted as $\ba \sim \wh q_h$, with $\ba \in \MA$.  
We may now write $V_{m+1}^{\pi_{m+1}^\dagger \times \ol
  \sigma_{-(m+1)}}$ as follows:
  \begin{align}
    & V_{m+1}^{\pi_{m+1}^\dagger\times \ol \sigma_{-(m+1)}}\nonumber\\
    =& \E_{t^\st \sim [T]} \sum_{h =1}^H \E_{\pi_{m+1}^\dagger\times \sigma_{-(m+1)}\^{t^\st}} \E_{\substack{a_{j,h} \sim \sigma_{j,h}\^{t^\st}(\tau_{j,h-1}, s_h) \ \forall j \in [m] \\ a_{m+1,h} \sim \pi_{m+1,h}^\dagger(\tau_{m+1,h-1}, s_h) \\ \ba := (a_{1,h}, \ldots, a_{m+1,h})}} \left[ R_{m+1,h}(s_h, \ba)\right]\nonumber\\
    \geq & \E_{t^\st \sim [T]} \sum_{h=1}^H \E_{\pi_{m+1}^\dagger\times \sigma_{-(m+1)}\^{t^\st}} \Bigg(\E_{\substack{a_{j,h} \sim \wh q_{j,h} \ \forall j \in [m] \\ a_{m+1,h} \sim \pi_{m+1,h}^\dagger(\tau_{m+1,h-1}, s_h) \\ \ba := (a_{1,h}, \ldots, a_{m+1,h})}} \left[ R_{m+1,h}(s_h, \ba)\right]\icml{ \nonumber\\
    & \qquad \qquad } - \frac 1H \sum_{j \in [m]} \tvd{\sigma_{j,h}\^{t^\st}(\tau_{j,h-1}, s_h)}{\wh q_{j,h}} \Bigg) \nonumber\\
    \geq &  \E_{t^\st \sim [T]} \sum_{h=1}^H\E_{\pi_{m+1}^\dagger\times \sigma_{-(m+1)}\^{t^\st}}  \Bigg(\max_{a_{m+1,h}' \in \MA_{m+1}} \E_{\ba \sim \wh q_h} \left[  R_{m+1,h}(s_h, (\ba, a_{m+1,h}'))\right] - \frac{2\ep}{H} - \frac{\delta}{H} \icml{\nonumber\\
    & \qquad \qquad} - \frac 1H \sum_{j \in [m]} \tvd{\sigma_{j,h}\^{t^\st}(\tau_{j,h-1},s_h)}{\wh q_{j,h}} \Bigg) \nonumber\\
    \geq & \frac{1}{H}\cdot  \E_{t^\st \sim [T]} \sum_{h =1}^H \E_{\pi_{m+1}^\dagger\times \sigma_{-(m+1)}\^{t^\st}} \left( \max_{j \in [m], a_{j,h}' \in \MA_j} \E_{\ba \sim \wh q_h }[(M_j)_{a_j', \ba_{-j}} - (M_j)_{\ba}]\right) - \frac{m}{H} \cdot \sqrt{H \log T} - 2\ep - \delta - \ep^2\nonumber,
  \end{align}
  where:
  \begin{itemize}
  \item The first inequality follows from the fact that $R_{m+1,h}(\cdot)$ takes values in $[-1/H,1/H]$ and the fact that the total variation between product distributions is bounded above by the sum of total variation distances between each of the pairs of component distributions.
  \item The second inequality follows from the inequality (\ref{eq:max-hat-diff}) of Lemma \ref{lem:sampling-qhat}. %
  \item The final equality follows from the definition of the rewards
    in (\ref{eq:rewards-multiplayers-true}) and
    (\ref{eq:rewards-multiplayers}), and by summing
    (\ref{eq:tvd-qhat-nonmarkov}) over $j \in [m]$. We remark that the $-\ep^2$ term in the final line comes from the term $\frac{1}{H} \cdot 2^{-3\lceil \log 1/\ep \rceil} \cdot \enc(\ba)$ in (\ref{eq:rewards-multiplayers-true}).
  \end{itemize}
  Rearranging and using (\ref{eq:vmdagger-ub}) as well as the fact that $\delta + \ep^2= \ep/(6H) + \ep^2 \leq \ep$ (as $\ep \leq 1/2$), we get that
  \begin{align}
     & \E_{t^\st \sim [T]}  \E_{\pi_{m+1}^\dagger\times \sigma_{-(m+1)}\^{t^\st}} \sum_{h=1}^H \left( \max_{j \in [m], a_{j,h}' \in \MA_j} \E_{\ba \sim \wh q_h } [(M_j)_{a_j', \ba_{-j}} - (M_j)_{\ba}] \right) \nonumber\\
    \leq & 2H \cdot \ep \cdot (m+1) +(m+1)m \cdot  \sqrt{H \log T} + 3H\ep\nonumber.
  \end{align}
  Since $\wh q_h$ is a product distribution a.s., we have that
  \[
    \max_{j \in [m], a_{j,h}' \in \MA_j} \E_{\ba \sim \wh q_h }
    [(M_j)_{a_j', \ba_{-j}} - (M_j)_{\ba}] \geq 0.
  \] 
  Therefore, by Markov's inequality, with probability at least $1/2$
  over the choice of $t^\st \sim [T]$ and the trajectories $(\tau_{j,h-1}, s_h) \sim \pi_{m+1}^\dagger\times \sigma_{-(m+1)}\^{t^\st}$ for $j \in [m]$ (which collectively determine $\wh q_h$), there is some $h \in [H]$ so that
  \begin{align}
    \max_{j \in [m], a_{j,h}' \in \MA_j} \E_{\ba \sim \wh q_h } [(M_j)_{a_j', \ba_{-j}} - (M_j)_{\ba}] \leq 10(m+1) \cdot  \ep + 2(m+1)m \cdot \sqrt{ \log(T)/H} \leq 12(m+1) \cdot \ep\label{eq:hatq-ne},
  \end{align}
  where the final inequality follows as long as $H \cdot \ep^2 \geq
  m^2 \log T$, i.e., $T \leq \exp \left( \frac{H \cdot \ep^2}{m^2}
  \right)$, which holds since $H \geq n_0$ and we have assumed that $T \leq \exp(\ep^2 \cdot n_0 / m^2)$. 

  Note that (\ref{eq:hatq-ne}) implies that with probability at least $1/2$ under an episode drawn from $\pi_{m+1}^\dagger \times \ol \sigma_{-(m+1)}$, there is some $h \in [H]$ so that $\wh q_h$ is a $12(m+1) \cdot \ep$-Nash equilibrium of the stage game $G$. Thus, by Lemma \ref{lem:alg-simulation}, with probability at least $1/2$ under an episode drawn from the distribution of Algorithm \ref{alg:3nash}, there is some $h \in [H]$ so that $\wh q_h$ is a $12(m+1) \cdot \ep$-Nash equilibrium of $G$.

  Finally, the following two observations conclude the proof of Lemma \ref{lem:algorithmic-lemma}.
  \begin{itemize}
  \item If $\wh q_h$ is a $12(m+1) \cdot \ep$-Nash equilibrium of $G$,
    then by definition of the reward function $R_{m+1, h}(\cdot)$ in (\ref{eq:rewards-multiplayers-true}), upper bounding $\frac{1}{H} \cdot 2^{-3\lceil \log 1/\ep \rceil} \cdot \enc(\ba)$ by $\ep^2/H$, 
    \begin{align}
\max_{a' \in \MA_{m+1}} \E_{\ba \sim \wh q_h} \left[ R_{m+1, h}(\mf s, (\ba, a'))\right] \leq \frac{1}{H} \cdot 12(m+1) \cdot \ep + \frac{\ep^2}{H}\nonumber,
    \end{align}
    which implies, by Lemma \ref{lem:sampling-qhat}, that with
    probability at least $1-\delta$ over the draw of $\ba_h^1, \ldots, \ba_h^K$,
    \begin{align}
\max_{a' \in \MA_{m+1}} \left\{ \wh R_{m+1, h}(a') \right\} \leq \frac{1}{H} \cdot 12(m+1) \cdot \ep + \frac{\ep^2}{H} + \frac{\ep}{H} \leq \frac{1}{H} \cdot 14(m+1) \cdot \ep\nonumber,
    \end{align}
    i.e., the check in Line \ref{line:check-nash} of Algorithm \ref{alg:3nash} will pass and the algorithm will return $\wh q_h$ (if step $h$ is reached). 
  \item Conversely, if $\max_{a' \in \MA_{m+1}} \left\{ \wh R_{m+1,h}(a') \right\} \leq 14(m+1) \cdot \ep$, i.e., the check in Line \ref{line:check-nash} passes, then by Lemma \ref{lem:sampling-qhat}, with probability at least $1-\delta$ over $\ba_h^1, \ldots, \ba_h^K$,
    \begin{align}
\max_{a' \in \MA_{m+1}} \E_{\ba \sim \wh q_h} \left[ R_{m+1, h}(\mf s, (\ba, a')) \right] \leq \frac{1}{H} \cdot 14(m+1) \cdot \ep + \frac{\ep}{H}\leq \frac{1}{H} \cdot 15(m+1) \cdot \ep \nonumber,
    \end{align}
    which implies, by the definition of $R_{m+1,h}(\cdot)$ in (\ref{eq:rewards-multiplayers-true}) and (\ref{eq:rewards-multiplayers}), that $\wh q_h$ is a $16(m+1) \cdot \ep$-Nash equilibrium of $G$. 
  \end{itemize}
  Taking a union bound over all $H$ of the probability-$\delta$
  failure events from Lemma \ref{lem:sampling-qhat} for the sampling
  $\ba_h^1, \ldots, \ba_h^K \sim \wh q_h$ (for $h \in [H]$), as well as over the probability-$1/2$ event that there is no $\wh q_h$ which is a $12(m+1) \cdot \ep$-Nash equilibrium of $G$, we obtain that with probability at least $1 - 1/2 - H \cdot \ep/(6H) \geq 1/3$, Algorithm \ref{alg:3nash} outputs a $16(m+1) \cdot \ep$-Nash equilibrium of $G$.
\end{proof}

Finally, we prove the remaining claims stated without proof above. %

\begin{proof}[Proof of Lemma \ref{lem:sampling-qhat}]
  Since $R_{m+1,h}(\mf s, \ba) \in [-1/H,1/H]$ for each $\ba \in
  \ol\MA$, by Hoeffding's inequality, for any fixed $a' \in
  \MA_{m+1}$, with probability at least $1-\delta/ |\MA_{m+1}| =
  1-\delta/(mn_0)$ over the draw of $\ba_h^1, \ldots, \ba_h^K \sim \bigtimes_{j \in [m]} \wh q_{j,h}$, it holds that
  \begin{align}
 \left|\wh R_{m+1,h}(a') - \E_{a_j \sim \wh q_{j,h} \ \forall j \in [m]} [R_{m+1,h}(s_h, (a_1, \ldots, a_m, a'))]\right| \leq \frac{2}{H} \cdot \sqrt{\frac{\log mn_0/\delta}{K}} \leq \frac{\ep}{H} \nonumber,
  \end{align}
  where the final inequality follows from the choice of $K = \lceil 4 \log(mn_0/\delta) / \ep^2 \rceil$. The statement of the lemma follows by a union bound over all $|\MA_{m+1}|$ actions $a'\in\cA_{m+1}$. 
\end{proof}

\begin{proof}[Proof of Lemma \ref{lem:main-players-lb}]
  Fix any agent $i \in [m]$. We will argue that the policy
  $\pi_i^\dagger \in \Pidet_i$ defined within the proof of
  Lemma \ref{lem:algorithmic-lemma} satisfies $V_{i}^{\pi_i^\dagger, \ol \sigma_{-i}} \geq -m\sqrt{\log(T)/H}$. Since $\ol \sigma$ is an $\ep$-CCE of $\MG$, it follows that
  \begin{align}
\ep \geq V_{i}^{\pi_i^\dagger, \ol \sigma_{-i}} - V_{i}^{\ol \sigma} \geq -m\sqrt{\log(T) / H} - V_{i}^{\ol \sigma}\nonumber,
  \end{align}
  from which the result of Lemma \ref{lem:main-players-lb} follows after rearranging terms.
   
    To simplify notation, let us write $\wh q_{-i,h} := \bigtimes_{j
      \neq i} \wh q_{j,h}$, where we recall that each $\wh q_{j,h}$ is determined given the history up to step $h$, $(\tau_{j,h-1}, s_h)$, as defined in (\ref{eq:define-whq-real}) and (\ref{eq:define-tilq-real}). An action profile drawn from $\wh q_{-i,h}$ is denoted by $\ba_{-i} \sim \wh q_{-i,h}$, with $\ba_{-i} \in \ol \MA_{-i}$.
    We compute
      \begin{align}
        & V_{i}^{\pi_i^\dagger\times \ol \sigma_{-i}}\nonumber\\
        =& \E_{t^\st \sim [T]} \sum_{h=1}^H \E_{\pi_i^\dagger\times \sigma_{-i}\^{t^\st}} \E_{\ba_{-i} \sim \bigtimes_{j \neq i}  \sigma_{j,h}\^{t^\st}(\tau_{j,h-1}, s_h)} \left[ R_{i,h}(s_h, (\pi_{i,h}^\dagger(\tau_{i,h-1}, s_h), \ba_{-i}))\right]\nonumber\\
        \geq & \E_{t^\st \sim [T]} \sum_{h=1}^H \E_{\pi_i^\dagger\times \sigma_{-i}\^{t^\st}} \left(\E_{\ba_{-i} \sim \wh q_{-i,h}} \left[R_{i,h}(s_h, (\pi_{i,h}^\dagger(\tau_{i,h-1}, s_h), \ba_{-i}))\right] - \frac{1}{H} \sum_{j \neq i} \tvd{\sigma_{j,h}\^{t^\st}(\tau_{j,h-1}, s_h)}{\wh q_{j,h}}\right)\nonumber\\
        \geq & \E_{t^\st \sim [T]} \sum_{h =1}^H \E_{\pi_i^\dagger\times \sigma_{-i}\^{t^\st}}  \left(\max_{a_i' \in \MA_i} \E_{\ba_{-i} \sim \wh q_{-i,h}} \left[ R_{i,h}(s_h, (a_i', \ba_{-i}))\right] \right) - \frac{m}{H} \cdot \sqrt{H \log T}\nonumber\\
        \geq & - m\sqrt{\log(T) / H}\nonumber,
      \end{align}
      where:
      \begin{itemize}
      \item The first inequality follows from the fact that the rewards $R_{i,h}(\cdot)$ take values in $[-1/H,1/H]$ and that the total variation between product distributions is bounded above by the sum of total variation distances between each of the pairs of component distributions.
      \item The second inequality follows from the definition of $\pi_{i,h}^\dagger(\tau_{i,h-1}, s_h)$ in terms of $\wh q_{-i,h}$ in (\ref{eq:define-pii-dagger}) as well as (\ref{eq:tvd-qhat-nonmarkov}) applied to each $j \neq i$ and each $t^\st \in [T]$.
        \item The  final inequality follows by Lemma
          \ref{lem:playerk-nonneg} below, applied to agent $i$ and to
          the distribution $\wh q_{-i,h}$, which we recall is a
          product distribution almost surely. %
      \end{itemize}
     \end{proof}
    \begin{lemma}
      \label{lem:playerk-nonneg}
      For any $i \in [m]$, $s \in \MS, h \in [H]$, and any product distribution $q \in \Delta(\ol \MA_{-i})$, it holds that
      \begin{align}
\max_{a_i'\in \MA_i} \E_{\ba \sim q} \left[ R_{i,h}(s, (a_i', \ba))\right] \geq 0\nonumber.
      \end{align}
    \end{lemma}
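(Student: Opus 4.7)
The plan is to show that player $i$'s best response (in the underlying normal-form game $G$) to the marginal of $q$ on the first-$m$-players' actions achieves non-negative expected reward. This is essentially the defining property of the kibitzer construction of \citet{borgs2008myth}: the kibitzer (player $m+1$) can ``punish'' player $i$ only by suggesting an alternative $\tilde a \in \MA_i$, which, by the definition of best response, cannot achieve strictly higher $G$-utility for $i$ than the chosen $a_i'$ does against the same distribution on the other $m-1$ players' actions.

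To make this precise, I would write $q = q_{[m]\setminus\{i\}} \times q_{m+1}$, where $q_{[m]\setminus\{i\}}\in\Delta(\prod_{j\in[m]\setminus\{i\}}\MA_j)$ and $q_{m+1}\in\Delta(\MA_{m+1})$, and set $\alpha := q_{m+1}\prn*{\crl*{(i,a):a\in\MA_i}}$. Inspecting the two cases of \cref{eq:rewards-multiplayers} relevant when $i\in[m]$, namely $j'\neq i$ (which yields $\ol R_{i,h} = 0$) and $j' = i$ (which yields $\frac{1}{H}((M_i)_{a_i',\ba_0} - (M_i)_{\tilde a,\ba_0})$, with $\tilde a$ the kibitzer's suggestion), and using that $q$ is a product distribution, we obtain
\begin{align*}
\E_{\ba\sim q}\brk*{\ol R_{i,h}(s,(a_i',\ba))} = \frac{\alpha}{H}\prn*{\E_{\ba_0\sim q_{[m]\setminus\{i\}}}\brk*{(M_i)_{a_i',\ba_0}} - \E_{\tilde a\sim\tilde p,\,\ba_0}\brk*{(M_i)_{\tilde a,\ba_0}}},
\end{align*}
where $\tilde p \in \Delta(\MA_i)$ is the conditional distribution of $\tilde a$ given $j' = i$ (taken arbitrarily when $\alpha = 0$). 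Choosing $a_i'\in\argmax_{a\in\MA_i}\E_{\ba_0}\brk*{(M_i)_{a,\ba_0}}$ renders the quantity in parentheses non-negative, since the second term is a $\tilde p$-average of quantities bounded above by the first.

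Finally, the auxiliary term $\frac{1}{H}\cdot 2^{-3\lceil\log 1/\ep\rceil}\cdot\enc(\ba)$ appearing in \cref{eq:rewards-multiplayers-true} is pointwise non-negative (as $\enc(\ba)\in[0,1]$), so it only preserves the inequality, giving $\E_{\ba\sim q}\brk*{R_{i,h}(s,(a_i',\ba))}\geq 0$ as required. There is essentially no obstacle here: the argument is a direct unpacking of \cref{eq:rewards-multiplayers-true,eq:rewards-multiplayers}, and crucially exploits that $q$ is a \emph{product} distribution, so that the kibitzer's choice of $(j',\tilde a)$ can be analyzed independently of the other $m-1$ players' joint action, enabling a pointwise best response by player $i$.
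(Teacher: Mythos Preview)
Your proof is correct and follows essentially the same approach as the paper's: choose $a_i'$ to be a best response (in $G$) to the marginal of $q$ on the other $[m]\setminus\{i\}$ players, and use the product structure of $q$ to decouple the kibitzer's action from $\ba_0$. The only cosmetic difference is that the paper lower-bounds the expectation over $a_{m+1}$ by the worst-case kibitzer action (a $\min$ over $\MA_{m+1}$), whereas you compute the expectation directly via the conditional distribution $\tilde p$; both routes are valid and yield the same conclusion.
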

    \begin{proof}
      Choose $a_i^\st := \argmax_{a_i' \in \MA} \E_{\ba\sim q} \left[
        (M_i)_{a_i', \ba}\right]$. Now we compute 
\begin{align}
  H \cdot   \E_{\ba \sim q} \left[ R_{i,h}(s, (a_i^\st, \ba))\right] \geq &H \cdot \min_{a_{m+1}' \in \MA_{m+1}} \E_{\ba\sim q} \left[ R_{i,h}(s, (a_i^\st, a_{m+1}', \ba_{-(m+1)}))\right]\nonumber\\
  \geq& \min_{(j, a_j') \in \MA_{m+1}} \One{j = i} \cdot  \E_{\ba \sim q} \left[   (M_i)_{a_i^\st, \ba} - (M_i)_{a_i', \ba}\right]\nonumber\\
  \geq & 0\nonumber,
\end{align}
where the first inequality follows since $q$ is a product distribution, the second inequality uses that $\enc(\cdot)$ is non-negative, and the final inequality follows since by choice of $a_i^\st$ we have $\E_{\ba \sim q} \left[ (M_i)_{a_i^\st, \ba} \right] \geq \E_{\ba \sim q} \left[ (M_i)_{a_i', \ba} \right]$ for all $a_i' \in \MA_i$. 
\end{proof}

\subsection{Remarks on bit complexity of the rewards}
The Markov game $\MG(G)$ constructed to prove Theorem
\ref{thm:nonmarkov-multiplayer} uses lower-order bits of the rewards
to record the action profile taken each step. These lower order bits
may be used by each agent to infer what actions were taken by other
agents at the previous step, and we use this idea to construct the
best-response policies $\pi_i^\dagger$ defined in the proof.  As a
result of this aspect of the construction, the rewards of the game
$\MG(G)$ each take $O(m \cdot \log(n) + \log(1/\ep))$ bits to
specify. As discussed in the proof of Theorem
\ref{thm:nonmarkov-multiplayer}, it is without loss of generality to
assume that the payoffs of the given normal-form game $G$ take $O(\log
1/\ep)$ bits each to specify, so when either $m \gg 1$ or $n \gg
1/\ep$, the construction of $\MG(G)$ uses more bits to express its
rewards than what is used for the normal-form game $G$.

It is possible to avoid this phenomenon by instead using the state transitions
of the Markov game to encode the action profile taken at each step, as
was done in the proof of Theorem \ref{thm:markov-formal}. The idea,
which we sketch here, is to replace the game $\MG(G)$ of Definition \ref{def:mg-g-multiplayers} with the following game $\MG'(G)$:
\begin{defn}[Alternative construction to Definition \ref{def:mg-g-multiplayers}]
  \label{def:mg-g-gen}
  Given an $m$-player, $n_0$-action normal-form game $G$, 
      we define the game $\MG'(G) = (\MS, H, (\MA_i)_{i \in [2]}, \BP,
      (R_i)_{i \in [2]}, \mu)$ as follows.
\begin{itemize}
\item The horizon of $\MG$ is $H = n_0$. %
\item Let $A = n_0$. The action spaces of agents $1, 2, \ldots, m$ are given by $\MA_1 = \cdots = \MA_m = [A]$. The action space of agent $m+1$ is %
  \begin{align}
\MA_{m+1} = \{ (j, a_j) \ : \ j \in [m], a_j\in \MA_j \}\nonumber,
  \end{align}
  so that $|\MA_{m+1}| = Am \leq n$. 

 We write $\MA = \prod_{j=1}^m \MA_j$ to denote the joint action space of the first $m$ agents, and $\ol \MA  := \prod_{j=1}^{m+1} \MA_j$ to denote the joint action space of all agents. Then $|\ol \MA| = A^m \cdot (mA) = mA^{m+1} \leq n$. %

\item The state space $\MS$ is defined as follows. There are $|\ol \MA|$ states, one for each action tuple $\ba \in \ol \MA$. For each $\ba \in \ol \MA$, we denote the corresponding state by $\mf s_\ba$. %

\item For all $h \in [H]$, the reward to agent $j \in [m+1]$ given action profile $\ba = (a_1, \ldots, a_{m+1})$ at any state $s \in \MS$ is as follows: writing $a_{m+1} = (j', a_{j'}')$,%
\begin{align}
  R_{j,h}(s, \ba) := \begin{cases}
    0 &: j \not \in \{j', m+1\} \\
  \frac{1}{H} \cdot \left((M_j)_{a_1, \ldots, a_m} - (M_j)_{a_1, \ldots, a_{j'}', \ldots, a_m}\right) &: j=j' \\
  \frac{1}{H} \cdot \left( (M_j)_{a_1, \ldots, a_{j'}', \ldots, a_m} - (M_j)_{a_1, \ldots, a_m} \right) &: j=m+1.
  \end{cases}\label{eq:define-nonmarkov-rewards}
\end{align}
\item At each step $h\in [H]$, if action profile $\ba \in \ol\MA$ is taken, the game transitions to the state $\mf s_{\ba}$. %
\end{itemize}
\end{defn}
Note that the number of states of $\MG'(G)$ is equal to $|\ol \MA| =
mn_0^{m+1}$, and so $|\MG'(G)| = mn_0^{m+1}$. As a result, if we were
to use the game $\MG'(G)$ in place of $\MG(G)$ in the proof of Theorem
\ref{thm:nonmarkov-multiplayer}, we would need to define $n_0 :=
\lfloor n^{1/(m+1)}/m \rfloor$ to ensure that $|\MG'(G)| \leq n$, and
so the condition $T < \exp(\ep^2 \cdot \lfloor n/m \rfloor / m^2)$
would be replaced by $T < \exp(\ep^2 \cdot \lfloor n^{1/(m+1)}/m
\rfloor / m^2)$. This would only lead to a small quantitative
degradement in the statement of Theorem \ref{thm:nonmarkov-formal},
with the condition in the statement replaced by $T < \exp(c \cdot \ep^2 \cdot n^{1/3})$ for some constant $c > 0$. However, it would render the statement of Theorem \ref{thm:statistical-lb} essentially vacuous. For this reason, we opt to go with the approach of Definition \ref{def:mg-g-multiplayers} as opposed to Definition \ref{def:mg-g-gen}.

We expect that the construction of Definition
\ref{def:mg-g-multiplayers} can nevertheless still be modified to use
$O(\log 1/\ep)$ bits to express each reward in the Markov game
$\MG$. In particular, one could introduce stochastic transitions to
encode in the state of the Markov game a small number of random bits
of the full action profile played at each step. We leave such an
approach for future work.

\section{Equivalence between $\Pirndrnd_j$ and $\Delta(\Pidet_j)$}
\label{sec:equivalence}
In this section we consider an alternate definition of the space $\Pirndrnd_i$ of randomized general policies of player $i$, and show that it is equivalent to the one we gave in Section \ref{sec:prelim}.

In particular, suppose we were to define a randomized general policy of agent $i$ as a distribution over deterministic general
  policies of agent $i$: we write $\Pirnd_i := \Delta(\Pidet_i)$ to
  denote the space of such distributions. %
  Moreover, write $\Pirnd := \Pirnd_1 \times \cdots \times \Pirnd_m = \Delta(\Pidet_1) \times \cdots \times \Delta(\Pidet_m)$ to denote the space of product distributions over agents' deterministic policies. %
  Our goal in this section is to show that policies in $\Pirnd$ are equivalent to those in $\Pirndrnd$ in the following sense: there is an embedding map $\PE : \Pirndrnd  \ra \Pirnd$, not depending on the Markov game, so that the distribution of a trajectory drawn from any $\sigma \in \Pirndrnd$, for any Markov game, is the same as the distribution of a trajectory drawn from $\PE(\sigma)$ (Fact \ref{fac:gen-decompose}). Furthermore, $\PE$ is surjective in the following sense: any policy $\til \sigma \in \Pirnd$ produces trajectories that are distributed identically to those of $\PE(\sigma)$ (and thus of $\sigma$), for some $\sigma \in \Pirndrnd$ (Fact \ref{fac:emb-inverse}). 
In Definition \ref{def:define-embed} below, we define $\PE$.   
  
\begin{defn}
  \label{def:define-embed}
  For $j \in [m]$ and $ \sigma_j \in \Pirndrnd_j$, define $\PE_j( \sigma_j) \in \Pirnd_j = \Delta(\Pidet_j)$ to put the following amount of mass on each $\pi_j \in \Pidet_j$:
  \begin{align}
(\PE_j( \sigma_j))(\pi_j) := \prod_{h=1}^H \prod_{(\tau_{j,h-1}, s_h) \in \CH_{j,h-1} \times \MS}  \sigma_j(\pi_{j,h}(\tau_{j,h-1}, s_h) \ | \ \tau_{j,h-1}, s_h)\label{eq:emb-def}
  \end{align}
  Furthermore, for $\sigma = (\sigma_1, \ldots, \sigma_m) \in \Pirndrnd$, define $\PE(\sigma) = (\PE(\sigma_1), \ldots, \PE(\sigma_m))$. 
\end{defn}
Note that, in the special case that $ \sigma_j \in \Pidet_j$, $\PE_j( \sigma_j)$ is the point mass on $\sigma_j$. 
\begin{fact}[Embedding equivalence]
  \label{fac:gen-decompose}
  Fix a $m$-player Markov game $\MG$ and, arbitrary policies $ \sigma_j \in \Pirndrnd_j$. %
  Then a trajectory drawn from the product policy $ \sigma = ( \sigma_1, \ldots,  \sigma_m) \in \Pirndrnd_1 \times \cdots \times \Pirndrnd_m$ is distributed identically to a trajectory drawn from $\PE( \sigma) \in \Pirnd$.%
\end{fact}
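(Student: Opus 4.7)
The plan is to prove this by induction on $h \in \{0, 1, \ldots, H\}$, showing that the joint distribution of the trajectory prefix $\tau_h$ induced by $\sigma = (\sigma_1, \ldots, \sigma_m) \in \Pirndrnd$ agrees with that induced by $\PE(\sigma) = (\PE_1(\sigma_1), \ldots, \PE_m(\sigma_m)) \in \Pirnd$. The base case $h = 0$ is immediate since the initial state distribution $\mu$ is common to both setups. For the inductive step, because rewards and transitions in any Markov game depend only on $(s_h, \ba_h)$, it suffices by the tower property to check that, conditional on any realization $\bar\tau_{h-1}$ of $\tau_{h-1}$, the conditional distribution of $\ba_h$ agrees under both policies.

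Under $\sigma$, players act independently given $\bar\tau_{h-1}$ and the current state $s_h$, so $\ba_h \sim \prod_{j=1}^m \sigma_{j,h}(\cdot \mid \tau_{j,h-1}, s_h)$, where each $\tau_{j,h-1}$ is determined by $\bar\tau_{h-1}$. Under $\PE(\sigma)$, we sample $\pi = (\pi_1, \ldots, \pi_m) \sim \PE(\sigma)$ at the start of the episode and set $a_{j,h} = \pi_{j,h}(\tau_{j,h-1}, s_h)$. The crucial observation is that by (\ref{eq:emb-def}), $\PE_j(\sigma_j)$ is the product measure on $\prod_{(h', \tau', s')} \MA_j$ whose marginal at coordinate $(h', \tau', s')$ equals $\sigma_{j,h'}(\cdot \mid \tau', s')$. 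The event $\{\tau_{h-1} = \bar\tau_{h-1}\}$ depends only on coordinates with first index $h' \leq h - 1$, whereas $a_{j,h} = \pi_{j,h}(\tau_{j,h-1}, s_h)$ is the coordinate with first index $h$. Since these coordinate sets are disjoint, they are independent under the product measure, so the conditional law of $a_{j,h}$ given $\tau_{h-1}$ is exactly $\sigma_{j,h}(\cdot \mid \tau_{j,h-1}, s_h)$. The factorization $\PE(\sigma) = \prod_{j=1}^m \PE_j(\sigma_j)$ then yields independence across $j$, so $\ba_h$ has the same product distribution as under $\sigma$.

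The only subtlety worth flagging is making the conditioning argument rigorous: the entire proof rests on the disjointness of the coordinate sets determining $\tau_{h-1}$ versus $a_{j,h}$ in the product measure (\ref{eq:emb-def}), which holds simply because the outer product in (\ref{eq:emb-def}) is indexed by the step $h$ (so that coordinates at distinct values of $h$ are independent, regardless of the histories involved). Once this measurability point is spelled out, the induction proceeds routinely, and no structural argument beyond the inductive hypothesis and the product structure of both $\sigma$ and $\PE(\sigma)$ is needed.
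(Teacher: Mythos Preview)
Your proof is correct and rests on the same observation as the paper's: that $\PE_j(\sigma_j)$ is a product measure over the coordinates $(h,\tau',s')$, so the coordinate governing $a_{j,h}$ is independent of those determining $\tau_{h-1}$. The paper organizes this as a one-shot computation rather than an induction---it fixes a full trajectory $\tau$, writes its probability under each policy, and notes that marginalizing the product measure \eqref{eq:emb-def} over the unvisited coordinates collapses the sum over consistent $\pi_j$ to $\prod_h \sigma_{j,h}(a_{j,h}\mid\tau_{j,h-1},s_h)$---but the content is the same.
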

\noah{maybe Mention that this corresponds to strategic form strategy equivalence in EFG literature}  The proof of Fact \ref{fac:gen-decompose} is provided in Section \ref{sec:equivalence-proofs}. 
Next, we show that the mapping $\PE$ is surjective in the following sense:
\begin{fact}[Right inverse of $\PE_j$]
  \label{fac:emb-inverse}
  There is a mapping $\PF : \Pirnd \ra \Pirndrnd$ so that for any Markov game $\MG$ and %
   any $\til\sigma \in \Pirnd$, the distribution of a trajectory drawn from $\til\sigma$ is identical to the distribution of a trajectory drawn from $\PE \circ \PF(\til\sigma)$. 
\end{fact}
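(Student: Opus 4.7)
\textbf{Proof plan for Fact \ref{fac:emb-inverse}.}

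The plan is to define $\PF$ via conditioning. Concretely, for each player $j \in [m]$, each horizon step $h \in [H]$, and each history $(\tau_{j,h-1}, s_h) \in \CH_{j,h-1} \times \MS$, let
\[
C_j(\tau_{j,h-1}) \ldef \crl*{\pi_j \in \Pidet_j : \pi_{j,g}(\tau_{j,g-1}, s_g) = a_{j,g} \ \forall g < h}
\]
be the set of deterministic policies for player $j$ that are \emph{consistent} with the partial trajectory; note that this set depends only on the portion of the history visible to player $j$. Given $\til\sigma_j \in \Pirnd_j = \Delta(\Pidet_j)$, define
\[
\PF_j(\til\sigma_j)(a \mid \tau_{j,h-1}, s_h) \ldef \frac{\Pr_{\pi_j \sim \til\sigma_j}\brk*{\pi_j \in C_j(\tau_{j,h-1}) \ \wedge \ \pi_{j,h}(\tau_{j,h-1}, s_h) = a}}{\Pr_{\pi_j \sim \til\sigma_j}\brk*{\pi_j \in C_j(\tau_{j,h-1})}}
\]
whenever the denominator is positive, and set the distribution arbitrarily (say, uniform on $\MA_j$) on the measure-zero set of histories where the denominator vanishes. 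Finally, set $\PF(\til\sigma) \ldef (\PF_1(\til\sigma_1), \ldots, \PF_m(\til\sigma_m)) \in \Pirndrnd$. This definition is manifestly game-independent.

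Next, I would show that for any Markov game $\MG$, trajectories drawn from $\til\sigma \in \Pirnd$ and from $\PF(\til\sigma) \in \Pirndrnd$ are identically distributed. Fix any trajectory $\tau = (s_1, \ba_1, \br_1, \ldots, s_H, \ba_H, \br_H)$. Its probability under either policy factors into the initial state mass $\mu(s_1)$, the transition kernels $\prod_h \bbP_h(s_{h+1}\mid s_h, \ba_h)$ (which match trivially), and a product of action probabilities. By independence across players, the action contribution under $\til\sigma$ factors as $\prod_{j=1}^m \Pr_{\pi_j \sim \til\sigma_j}\brk*{\pi_j \in C_j(\tau_{j,H})}$, where we abuse notation to include step $H$. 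The analogous quantity under $\PF(\til\sigma)$ factors as $\prod_{j=1}^m \prod_{h=1}^H \PF_j(\til\sigma_j)(a_{j,h} \mid \tau_{j,h-1}, s_h)$. For each $j$, the telescoping identity
\[
\prod_{h=1}^H \PF_j(\til\sigma_j)(a_{j,h} \mid \tau_{j,h-1}, s_h) = \prod_{h=1}^H \frac{\Pr_{\pi_j \sim \til\sigma_j}\brk*{\pi_j \in C_j(\tau_{j,h})}}{\Pr_{\pi_j \sim \til\sigma_j}\brk*{\pi_j \in C_j(\tau_{j,h-1})}} = \Pr_{\pi_j \sim \til\sigma_j}\brk*{\pi_j \in C_j(\tau_{j,H})}
\]
follows from the chain rule applied to the definition of $\PF_j$, which gives the desired equality. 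Trajectories with zero probability under $\til\sigma$ also have zero probability under $\PF(\til\sigma)$ (since the telescoping forces some factor in the product to vanish), so the convention on undefined conditionals is harmless.

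Finally, apply Fact \ref{fac:gen-decompose} to the policy $\PF(\til\sigma) \in \Pirndrnd$: the trajectories drawn from $\PF(\til\sigma)$ are identically distributed to those drawn from $\PE(\PF(\til\sigma)) = \PE \circ \PF(\til\sigma) \in \Pirnd$. Composing this with the equality established in the previous paragraph yields the claim.

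The main subtlety, rather than obstacle, is handling the fact that $\PF_j(\til\sigma_j)$ must produce a valid distribution even on zero-probability histories; the telescoping argument above shows that such histories contribute zero to trajectory probabilities on both sides, so any arbitrary extension suffices. A second minor point worth verifying carefully is that $C_j(\tau_{j,h-1})$ genuinely depends only on player $j$'s local history (not on other players' actions), which is crucial to ensure $\PF_j(\til\sigma_j)$ lies in $\Pirndrnd_j$ as defined in Section \ref{sec:polval-prelim}; this follows from the fact that deterministic policies in $\Pidet_j$ take player $j$'s history as input.
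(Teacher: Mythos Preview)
Your proposal is correct and follows essentially the same approach as the paper's proof: define $\PF_j(\til\sigma_j)$ via conditional probabilities on the consistency event, use the telescoping product to show that $\PF(\til\sigma)$ and $\til\sigma$ induce the same trajectory distribution, and invoke Fact \ref{fac:gen-decompose} to pass to $\PE\circ\PF(\til\sigma)$. The paper's construction and telescoping computation are identical to yours, differing only in notation (it writes out the consistency sets inline rather than naming them $C_j$) and in the arbitrary choice on zero-probability histories (it fixes a single action rather than the uniform distribution).
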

We will write $\PF((\til\sigma_1, \ldots, \til\sigma_m)) := (\PF_1(\til\sigma_1), \ldots, \PF_m(\til\sigma_m))$. Fact \ref{fac:emb-inverse} states that the policy $\PF(\til \sigma)$ maps, under $\PE$, to a policy in $\Pirnd$ which is equivalent to $\til\sigma$ (in the sense that their trajectories are identically distributed for any Markov game). 

An important consequence of Fact \ref{fac:gen-decompose} is that the expected reward (i.e., value) under any $ \sigma \in \Pirndrnd$ is the same as that of $\PE(\sigma)$. Thus given a Markov game, the induced normal-form game in which the players' pure action sets are $\Pirndrnd_1, \ldots, \Pirndrnd_m$ is equivalent to the normal-form game in which the players' pure action sets are $\Pidet_1, \ldots, \Pidet_m$, in the following sense: for any mixed strategy in the former, namely a product distributional policy $\distp \in \Delta(\Pirndrnd_1) \times \cdots \times \Delta(\Pirndrnd_m)$, the policy $\E_{\sigma \sim \distp}[\PE(\sigma)] \in \Delta(\Pidet_1) \times \cdots \times \Delta(\Pidet_m) = \Pirnd$ is a mixed strategy in the latter which gives each player the same value as under $\distp$. (Note that $\E_{\sigma \sim \distp}[\PE(\sigma)]$ is indeed a product distribution since $\distp$ is a product distribution and $\PE$ factors into individual coordiantes.) Furthermore, by Fact \ref{fac:emb-inverse}, any distributional policy in $\Pirnd$ arises in this manner, for some $\distp \in \Delta(\Pirndrnd_1) \times \cdots \times \Delta(\Pirndrnd_m)$; in fact, $\distp$ may be chosen to place all its mass on a single $\sigma \in \Pirndrnd_1 \times \Pirndrnd_m$. Since $\PE$ factors into individual coordinates, it follows that $\PE$ yields a one-to-one mapping between the coarse correlated equilibria (or any other notion of equilibria, e.g., Nash equilibria or correlated equilibria) of these two normal-form games. 
\noah{this is a bit handwavy but it is annoying to write down the defns of the normal-form games in detail and it's not really necessary for the paper}\noah{probably want to say that if $\distp$ is product then so is $\E_{\sigma \sim \distp}[\PE(\sigma)]$.}

\subsection{Proofs of the equivalence}
\label{sec:equivalence-proofs}
\begin{proof}[Proof of Fact \ref{fac:gen-decompose}]
  Consider any trajectory $\tau = (s_1, \ba_1, \br_1, \ldots, s_H, \ba_H, \br_H)$ consisting of a sequence of $H$ states and actions and rewards for each of the $m$ agents. Assume that $r_{i,h} = R_{i,h}(s, \ba_h)$ for all $i, h$ (as otherwise $\tau$ has probability 0 under any policy). Write:
  \begin{align}
p_\tau := \prod_{h=1}^{H-1} \BP_h(s_{h+1} | s_h,\ba_h)\nonumber.
  \end{align}
  Then the probability of observing $\tau$ under $ \sigma$ is
  \begin{align}
    p_\tau \cdot \prod_{h=1}^{H-1} \prod_{j=1}^m  \sigma_{j,h}(a_{j,h} | \tau_{j,h-1}, s_h)\label{eq:prob-tilsigma}
  \end{align}
  where, per usual, $\tau_{j,h-1} = (s_1, a_{j,1}, r_{j,1}, \ldots, s_{h-1}, a_{j,h-1}, r_{j,h-1})$. Write $\sigma =(\sigma_1, \ldots, \sigma_m)= \PE(\sigma)$. The probability of observing $\tau$ under $\sigma$ is
  \begin{align}
p_\tau \cdot \prod_{j \in [m]} \sum_{\pi_j \in \Pidet_j :\ \forall h,\ \pi(\tau_{j,h-1}, s_h) = a_{j,h}} \sigma_j(\pi_j)\label{eq:prob-sigma}
  \end{align}
  It is now straightforward to see from the definition of $\sigma_j(\pi_j)$ in (\ref{eq:emb-def}) that the quantities in (\ref{eq:prob-tilsigma}) and (\ref{eq:prob-sigma}) are equal.
\end{proof}

\begin{proof}[Proof of Fact \ref{fac:emb-inverse}]
  Fix a policy $\til\sigma_j \in \Pirnd_j = \Delta(\Pidet_j)$. We define $\PF_j(\til\sigma_j)$ to be the policy $ \sigma_j \in \Pirndrnd_j$, which is defined as follows: for $\tau_{j,h-1} = (s_{j,1}, a_{j,1}, r_{j,1}, \ldots, s_{j,h-1}, a_{j,h-1}, r_{j,h-1}) \in \CH_{j,h-1}$, $s_h \in \MS$, we have, for $a_{j,h} \in \MA_j$, 
  \begin{align}
 \sigma_j(\tau_{j,h-1}, s_h)(a_{j,h}) = \frac{\til\sigma_j \left( \{ \pi_j \in \Pidet_j \ : \ \pi_j(\tau_{j,g}, s_g) = a_{j,g} \ \forall g \leq h \} \right)}{\til\sigma_j \left( \{ \pi_j \in \Pidet_j \ : \ \pi_j(\tau_{j,g}, s_g) = a_{j,g} \ \forall g \leq h-1 \} \right)}\nonumber.
  \end{align}
  If the denominator of the above expression is 0, then $ \sigma_j(\tau_{j,h-1}, s_h)$ is defined to be an arbitrary distribution on $\Delta(\MA_j)$. (For concreteness, let us say that it puts all its mass on a fixed action in $\MA_j$.) Furthermore, for $\til \sigma \in \Pirnd$, define $\PF(\til \sigma) := (\PF_1(\til \sigma_1), \ldots, \PF_m(\til \sigma_m)) \in \Pirndrnd$. 

  Next, fix any $\til\sigma = (\til\sigma_1, \ldots, \til\sigma_m) \in \Pirnd_1 \times \cdots \times \Pirnd_m$.
  Let $\sigma = \PF(\til\sigma)$. By Fact \ref{fac:gen-decompose}, it suffices to show that the distribution of trajectories under $\sigma$ is the same as the distribution of trajectories drawn from $ \sigma$.

   So consider any trajectory $\tau = (s_1, \ba_1, \br_1, \ldots, s_H, \ba_H, \br_H)$ consisting of a sequence of $H$ states and actions and rewards for each of the $m$ agents. Assume that $r_{i,h} = R_{i,h}(s, \ba_h)$ for all $i, h$ (as otherwise $\tau$ has probability 0 under any policy). Write:
  \begin{align}
p_\tau := \prod_{h=1}^{H-1} \BP_h(s_{h+1} | s_h,\ba_h)\nonumber.
  \end{align}
  Then the probability of observing $\tau$ under $ \sigma$ is
  \begin{align}
    &    p_\tau \cdot \prod_{h=1}^{H} \prod_{j=1}^m  \sigma_{j,h}(a_{j,h} | \tau_{j,h-1}, s_h)\nonumber\\
    =& p_\tau \cdot \prod_{j=1}^m  \prod_{h=1}^{H} \frac{\til\sigma_j \left( \{ \pi_j \in \Pidet_j \ : \ \pi_j(\tau_{j,g}, s_g) = a_{j,g} \ \forall g \leq h \} \right)}{\til\sigma_j \left( \{ \pi_j \in \Pidet_j \ : \ \pi_j(\tau_{j,g}, s_g) = a_{j,g} \ \forall g \leq h-1 \} \right)}\nonumber\\
    =& p_\tau \cdot \prod_{j=1}^m \til\sigma_j \left( \{ \pi_j \in \Pidet_j \ : \ \pi_j(\tau_{j,g}, s_g) = a_{j,g} \ \forall g \leq H \} \right)\nonumber,
  \end{align}
  which is equal to the probability of observing $\tau$ under $\til\sigma$. 
\end{proof}

\end{document}